%% file: 00_main.tex
\documentclass{article}

\usepackage[T1]{fontenc}
\usepackage[utf8]{inputenc}
\usepackage{lmodern}

\makeatletter
\def\blfootnote{\gdef\@thefnmark{}\@footnotetext}
\makeatother

\usepackage[bottom]{footmisc}

\usepackage{cancel}
\usepackage{hyperref}
\usepackage{url}
\usepackage{fancyhdr}
\usepackage{microtype}
\usepackage{subcaption}
\usepackage{dirtytalk}
\usepackage{booktabs}
\usepackage{enumitem}
\usepackage{float}
\usepackage{multirow}
\usepackage{multicol}
\usepackage{amsmath}
\usepackage{amssymb}
\usepackage{epigraph}
\usepackage{graphicx}
\usepackage{amsthm}
\usepackage{amsfonts}
\usepackage{mathtools}
\usepackage{fullpage}
\usepackage[nice]{nicefrac}
\usepackage{algorithm}
\usepackage{algorithmic}
\usepackage{xcolor}
\usepackage[numbers]{natbib}
\usepackage{comment}

\usepackage[toc,page,header]{appendix}


\hypersetup{colorlinks,linkcolor=blue}

\usepackage[capitalize,noabbrev,nameinlink]{cleveref}

\definecolor{amaranth}{rgb}{0.9, 0.17, 0.31}
\definecolor{green}{HTML}{549D54}


\makeatletter
\renewcommand\section{\@startsection{section}{1}{\z@}%
                                    {-2.5ex \@plus -1ex \@minus -.2ex} %
                                    {1.5ex \@plus.2ex} %
                                    {\normalfont\Large\bfseries}}
\makeatother

\makeatletter
\renewcommand\subsection{\@startsection{subsection}{2}{\z@}%
                                    {-1.5ex \@plus -1ex \@minus -.2ex} %
                                    {1ex \@plus .2ex} %
                                    {\normalfont\large\bfseries}}
\makeatother

\title{Certification from Examples is Hard for Circuits and Transformers under Minimal Overparametrization}

\date{}

\theoremstyle{plain}

\newtheorem{theorem}{Theorem}[section]
\newtheorem{proposition}{Proposition}[section]

\newtheorem{lemma}{Lemma}[section]

\newtheorem{corollary}{Corollary}[section]
\theoremstyle{definition}

\theoremstyle{plain}
\newtheorem{remark}{Remark}[section]
\newenvironment{proofsketch}
  {\begin{proof}[Proof sketch]}
  {\end{proof}}

\newcommand{\TCzero}{\mathrm{TC}^0}
\newcommand{\tTCzero}{\widetilde{\mathrm{TC}^0}}
\newcommand{\ACzero}{\mathrm{AC}^0}
\newcommand{\NCone}{\mathrm{NC}^1}
\newcommand{\VS}{\mathrm{VS}}
\newcommand{\cert}{\operatorname{cert}}
\newcommand{\err}{\mathrm{err}}
\newcommand{\poly}{\mathrm{poly}}
\newcommand{\N}{\mathbb{N}}

\newcommand{\FO}{\mathrm{FO}}

\author{
    Artur Back de Luca\qquad
    Kimon Fountoulakis\\
    \vspace{-1mm}\\
    \normalsize{University of Waterloo}\\
    \vspace{-3mm}\\
    \normalsize{\texttt{\{\href{mailto:abackdel@uwaterloo.ca}{abackdel}, \href{mailto:kimon.fountoulakis@uwaterloo.ca}{kimon.fountoulakis}\}@uwaterloo.ca}}\\
}

\begin{document}
\maketitle

\def\thefootnote{*}
\def\thefootnote{\arabic{footnote}}

\vspace{-5mm}

\begin{abstract}
As state-of-the-art neural networks are deployed on reasoning and algorithmic tasks, exactness guarantees become increasingly important. However, high average-case accuracy can still mask inconsistent behaviors. This motivates exact certification, which asks for the smallest set of labeled examples needed to certify that a learned hypothesis equals the target. We show that while some hypotheses are easy to certify, even minimal overparametrization can make certification exponentially hard across several hypothesis classes. For threshold circuits of depth $\ge 2$, adding a single extra gate can force certificate sizes exponential in the input dimension. We show an analogous hardness result for log-precision Transformers with only constant architectural overhead.
We also characterize approximate certification, showing that allowing only polynomially many mistakes still requires exponentially large certificates, whereas constant relative-error guarantees can hide exponentially many mistakes. Empirically, we study certification for constructed circuits and trained Transformers for recognizing binary addition. While the constructed circuits instantiate the exponential barrier for certification, the trained Transformer analysis shows that imperfect models can evade detection by large uniformly sampled certificate candidates.
\end{abstract}

\input{01_intro}
\input{02_review}

\input{03_preliminaries}
\input{04_1_hardness}
\input{04_2_circuits}
\input{04_3_transformers}
\input{04_4_approximate}
\input{05_experiments}
\input{06_conclusion}

\bibliography{references}
\bibliographystyle{plainnat}
\newpage
\appendix
\input{990_basic}
\input{991_circuits}
\input{992_transformer}
\input{993_experiments}

\end{document}

%% file: 01_intro.tex
\section{Introduction}
\label{sec:intro}

Frontier models can tackle a wide range of tasks, yet studies have revealed inconsistencies in reasoning and algorithmic settings \citep{nezhurina2024alice,malek2025frontier}. Even when a model achieves high average-case accuracy, such inconsistencies suggest that it may fail to implement the intended reasoning behavior, and these failures are difficult to detect from average-case evaluation alone. 

This motivates a broader exact certification question: after a learner returns a candidate hypothesis, how many labeled input-output examples are needed to certify, from behavior alone, that the candidate equals the intended target? \citet{gyorgy2025beyond} flag this certification problem as a foundational challenge for the exact-learning setting, illustrating the worst-case certification in a bit comparison task. In this work, we give a quantitative form of certification hardness, mainly focusing on two formal hypothesis classes closely related to reasoning and computation.

Because certification uses only labeled input-output data, we formalize the question using a perspective closely related to the teaching set literature \citep{anthony1992exact, shinohara1991teachability,goldman1995complexity}. The key quantity for our purposes is the certificate size, namely the minimum number of labeled examples needed to uniquely identify a target within a hypothesis class. This quantity depends not only on the target but also on the surrounding class. Our main observation is that even a minimal enlargement of the hypothesis class, increasing model capacity by only a constant factor, can make certification exponentially hard for every target in the original class, even when some targets originally admit small certificates.

\paragraph{Contributions.}
Our contributions are as follows.

\begin{enumerate}
    \item We provide a quantitative form of hardness for exact certification by examples. We show that certification can be highly sensitive to the surrounding hypothesis class. Even a constant enlargement can make certification exponentially hard for every target in the original class, despite the presence of targets with small certificates in that class. 
    We instantiate this phenomenon in two settings related to algorithmic computation.
    \begin{enumerate}
        \item In the circuit setting, we study unbounded fan-in classes such as $\TCzero$ and $\ACzero$, together with the bounded fan-in class $\NCone$. 
        Our sharpest result is for $\TCzero$, where circuits of depth $d \ge 2$ with a single additional gate already force certification to require exponentially many examples in the input dimension.

        \item In the Transformer setting, motivated by connections between Transformers and circuit complexity \citep{hao2022formal,merrill2022saturated,merrill2023logic,merrill2024expressive,chiang2025transformers}, we study projected-pre-norm log-precision
        Transformers with averaging hard attention (AHAT) \citep{hao2022formal, merrill2024expressive}. We show that a slight overparametrization with one extra attention head and six auxiliary embedding/residual coordinates already makes certification hard.
        \item We extend the hardness analysis to approximate certification. We show that allowing only polynomially many absolute mistakes still requires exponentially large certificates, while constant relative-error guarantees may still allow exponentially many absolute mistakes.
    \end{enumerate}

    \item We complement the theory with two empirical certification analyses for the task of recognizing binary addition. 
    First, we construct a $\TCzero$ circuit for recognizing addition, together with a collection of incorrect circuits, each agreeing with the target except on a different block of inputs. 
    We then compute how many of these circuits remain consistent with a uniformly sampled certificate candidate. 
    Second, we study Transformers trained to recognize binary addition. 
    After selecting trained models that pass extensive validation checks, we evaluate them on a large held-out test set. We observe that even polynomial-sized certificate candidates can leave multiple non-exact trained hypotheses consistent.
\end{enumerate}

%% file: 02_review.tex
\section{Related work}
\label{sec:related_work}

Our certification setting is most closely related to the classical teaching complexity
literature \citep{anthony1992exact, shinohara1991teachability,goldman1995complexity}. Subsequent work develops a range of teacher-learner protocol variants \citep{zilles2011models,doliwa2014recursive,gao2017preference} and relates teaching parameters to structural properties of the hypothesis
class, such as class complexity \citep{doliwa2014recursive,chen2016recursive,hu2017quadratic}. In the circuit setting, the closest work is \citet{servedio2001limits}, who proves worst-case exponential lower bounds on the size of certificates for polynomial-size monotone circuits using input-output examples alone. The main distinction is that our lower bounds do not isolate a single worst-case target. This distinction is sharpest in our threshold-circuit result in \Cref{thm:unbounded}, which shows that for every target in a smaller threshold-circuit class, adding one same-depth gate already makes exact certification exponentially hard.

Related to certification by examples is exact learning, which often studies
active, query-based interaction between a learner and an oracle, including
membership queries, equivalence queries, and related oracle access \citep{angluin1987learning,angluin1988queries,angluin1990negative,hegedus1995generalized,hellerstein1996queries,chase2020bounds}. A central theme in this literature is that learnability depends both on the interaction model and on the hypothesis class the learner may output. For example, proper and improper learning of DNFs exhibit different learnability behavior \citep{pillaipakkamnatt1996limits,hellerstein2005exactdnf}, while richer oracle access can make circuit learning tractable \citep{bshouty1996oracles}. In our input-output-only certification setting, we isolate the role of the hypothesis class and show that even a small enlargement of the class can make certification hard.

In the context of neural networks, several recent works advocate for exact guarantees, rather than average-case accuracy alone, as a foundation for reliable reasoning and algorithmic behavior \citep{gyorgy2025beyond,de2025learning,papazov2025exact}. In particular, \citet{gyorgy2025beyond} emphasizes both the importance and the challenge of certifying exact correctness. Our work builds on this perspective by quantifying the size of certificates needed for log-precision Transformers. Most theoretical work on log-precision Transformers has focused on expressive power, whose computation is captured by threshold-circuit models \citep{merrill2022saturated,merrill2023logic,merrill2024expressive,merrill2023parallelism,chiang2025transformers}. While these results characterize what such models can compute, our results address the complementary question of how many input-output examples are needed to certify their behavior. We show that certifying log-precision Transformers within a slightly overparametrized class requires exponentially many examples in the input size, even for exact and near-exact certification.

%% file: 03_preliminaries.tex
\section{Preliminaries and notation}
\label{sec:prelim}
In this section, we introduce the notation and background definitions used throughout the paper.
We write $[n]$ to indicate the set $\{1,2,\dots,n\}$. For a vector $x \in \{0,1\}^n$ and an index set $I \subseteq [n]$, we write $x_I$ for the restriction of $x$ to the coordinates indexed by $I$. For a finite set $S$, we let $\{0,1\}^S = \{\pi:S\to\{0,1\}\}$ denote the set of binary patterns on $S$.

\textbf{Circuit complexity and circuit classes.}
We use circuits as finite semantic hypothesis classes because they let us measure how much computational power is added by one or a few gates. Circuits also connect directly to the Transformer result, since log-precision Transformer expressivity is commonly compared with threshold-circuit computation \citep{merrill2022saturated, merrill2024expressive}.

A Boolean circuit on inputs $x_1,\ldots,x_n$ is a directed acyclic graph with
input nodes, internal gates, and one output gate. Its size $|C|$ is the number
of non-input gates, and its depth is the length of the longest input-to-output
path. At polynomial size, the standard containments are
\[
\ACzero \subsetneq \TCzero \subseteq \NCone .
\]
Here $\ACzero$ consists of constant-depth circuits with unbounded-fan-in
AND/OR gates over input literals, $\TCzero$ adds constant-depth threshold gates,
and $\NCone$ allows bounded-fan-in circuits of logarithmic depth. The first
containment is strict, while strictness of the second is open.

For finite gate budgets, $\ACzero_{d,s}$ and $\TCzero_{d,s}$ denote the
$n$-input functions computable by depth-$d$, size-$s$ circuits of the
corresponding type. In $\TCzero_{d,s}$, a threshold gate is defined as
$\mathrm{THR}_{w,\theta}(z)=\mathbf 1[\sum_i w_i z_i\ge \theta]$, with
$w_i,\theta\in\mathbb Z$. For $\TCzero$, the class can also be expressed using Boolean disjunctions/conjunctions and majority gates. All of these gates can be expressed as a threshold gate, but the converse simulation can change the finite gate budget, so
the stated $\TCzero$ bounds should be read under the threshold gate convention. The class
$\NCone_s$ denotes bounded-fan-in Boolean circuits of size at most $s$ and depth
at most $c_{\NCone}\log(n+1)$, for a fixed constant $c_{\NCone}$.

\textbf{Input-output certificates.}
All certificates are over a fixed finite domain $\mathcal X$. Fix a hypothesis class $\mathcal H$ and a target $f^\star\in\mathcal H$. We record a labeled sample by its input set $S\subseteq\mathcal X$ where the corresponding labels attached to $x\in S$ are $f^\star(x)$. The version space after seeing $S$ is
\[
\VS_{f^\star,\mathcal H}(S)
:=
\{h\in\mathcal H:h(x)=f^\star(x)\text{ for every }x\in S\}.
\]
The set $S$ is an input-output certificate for $f^\star$ in $\mathcal H$ if
every surviving hypothesis agrees with $f^\star$ on the full domain:
\[
\forall h\in\VS_{f^\star,\mathcal H}(S),\qquad
h(x)=f^\star(x)\text{ for every }x\in\mathcal X.
\]
The certificate size is
\[
\cert(f^\star,\mathcal H)
:=
\min\Bigl\{
|S|:
S\subseteq\mathcal X
\text{ and } S \text{ is an input-output certificate for } f^\star
\text{ in } \mathcal H
\Bigr\}.
\]
Since $S=\mathcal X$ is a certificate, this minimum is well defined. The
definition is semantic. It certifies the input-output behavior of the target
and does not distinguish hypotheses that behave identically on $\mathcal X$.
This is the usual teaching/specification notion
\citep{shinohara1991teachability,goldman1995complexity,anthony1992exact}.
  
\textbf{Approximate certificates.}
Keep the same finite domain $\mathcal X$, hypothesis class $\mathcal H$, and
target $f^\star$. For $h\in\mathcal H$, define its absolute and normalized
errors relative to $f^\star$ by
\[
\Delta(h):=|\{x\in\mathcal X:h(x)\ne f^\star(x)\}|,
\qquad
\err(h):=\frac{\Delta(h)}{|\mathcal X|}.
\]
For $\rho\in\{\Delta,\err\}$, the worst remaining error after seeing $S$ is
\[
M_\rho(S) := \max_{h\in\VS_{f^\star,\mathcal H}(S)}\rho(h).
\]
For $\varepsilon,R\ge 0$, the approximate certificate sizes are
\[
\cert_\varepsilon(f^\star,\mathcal H) := \min\{|S|:S\subseteq\mathcal X\text{ and }M_{\err}(S)\le\varepsilon\},
\]
and
\[
\cert_R(f^\star,\mathcal H)
:=
\min\{|S|:S\subseteq\mathcal X\text{ and }M_{\Delta}(S)\le R\}.
\]
Thus $\cert_\varepsilon$ certifies that every surviving hypothesis has uniform
error at most $\varepsilon$, while $\cert_R$ certifies that every surviving
hypothesis makes at most $R$ mistakes on $\mathcal X$. Exact certification is
the case $\varepsilon=0$, equivalently $R=0$.

%% file: 04_1_hardness.tex
\section{Certification hardness  from  overparametrization}
\label{sec:hardness}

Throughout this section, we lower-bound the number of labeled examples required to certify a target uniquely within an enlarged hypothesis class. The proof follows a combinatorial strategy to construct many alternative hypotheses around a fixed target so that each labeled example eliminates only a few of them. In our setting, the alternatives are deceivers supported on pairwise disjoint sets.

\subsection{Proof idea: trigger blocks and deceivers}
\label{sec:deceivers}

Fix some target function $f^\star: \{0,1\}^n \to \{0,1\}$ from the hypothesis space. Then, pick a set $I\subseteq [n]$ of $t$ input coordinates, where for each pattern $\pi\in\{0,1\}^I$, we define the trigger block $B_\pi := \{x\in\{0,1\}^n : x_I=\pi\}$, for a total of $2^t$ blocks that are pairwise disjoint and partition the domain.
For each block $B_\pi$, we build one deceiver $g_\pi$ that agrees with $f^\star$
outside $B_\pi$ and changes the output on at least one point of $B_\pi$.
Since the blocks are disjoint, a labeled value can reveal at most one of these changes.
Therefore, any exact certificate must contain at least one point from every block, as formally expressed in the following proposition.

\begin{proposition}[Disjoint-disagreement principle]
\label{prop:block-elim}
Let $\mathcal X$ be a finite domain, let
$\mathcal H\subseteq\{h:\mathcal X\to\{0,1\}\}$, and let
$f^\star\in\mathcal H$. Let $T$ be a finite index set, and suppose that for
each $\pi\in T$ there is a hypothesis $g_\pi\in\mathcal H$. Define
\[
E_\pi:=\{x\in\mathcal X:g_\pi(x)\ne f^\star(x)\}.
\]
If the sets $\{E_\pi:\pi\in T\}$ are nonempty and pairwise disjoint, then $\cert(f^\star,\mathcal H)\ge |T|.$
\end{proposition}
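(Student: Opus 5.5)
The plan is a direct counting argument. Fix an arbitrary input-output certificate $S$ for $f^\star$ in $\mathcal H$. We will show that $S\cap E_\pi\neq\emptyset$ for every $\pi\in T$. Pairwise disjointness of the sets $E_\pi$ then converts this into $|S|\ge|T|$. Since $S$ is arbitrary, we may take $S$ of minimum size, which gives $\cert(f^\star,\mathcal H)\ge|T|$.

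\emph{Step 1 (each disagreement set is hit).} Suppose, for contradiction, that $S\cap E_\pi=\emptyset$ for some $\pi\in T$.
\begin{itemize}
\item By the definition of $E_\pi$, we have $g_\pi(x)=f^\star(x)$ for every $x\in S$.
\item Hence $g_\pi\in\VS_{f^\star,\mathcal H}(S)$, using the hypothesis that $g_\pi\in\mathcal H$.
\item Because $E_\pi$ is nonempty, there is a point $x_0\in E_\pi$ with $g_\pi(x_0)\ne f^\star(x_0)$.
\end{itemize}
So a surviving hypothesis disagrees with $f^\star$ somewhere on $\mathcal X$. This contradicts the assumption that $S$ is a certificate.

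\emph{Step 2 (counting).} For each $\pi\in T$, choose a point $x_\pi\in S\cap E_\pi$. The map $\pi\mapsto x_\pi$ is injective: if $x_\pi=x_{\pi'}$ with $\pi\ne\pi'$, that point would lie in $E_\pi\cap E_{\pi'}=\emptyset$. An injection $T\to S$ gives $|S|\ge|T|$. Equivalently, $|S|\ge\sum_{\pi\in T}|S\cap E_\pi|\ge|T|$, where the first inequality uses disjointness.

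There is no real obstacle here; the argument is essentially definitional. The only points needing care are the two places where the hypotheses enter. Nonemptiness of $E_\pi$ is what makes each $g_\pi$ a genuine deceiver, since a $g_\pi$ with $E_\pi=\emptyset$ would be semantically identical to $f^\star$ and would not need to be eliminated. Disjointness is what prevents a single example from eliminating two deceivers at once.
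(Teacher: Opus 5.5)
Your proof is correct and follows the paper's argument. The paper also notes that $g_\pi$ survives any sample missing $E_\pi$, and uses disjointness to say each labeled example eliminates at most one deceiver. The only difference is phrasing: the paper uses the pigeonhole contrapositive ($|S|<|T|$ forces some $S\cap E_\pi=\emptyset$), while you use the direct injection $\pi\mapsto x_\pi$.
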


While the full proof appears in \Cref{app:disjoint-block}, the key point is that each labeled example can eliminate only the deceivers that disagree with $f^\star$ at that point, so disjoint disagreement regions force any certifying sample to hit every block.

In the following sections, we instantiate this principle for circuit classes and Transformers by showing that the enlarged hypothesis class contains exponentially many deceivers in the input dimension.

%% file: 04_2_circuits.tex
\subsection{Circuit classes}
\label{sec:circuits}

We instantiate the deceiver mechanism for circuits, focusing on threshold circuits where the construction is sharpest. Starting from threshold circuits with $s$ gates, overparametrizing them with $s+1$ gates of the same depth suffices to override the output on a chosen block while preserving depth. We also state the corresponding results for the other circuit classes, deferring the details to \Cref{app:circuit-proofs}.

\begin{figure}[t]
    \centering
    \includegraphics[width=0.75\linewidth]{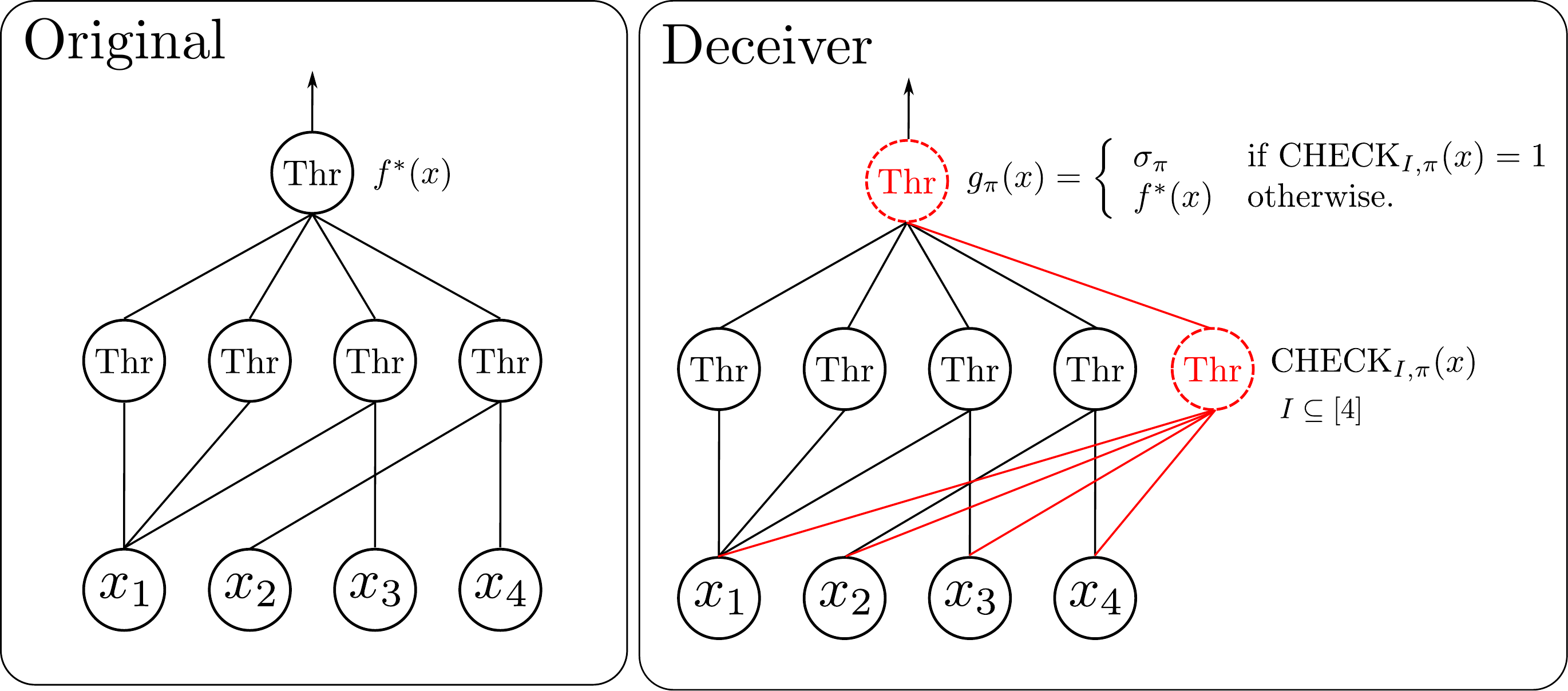}
    \caption{Same-depth deceiver construction in $\TCzero$. Left: the original threshold circuit computing $f^\star$. Right: one additional threshold gate $\mathrm{CHECK}_{I,\pi}$ detects the trigger block and feeds the top threshold, which forces the output on that block while leaving the circuit unchanged elsewhere.}
    \label{fig:deceiver-circuit}
\end{figure}

\begin{theorem}[One-gate certification hardness in $\TCzero$]
\label{thm:unbounded}
\label{thm:tc0-nonvacuous}
Fix a constant depth $d\ge 2$. For every $n$, every gate budget $s$, every $f^\star\in\TCzero_{d,s}$, and every $1\le t\le n$,
\[
\cert\bigl(f^\star,\TCzero_{d,s+1}\bigr)\ge 2^t,
\]
consequently, for any trigger length $t=t(n)$ satisfying
$c n\le t(n)\le n$ for some constant $c>0$, the lower bound is $2^{\Omega(n)}$.
Moreover, if $s(n)=n^{O(1)}$, then for all sufficiently large $n$ there exists $f_n\in\TCzero_{d,s(n)}$ such that
\[
\cert\bigl(f_n,\TCzero_{d,s(n)}\bigr)\le n^{O(1)}
\quad\text{while}\quad
\cert\bigl(f_n,\TCzero_{d,s(n)+1}\bigr)\ge 2^{\Omega(n)}.
\]
\end{theorem}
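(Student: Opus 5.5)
The first inequality follows from the disjoint-disagreement principle (\Cref{prop:block-elim}) with $T=\{0,1\}^I$ for an arbitrary $I\subseteq[n]$ with $|I|=t$. Fix a depth-$d$, size-$s$ threshold circuit $C$ computing $f^\star$, and let its output gate be $\mathrm{THR}_{w,\theta}(z)$.

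For each $\pi\in\{0,1\}^I$, I would add one new gate
\[
\mathrm{CHECK}_{I,\pi}(x)=\mathbf 1\Bigl[\textstyle\sum_{i\in I,\pi_i=1}x_i-\sum_{i\in I,\pi_i=0}x_i\ge |\{i\in I:\pi_i=1\}|\Bigr],
\]
which equals $1$ exactly on $B_\pi$ and reads only inputs, so it sits at depth $1$. I then feed it into the output gate with weight $\pm W$, where $W:=\sum_i|w_i|+|\theta|+1$.

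The sign of that weight is chosen per block:
\begin{itemize}
\item If $f^\star$ is not identically $1$ on $B_\pi$, use weight $+W$. The output is then forced to $1$ on $B_\pi$.
\item Otherwise, use weight $-W$. The output is then forced to $0$ on $B_\pi$.
\end{itemize}
Outside $B_\pi$ the new input contributes $0$, so the circuit is unchanged there. The resulting $g_\pi$ therefore has a disagreement set $E_\pi$ that is nonempty and contained in $B_\pi$. Since the blocks partition the cube, the sets $E_\pi$ are pairwise disjoint.

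The modified circuit has $s+1$ gates. Its depth is $\max(d,2)=d$, which is where $d\ge 2$ is used: the new path input $\to$ CHECK $\to$ output has length $2$. One degenerate case needs handling: if the output of $C$ is an input literal rather than a gate (e.g.\ $s=0$), I would instead take $g_\pi$ to be a single threshold gate computing $f^\star$'s literal OR/AND-forced on $B_\pi$, still with at most $s+1$ gates. Then \Cref{prop:block-elim} gives $\cert\ge 2^t$, and taking $t\ge cn$ gives $2^{\Omega(n)}$.

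For the separation, I would not construct $f_n$ explicitly. Instead I would use a generic halving argument: in any finite class $\mathcal H$ of distinct functions, some $h$ has $\cert(h,\mathcal H)\le\log_2|\mathcal H|$. To find it, maintain a version space $V$ (initially $\mathcal H$). While $|V|\ge 2$, pick a point $x$ on which members of $V$ disagree, and restrict $V$ to the minority label at $x$; this at least halves $|V|$. After at most $\log_2|\mathcal H|$ steps a single function $h$ remains. The chosen points, labeled by $h$, form a certificate for $h$, because the labels coincide with the minority labels at every step.

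It then remains to bound $\log_2|\TCzero_{d,s(n)}|$ by a polynomial. Each threshold gate has at most $m\le n+s$ inputs, and the number of distinct threshold functions on $m$ Boolean inputs is at most $2^{m^2}$. The wiring choices number at most $2^{O(s(n+s)\log)}$. Together this gives $\log_2|\mathcal H|\le s(n+s)^2+O(s(n+s))=n^{O(1)}$.

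Taking $f_n$ to be the hypothesis produced by the halving argument, the first part with $t=n$ gives $\cert(f_n,\TCzero_{d,s(n)+1})\ge 2^n$. I expect this counting step, which needs the classical bound on the number of threshold functions, to be the main technical point. The gadget construction is routine once the weight $W$ is chosen large enough to dominate all other contributions to the output gate.
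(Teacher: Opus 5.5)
Your proposal is correct and follows essentially the same route as the paper. It uses the same $\mathrm{CHECK}_{I,\pi}$ gate fed into the output gate with weight $\pm(\sum_j|w_j|+|\theta|+1)$, the disjoint-disagreement principle, and the halving lemma combined with the count $T(n+s)^s\le 2^{O(s(n+s)^2)}$.

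The differences are minor:
\begin{itemize}
\item Your sign rule (force $1$ unless $f^\star\equiv 1$ on $B_\pi$) guarantees only a nonempty disagreement set. The paper requires disagreement on half the block, which it reuses for approximate certification.
\item You take $t=n$ where the paper takes $\lfloor n/2\rfloor$.
\item Your extra wiring factor in the count is unnecessary, since zero weights already encode wiring.
\item You additionally treat the degenerate $s=0$ case.
\end{itemize}
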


\begin{proofsketch}
Fix a size-$s$, depth-$d$ circuit and $f^\star\in\TCzero_{d,s}$. We build a deceiver circuit using one additional gate. The construction consists of two mechanisms: a trigger detector and an output-overwrite mechanism. For the first, choose a trigger set $I\subseteq[n]$ with $|I|=t$, and for each pattern $\pi\in\{0,1\}^I$, let $B_\pi=\{x\in\{0,1\}^n:x_I=\pi\}$ be the corresponding trigger block.

We use the extra threshold gate, denoted $\mathrm{CHECK}_{I,\pi}$, to detect membership in $B_\pi$ by thresholding the signed literals that encode the equalities $x_i=\pi_i$. Once this gate is activated, we overwrite the output of $f^\star$. Feeding the output of the new gate into the top gate with a sufficiently large positive or negative weight forces the circuit to output a chosen constant on $B_\pi$, while leaving the original circuit unchanged when $\mathrm{CHECK}_{I,\pi}=0$. This is exactly the same-depth, one-gate override shown in \Cref{fig:deceiver-circuit}.

For each trigger block, choose a constant label $\sigma_\pi\in\{0,1\}$ that disagrees with $f^\star$ on at least half of $B_\pi$, and let $g_\pi\in\TCzero_{d,s+1}$ be the corresponding override circuit. Then $g_\pi$ agrees with $f^\star$ outside $B_\pi$ and disagrees with it on a nonempty subset of $B_\pi$. Since the trigger blocks are pairwise disjoint, the disjoint-block principle from \Cref{sec:deceivers} gives $\cert(f^\star,\TCzero_{d,s+1})\ge 2^t$, and taking $t=\Omega(n)$ yields an exponential dependence on the input size.

For the second part of the argument, we use the finiteness of the circuit class to prove the non-vacuity claim. A size-$s$ threshold circuit has at most $s$ gates, and each gate computes one of at most $2^{O((n+s)^2)}$ threshold functions of the previously available Boolean signals. Hence $|\TCzero_{d,s}|\le 2^{O(s(n+s)^2)}$. Every finite class contains some hypothesis with a certificate of size logarithmic in the class cardinality, by repeatedly keeping the smaller side of a split until one hypothesis remains. Therefore, when $s(n)=n^{O(1)}$, some $f_n\in\TCzero_{d,s(n)}$ satisfies $\cert(f_n,\TCzero_{d,s(n)})\le n^{O(1)}$. Applying the target-wise lower bound above to this same $f_n$, with a linear-size trigger set, gives $\cert(f_n,\TCzero_{d,s(n)+1})\ge 2^{\Omega(n)}$.
\end{proofsketch}

\textbf{Certification hardness in other circuit classes:} The deceiver construction has two parts: detecting a trigger block and overriding the output on that block without changing the computation elsewhere. While in $\TCzero$, both have constant cost, the deceiver constructions for $\ACzero$ and $\NCone$, which are described in \Cref{app:circuit-proofs}, require extra components. In particular, for $\ACzero$, trigger detection still costs one gate, but the override function requires an additional combining gate and typically one extra depth level. Thus targets in $\ACzero_{d,s}$ satisfy the same $2^t$ certification bounds inside $\ACzero_{d+1,s+2}$. In bounded fan-in circuits such as $\NCone$, exact trigger detection costs $\Theta(t)$ extra gates and $\Theta(\log t)$ extra depth, so extra $k$ gates give lower bounds of $2^{\Omega(k)}$.

%% file: 04_3_transformers.tex
\subsection{Transformers}
\label{sec:transformers}
The circuit computations described so far also underlie many reasoning tasks studied in Transformers. Beyond this task connection, there is a stronger relationship between Transformers and circuits: existing expressivity results show that log-precision Transformer computations can be simulated by threshold circuits \citep{merrill2022saturated,merrill2023logic,merrill2024expressive,chiang2025transformers}.
However, these results are one-way, as they do not imply that arbitrary threshold-circuit computations can be realized by Transformers. We therefore prove the certification barrier directly inside a Transformer model.

For the result below, we use a standard theoretical abstraction of Transformers with $p=O(\log n)$ precision \citep{merrill2023parallelism}, average hard attention \citep{hao2022formal}, and projected pre-norm components \citep{merrill2024expressive}. This model is idealized relative to deployed Transformer implementations, but it keeps the architectural features relevant to our argument while allowing an exact finite-precision analysis. The formal definition of the class and the block-override construction are given in \Cref{app:transformers}.

For a fixed choice of heads, layers, and dimensions at input length $n$, let $\mathsf A_n$ denote the resulting architecture, and let $\mathcal T_{\mathsf A_n,p}(n)$ be the class of Boolean functions on $\{0,1\}^n$ computable by that architecture with variable $p$-precision parameters and embeddings. The following theorem shows that a slight overparametrization of $\mathsf A_n$ can make certification exponentially harder.

\begin{theorem}[Transformer certification hardness under overparametrization]
\label{thm:transformer-certification}
There is an absolute precision constant $p_0$ such that the following holds. Fix an input length $n$, an AHAT architecture $\mathsf A_n$, and a precision $p\ge p_0$. Let $1\le t\le n$, and let $\mathsf A_n^+$ be the architecture obtained from $\mathsf A_n$ by adding at least one attention head in the final layer and at least six residual/embedding coordinates. Then every target computed by the original architecture is hard to certify in the enlarged one:
\[
\forall f^\star\in\mathcal T_{\mathsf A_n,p}(n),\qquad
\cert\bigl(f^\star,\mathcal T_{\mathsf A_n^+,p}(n)\bigr)\ge 2^t .
\]
Moreover, this lower bound is not vacuous. If $\{\mathsf A_n\}$ is a projected-pre-norm AHAT family with $n^{O(1)}$ finite-precision scalar parameters, including embeddings, and $p(n)=O(\log n)$, assume there exists $n_0$ such that $p(n)\ge p_0$ for all $n\ge n_0$. Then for every $n\ge n_0$ there is a function
$f_n\in\mathcal T_{\mathsf A_n,p(n)}(n)$ such that
\[
\cert\bigl(f_n,\mathcal T_{\mathsf A_n,p(n)}(n)\bigr)\le n^{O(1)}
\quad\text{while, taking any } t=\Omega(n)
,\quad
\cert\bigl(f_n,\mathcal T_{\mathsf A_n^+,p(n)}(n)\bigr)\ge 2^{\Omega(n)}.
\]
Thus the exponential lower bound is not merely inherited from a function that was already hard to certify in the smaller Transformer class.
\end{theorem}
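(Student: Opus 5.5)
We follow the same two-part template as the proof of \Cref{thm:unbounded}. The first part is a target-wise lower bound obtained from the disjoint-disagreement principle (\Cref{prop:block-elim}). The second part is a non-vacuity argument based on counting the finite-precision hypothesis class.

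For the lower bound, fix $f^\star\in\mathcal T_{\mathsf A_n,p}(n)$ computed by some parameter setting $\theta$ of $\mathsf A_n$. Fix a trigger set $I\subseteq[n]$ with $|I|=t$, and let $B_\pi=\{x:x_I=\pi\}$ for each $\pi\in\{0,1\}^I$. For each $\pi$ we build parameters $\theta_\pi^+$ for $\mathsf A_n^+$ as follows.
\begin{itemize}
\item Copy $\theta$ on the original coordinates and heads, and zero out every interaction between the old and new components. All old sublayers then read and write only the old coordinates, so the original residual stream is reproduced exactly. The projected pre-norm convention is what lets normalization act on the old block separately, unaffected by the new coordinates.
\item In the new coordinates, the embedding writes at position $i$ three quantities: an indicator that $i\in I$, a signed match literal $\mathbf 1[x_i=\pi_i]$ (both depend on $i$, $x_i$ and $\pi$), and a constant-one bias.
\item The extra final-layer head uses constant scores restricted to positions in $I$, so averaging hard attention returns the average of the match literals over $I$. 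This average equals $1$ if and only if $x\in B_\pi$.
\item The head writes this average into a new coordinate. A shifted copy through the bias coordinate turns it into a sharp gate: on $B_\pi$ the value is $1$, and off $B_\pi$ it is at most $1-1/t$.
\item The output readout adds a large multiple of (gate minus threshold) to the original logit. This forces the constant label $\sigma_\pi$ on $B_\pi$ and leaves the sign unchanged elsewhere.
\end{itemize}
The six auxiliary coordinates are meant to hold the membership indicator, the match literal, the bias, the head output, and scratch space for the thresholding and override. Choosing $\sigma_\pi$ to disagree with $f^\star$ somewhere on $B_\pi$ gives nonempty, pairwise disjoint disagreement sets $E_\pi\subseteq B_\pi$, and \Cref{prop:block-elim} yields the bound $2^t$.

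The main obstacle is precision. With $p=O(\log n)$ bits we must check three things:
\begin{itemize}
\item the gap of $1/t\ge 1/n$ in the averaged score is representable after rounding;
\item the override weight, which is polynomial in $n$ and in the bound on the original logit, fits within the precision;
\item ties or rounding in the final sign comparison do not flip the output off the block.
\end{itemize}
The precision constant $p_0$ is there to absorb these constant-size arithmetic steps. If the original logit can be arbitrarily large relative to $2^p$, I would instead place the override before the output nonlinearity, or reinterpret the readout as a saturating comparison. Making this robust is where I expect most of the detailed checking to go.

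For non-vacuity, $\mathsf A_n$ has $n^{O(1)}$ scalar parameters, each with $p(n)=O(\log n)$ bits. Hence
\[
|\mathcal T_{\mathsf A_n,p(n)}(n)|\le 2^{n^{O(1)}\cdot O(\log n)}.
\]
The halving argument used for \Cref{thm:unbounded} then gives some $f_n$ with $\cert(f_n,\mathcal T_{\mathsf A_n,p(n)}(n))\le\log_2|\mathcal T_{\mathsf A_n,p(n)}(n)|=n^{O(1)}$. Applying the lower bound to this same $f_n$ with $t=\Omega(n)$, valid for $n\ge n_0$ where $p(n)\ge p_0$, completes the proof.
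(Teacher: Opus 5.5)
\textbf{Gap in the trigger gadget.} The problem is exactly the first item on your precision checklist. Averaging the match literals over $I$ gives $k/t$, where $k$ is the number of matches. Your threshold and your override both need the head to separate $k=t$ from $k=t-1$, that is, to resolve a gap of $1/t$. That needs about $\log_2 t$ bits. $p$-bit arithmetic does not provide them once $t\gg 2^p$: the counts $t-1$ and $t$, or the ratios $(t-1)/t$ and $1$, become the same element of $D_p$, so the gate fires on inputs with a single mismatch.

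This is not a constant-size step that an absolute $p_0$ can absorb. The theorem fixes $p_0$ once, then allows every $p\ge p_0$ and every $t\le n$, at that same precision, inside $\mathcal T_{\mathsf A_n^+,p}(n)$. In the non-vacuity part, $p(n)$ is only bounded above by $O(\log n)$ (it may equal $p_0$), while $t=\Omega(n)$. When the gate misfires, $g_\pi$ disagrees with $f^\star$ outside $B_\pi$. The disagreement sets are then no longer disjoint, and \Cref{prop:block-elim} does not apply.

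The missing idea is to use the hard max of AHAT instead of the average. Give mismatching trigger positions a strictly higher score than matching ones, and matching ones a higher score than positions outside $I$. The paper does this with keys equal to the mean-zero, unit-norm codes $\alpha,\beta,\gamma$ (outside $I$, mismatch, match) and the constant query $q^\dagger=(2,-1,1,0)$. These codes get scores $0,2,1$ and values $0,0,1$ respectively. If any mismatch exists, the head averages only values $0$; otherwise it averages the value $1$ over all of $I$. Either way it averages identical values, so it outputs exactly $\mathbf 1[x\in B_\pi]$ at every $p\ge p_0$. No threshold, no $1/t$ gap, and no $n$-dependent weight are needed.

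\textbf{Readout and normalization.} As written, you add a large multiple of (gate minus threshold). That term is also large, and of fixed sign, off $B_\pi$, so it overwrites $f^\star$ outside the block as well. The override term must vanish identically off $B_\pi$, which requires an exactly Boolean trigger: either a ReLU of the gate, or the exact bit above.

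Even with a Boolean bit $z$, the inherited logit can be as large as $F_p^{\max}$, so no representable weight dominates it. You must rely on saturation. A single saturating addition of $-F_p^{\max}$ can return exactly $0$, which the $\ge 0$ classifier accepts. The paper handles this by writing $z$ into two coordinates and adding $(2\sigma-1)F_p^{\max}z$ twice, in a fixed order.

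The new head also reads a layer-normalized projection, so the vectors it sees must be fixed points of $\mathrm{layer\_norm}$. Your (indicator, literal, bias) coordinates can be projected affinely onto such codes, but this has to be arranged explicitly.

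Your non-vacuity argument matches the paper's: count the $2^{n^{O(1)}\cdot O(\log n)}$ parameter settings and apply halving.
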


\begin{proofsketch}
Fix $f^\star\in\mathcal T_{\mathsf A_n,p}(n)$ and choose a $p$-precision Transformer $T^\star$ of architecture $\mathsf A_n$ computing it.
Pick a trigger set $I\subseteq[n]$ with $|I|=t$. For each pattern $\pi\in\{0,1\}^I$, define the block
\[
B_\pi:=\{x\in\{0,1\}^n:x_I=\pi\}.
\]
Choose a label $\sigma_\pi\in\{0,1\}$ that disagrees with $f^\star$ on at least half of $B_\pi$. By the block-override construction in \Cref{app:thm:transformer-deceiver}, for every
$\pi\in\{0,1\}^I$ the enlarged architecture contains a same-precision hypothesis
\[
g_\pi\in\mathcal T_{\mathsf A_n^+,p}(n)
\]
such that
\[
g_\pi(x)=f^\star(x)\quad\text{for all }x\notin B_\pi,
\qquad
g_\pi(x)=\sigma_\pi\quad\text{for all }x\in B_\pi.
\]
Hence the disagreement set $E_\pi:=\{x:g_\pi(x)\ne f^\star(x)\}$ is nonempty and contained in $B_\pi$.
The blocks $\{B_\pi:\pi\in\{0,1\}^I\}$ are pairwise disjoint, so the disagreement sets $\{E_\pi:\pi\in\{0,1\}^I\}$ are pairwise disjoint as well.
By the disjoint-disagreement principle, \Cref{prop:block-elim}, any exact
certificate for $f^\star$ relative to the enlarged class must hit every one of these disagreement sets. Therefore
\[
\cert\bigl(f^\star,\mathcal T_{\mathsf A_n^+,p}(n)\bigr)\ge 2^t.
\]

For the non-vacuity claim, use only finiteness. In the log-precision regime, the smaller Transformer class has at most $2^{\poly(n)}$ hypotheses, since it is specified by $n^{O(1)}$ finite-precision scalar slots with $O(\log n)$ bits each. By \Cref{app:lem:small-cert-exists}, some
$f_n\in\mathcal T_{\mathsf A_n,p(n)}(n)$ satisfies
\[
\cert\bigl(f_n,\mathcal T_{\mathsf A_n,p(n)}(n)\bigr)\le \poly(n).
\]
Applying the target-wise lower bound above to this same $f_n$ inside the enlarged class, with $t_n=\Omega(n)$, gives
\[
\cert\bigl(f_n,\mathcal T_{\mathsf A_n^+,p(n)}(n)\bigr)\ge 2^{\Omega(n)}.
\]
\end{proofsketch}

Thus the Transformer result is not just a statement about a function that was already hard to certify. The smaller architecture contains polynomially certifiable functions, but after adding a single extra head and constant embedding/residual width, the same function can require exponentially many labeled examples to certify against the enlarged Transformer class.

%% file: 04_4_approximate.tex
\subsection{From exact to approximate certification}

The results above concern exact certification. One may ask whether hardness persists under a more forgiving notion, where a consistent hypothesis is only required to be approximately correct. 
The answer depends sharply on how the tolerance is measured.
If only polynomially many mistakes are allowed, the block-deceiver construction still forces certificates of exponential size.
In contrast, a fixed relative tolerance can leave exponentially many small-error block deceivers unresolved, at the cost of permitting exponentially many mistakes overall.
The following proposition captures this general argument, and it applies whenever the hypothesis class admits the required block deceivers, as in the circuit and Transformer constructions of the previous sections.

\begin{proposition}[Approximate certification lower bounds from block deceivers]
\label{prop:block-approx}
Let $\mathcal X = \{0,1\}^n$ and fix a nonempty set $I \subseteq [n]$ of $t$ trigger coordinates. For each pattern $\pi \in \{0,1\}^I$, let $B_\pi := \{x \in \mathcal X : x_I = \pi\}$ be the corresponding trigger block. Suppose that for every pattern $\pi$, there exists a deceiver $g_\pi \in \mathcal H$ that agrees with $f^\star$ outside $B_\pi$ and disagrees with it on at least half of $B_\pi$, i.e., writing $E_\pi := \{x : g_\pi(x) \ne f^\star(x)\}$,
\[
    E_\pi \subseteq B_\pi, \qquad |E_\pi| \ge 2^{n-t-1}.
\]
Since the blocks partition $\mathcal X$, the sets $\{E_\pi\}_{\pi\in\{0,1\}^I}$ 
are pairwise disjoint. Consequently:
\[
    \varepsilon_n < 2^{-t-1} \;\Rightarrow\; \cert_{\varepsilon_n}(f^\star, \mathcal H) \ge 2^t
    \qquad\text{and}\qquad
    R_n < 2^{n-t-1} \;\Rightarrow\; \cert_{R_n}(f^\star, \mathcal H) \ge 2^t.
\]
\end{proposition}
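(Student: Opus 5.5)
The plan is to reduce both implications to a single hitting argument, in the spirit of the disjoint-disagreement principle (\Cref{prop:block-elim}). First I record the easy structural fact. The blocks $B_\pi$ for distinct $\pi$ differ on some coordinate of $I$, so they are pairwise disjoint, and they cover $\mathcal X$. Since $E_\pi\subseteq B_\pi$, the sets $E_\pi$ are pairwise disjoint, and each is nonempty because $|E_\pi|\ge 2^{n-t-1}\ge 1$ when $t\le n$; if $t>n$ the statement is degenerate.

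Next I convert the tolerance hypotheses into a statement about each individual deceiver. Each $g_\pi$ has
\[
\Delta(g_\pi)=|E_\pi|\ge 2^{n-t-1},
\qquad
\err(g_\pi)=|E_\pi|/2^n\ge 2^{-t-1}.
\]
If $R_n<2^{n-t-1}$, then $\Delta(g_\pi)>R_n$ for every $\pi$. Likewise, if $\varepsilon_n<2^{-t-1}$, then $\err(g_\pi)>\varepsilon_n$ for every $\pi$. In either regime, every deceiver violates the tolerance.

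The hitting step goes as follows. Let $S$ be any set with $M_\rho(S)\le$ tolerance, where $\rho$ is the relevant error measure. Then no $g_\pi$ can lie in $\VS_{f^\star,\mathcal H}(S)$. Otherwise $M_\rho(S)\ge\rho(g_\pi)$, which exceeds the tolerance. So for each $\pi$ there is some $x\in S$ with $g_\pi(x)\ne f^\star(x)$, that is, $S\cap E_\pi\ne\emptyset$. Choose one witness $x_\pi\in S\cap E_\pi$ for each $\pi$. By disjointness of the $E_\pi$, the map $\pi\mapsto x_\pi$ is injective, so $|S|\ge 2^t$. Taking the minimum over all valid $S$ gives $\cert_{\varepsilon_n}(f^\star,\mathcal H)\ge 2^t$ and $\cert_{R_n}(f^\star,\mathcal H)\ge 2^t$.

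The main care point is the strictness of the inequalities. The assumption $|E_\pi|\ge 2^{n-t-1}$ together with the strict bounds $\varepsilon_n<2^{-t-1}$ and $R_n<2^{n-t-1}$ is exactly what guarantees that every deceiver is excluded. Everything else is the same pigeonhole argument as in \Cref{prop:block-elim}.
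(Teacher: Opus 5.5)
Your proposal is correct and takes the same approach as the paper: any set $S$ that misses some $E_\pi$ leaves $g_\pi$ in the version space with $\err(g_\pi)\ge 2^{-t-1}$ and $\Delta(g_\pi)\ge 2^{n-t-1}$, which violates the strict tolerance, so every valid $S$ must hit all $2^t$ pairwise disjoint sets $E_\pi$. One small slip: $2^{n-t-1}\ge 1$ fails at $t=n$ (and $t>n$ cannot occur since $I\subseteq[n]$), but $E_\pi$ is still nonempty because $|E_\pi|$ is an integer that is at least $1/2$, or directly because $|E_\pi|>R_n\ge 0$.
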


\begin{proof}
The sets $E_\pi$ are nonempty and pairwise disjoint, since each $E_\pi\subseteq B_\pi$ and the blocks partition $\mathcal X$. Suppose a labeled set $S$ misses some $E_\pi$. Then $g_\pi\in\VS_{f^\star, \mathcal H}(S)$, with
\[
    \err(g_\pi) = 2^{-n}|E_\pi| \ge 2^{-t-1},
    \qquad
    \Delta(g_\pi) = |E_\pi| \ge 2^{n-t-1}.
\]
If $\varepsilon_n<2^{-t-1}$ or $R_n<2^{n-t-1}$, this deceiver violates the respective guarantee. Hence any certificate must hit every $E_\pi$, and therefore has size at least $2^t$.
\end{proof}

The lower bound in \Cref{prop:block-approx} is essentially the inverse of the tolerance: optimizing over $t$ gives $\cert_{\varepsilon_n} \gtrsim 1/\varepsilon_n$ and $\cert_{R_n} \gtrsim 2^n / R_n$. Consequently, our lower bound grows with $n$ only when the relative tolerance shrinks with $n$.

\textbf{Certifying near-perfect accuracy is exponentially expensive:}
If $R_n = \operatorname{poly}(n)$, then choosing $t = n - O(\log n)$ gives 
\[
    \cert_{R_n}(f^\star, \mathcal H) \ge \frac{2^n}{\operatorname{poly}(n)}    = 2^{\Omega(n)}.
\]
Thus, allowing only polynomially many mistakes still forces the certificate size to be exponential to cover all deceivers that have more than $R_n$ mistakes.

\textbf{A fixed relative tolerance allows small-block deceivers to survive:}
For fixed $\varepsilon > 0$, an $\varepsilon$-certificate only needs to eliminate hypotheses whose relative error is larger than $\varepsilon$. In the block construction, a deceiver with trigger length $t$ has error at least $2^{-t-1}$ and at most $2^{-t}$. Thus \Cref{prop:block-approx} gives a nontrivial lower bound only when $\varepsilon < 2^{-t-1}$, which forces $t < \log_2(1/\varepsilon)-1.$

When $\varepsilon$ is fixed, this upper bound on $t$ is a constant, so the lower bound $2^t$ is at most on the order of $1/\varepsilon$, not exponential in $n$.
This does not mean that approximate certification is always easy. It means that block deceivers with sufficiently small relative error need not be eliminated. Such deceivers may still make exponentially many absolute mistakes, but their relative error can be small enough so that a fixed-relative-error certificate does not need to eliminate them.

\textbf{Relative and absolute tolerances encode different guarantees:}
The normalized and absolute formulations are related by
\[R_n = \lfloor \varepsilon_n 2^n \rfloor.\]
Thus a constant $\varepsilon$ corresponds to $R_n=\Theta(2^n)$ allowed mistakes, whereas a polynomial mistake budget $R_n=\operatorname{poly}(n)$ corresponds to a vanishing relative tolerance $\varepsilon_n = R_n/2^n$.

%% file: 05_experiments.tex
\section{Experiments}
\label{sec:experiments}

Our goal in this section is to complement the theoretical results above with a more practical perspective. To this end, we use binary-addition recognition as a controlled certification case study, since it connects naturally to threshold-circuit constructions and serves as a reasoning benchmark commonly studied in Transformers. 
Each input is a triple $(a,b,z)$, where $a,b\in\{0,1\}^n$ are operands and $z\in\{0,1\}^{n+1}$ is a proposed sum. The target is
$f^\star(a,b,z)=\mathbf 1[z=a+b]$. 

Both experiments use the same survivor calculation. Let $Q$ be a finite set from which a certificate candidate $S_m$ is sampled uniformly without replacement. For a hypothesis $h$, write
\[
E_h^Q=\{x\in Q:h(x)\ne f^\star(x)\}.
\]
Then $h$ passes the certificate candidate exactly when $S_m\cap E_h^Q=\emptyset$, so
\[
p_h^Q(m)
=
\binom{|Q|-|E_h^Q|}{m} \Big/ \binom{|Q|}{m},
\qquad
\mathbb E_Q(m)
=
\sum_{h\in\mathcal D} p_h^Q(m),
\]
where $\mathcal D$ is the set of non-exact candidates under consideration. This is an exact finite-population calculation. It does not assume independent survival events or uniformly distributed errors. The targeted experiment below changes the sampling set $Q$, not the assumptions behind the calculation.

\paragraph{Constructed threshold circuits.}
The first experiment instantiates the trigger construction from \Cref{thm:unbounded} for addition. Let $A,B,Z$ denote the integers encoded by $a,b,z$. A depth-$2$ threshold circuit recognizes addition by checking $Z-A-B\ge 0$ and $A+B-Z\ge 0$, and feeding the two tests into a final threshold gate. Thus the target recognizer uses three threshold gates.

For each pattern $\pi\in\{0,1\}^n$, define the trigger block $B_\pi=\{(a,b,z):a=\pi\}$.
Adding one threshold gate that detects $a=\pi$ lets us modify the output gate so that the circuit accepts every input in $B_\pi$ and agrees with the target outside $B_\pi$. Denote the resulting circuit by $g_\pi$ and its disagreement set by $E_\pi$. Since $E_\pi\subseteq B_\pi$ and the blocks are disjoint, the sets $\{E_\pi\}_{\pi\in\{0,1\}^n}$ are disjoint. A dummy gate may be added to the target if equal gate counts are desired.

We take $\mathcal D=\{g_\pi:\pi\in\{0,1\}^n\}$ and $Q=\mathcal X=\{0,1\}^{3n+1}$. The $2^n$ terms in the survivor sum have the same size, so the expected number of surviving constructed deceivers is $2^n p_{g_\pi}^{\mathcal X}(m)$. The left panel of \Cref{fig:addition-side-by-side} plots this quantity for several operand lengths. For $m=\poly(n)$, almost all constructed deceivers survive, and the decay only becomes visible on an exponential scale in $n$.

Appendix~\ref{app:halving-experiments} gives a complementary finite-class analysis for restricted threshold circuits and $\ACzero$. It identifies a particular target with certificate size $n+1$ in the base class, while the same target has a $2^n$ lower bound in the corresponding overparametrized class.

\paragraph{Trained Transformer recognizers.}
The second experiment asks whether a related certification issue appears among trained models. We train encoder-only Transformers on $10$-bit addition, using a $75\%$ train/validation split and a $25\%$ held-out test split over operand pairs $(a,b)$. For each of the $262{,}144$ held-out operand pairs, the test set contains the correct sum and the $11$ one-bit flips of that sum, giving $N_{\rm test}=262{,}144\cdot 12=3{,}145{,}728$ examples. We call a model \emph{test-exact} if it makes zero mistakes on this finite held-out set.\footnote{This is exactness relative to the held-out set, not over all possible candidate strings $z$.}

Intermediate supervision is used to obtain strong candidate models for certification. During training, the model also predicts the true sum bits together with intermediate sums and carries. A final verifier compares the predicted sum bits with the candidate $z$. During evaluation, the model receives only $(a,b,z)$. A run is accepted once it reaches $99.9\%$ accuracy on fixed train and validation checks, separately for positive and negative examples. Further implementation details are given in \Cref{app:transformers-addition}.

Among the $150$ accepted models, $87$ are test-exact, and $63$ have residual test errors. Thus, even strong validation checks can leave nonzero residual error sets. For consistency, we call these non-test-exact hypotheses deceivers, without implying that they behave like the constructed deceivers above. For the survivor curves, we set $\mathcal D$ to the $63$ Transformer deceivers. 
Here $S_m$ is a uniformly sampled certificate candidate, so this analysis reports candidate-certificate survival for this finite $\mathcal D$, not the ultimate minimum certificate size of $\{f^\star\}\cup\mathcal D$. Taking $Q$ to be the full held-out test set gives the full-test curve in the right panel of \Cref{fig:addition-side-by-side}.

The full-test curve samples uniformly from all held-out examples. One potential objection is that this set might be too broad, as the residual errors may lie in a simple region that a better certificate sampler should target. We therefore also evaluate a compact test subset $Q_{\rm tgt}$ consisting of positives and output-bit flips in positions $8,9,10$. This set uses $4$ of the $12$ examples per operand pair, yet contains $77.2\%$ of the unique error locations. The residual errors are therefore structured. Still, $Q_{\rm tgt}$ contains $1{,}048{,}576$ examples for $n=10$, and the targeted curve decreases only after sampling a large part of this set. Moreover, deceivers with $|E_h^{Q_{\rm tgt}}|=0$ survive every sample drawn from $Q_{\rm tgt}$. Thus, targeting a smaller error-rich region helps, but uniformly sampled certificate candidates from $Q_{\rm tgt}$ remain large.\footnote{We do not claim that $Q_{\rm tgt}$ is minimal. More compact a priori criteria may exist, perhaps with more complex structure. The post-hoc union of observed error locations has $15{,}494$ examples and eliminates every observed deceiver, but is chosen after evaluating the full held-out set. If every high-coverage set remains exponential in $n$, the certification barrier persists.}

\begin{figure}[t]
\centering
\includegraphics[width=0.46\linewidth]{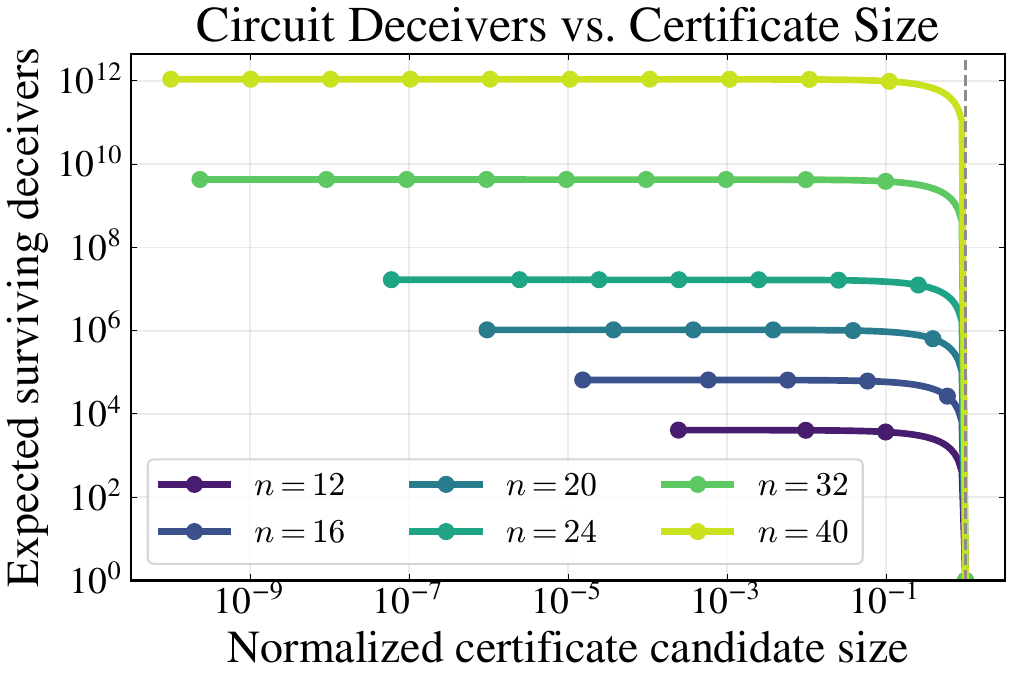}
\hfill
\includegraphics[width=0.46\linewidth]{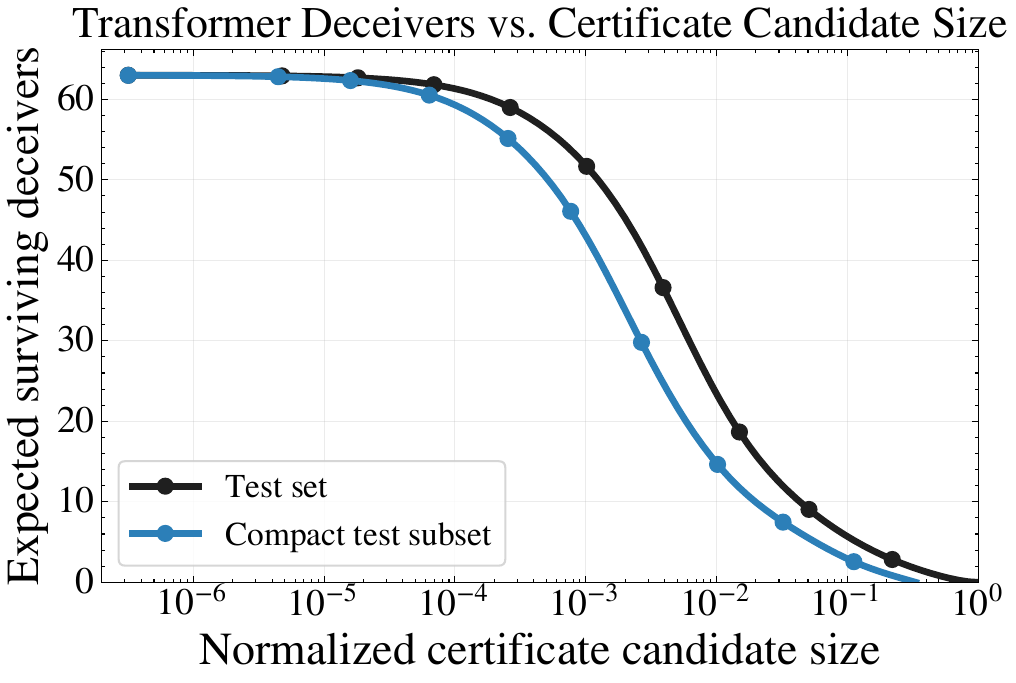}
\caption{
Surviving deceivers under uniformly sampled certificate candidates.
\textbf{Left.} Expected number of constructed threshold-circuit deceivers that remain consistent with the sampled labels. The certificate candidate size is normalized by $2^n$, where $n$ is the operand length. The constructed subfamily already forces an exponential sampling scale.
\textbf{Right.} Expected number of Transformer deceivers that remain consistent with sampled labels for $n=10$. The full-test curve samples from all held-out examples. The compact test subset curve samples from positives and output-bit flips in positions $8,9,10$, a one-third-size subset containing most observed errors. This tests whether error concentration alone explains the full-test curve. Targeting helps, but the uniformly sampled certificates remain large, and deceivers with errors outside it survive every compact subset.}
\label{fig:addition-side-by-side}
\end{figure}

The two experiments play different roles. The circuit experiment is a finite instantiation of the disjoint-trigger mechanism used in the lower bound. The Transformer experiment is a finite-set candidate-certificate analysis
of the trained set $\mathcal D$, not an asymptotic certificate-size lower bound for the full Transformer class. In both cases, a sampled certificate candidate eliminates a deceiver only if it intersects the residual error sets. Therefore, rare or disjoint errors can leave deceivers consistent with all sampled labels even after strong validation.

%% file: 06_conclusion.tex
\section{Conclusion and future work}
\label{sec:discussion}

We studied certification from labeled input-output examples for circuits and log-precision Transformers. Our main finding is that exact certification is highly sensitive to the ambient hypothesis class, with even minimal overparametrization, such as one threshold gate or one final-layer attention head with constant embedding/residual width overhead, creating exponentially many trigger deceivers around every target in the smaller class. Hence, polynomially many examples may fail to certify that a model has learned the intended computation.

An important next step is to understand which stronger certification protocols can escape this passive barrier. A natural direction is active certification, where the certifier can query the candidate model in different ways and compare its answers with a trusted target oracle, verifier, or specification. The exact learning literature shows that membership and equivalence queries can change learnability and query complexity \citep{angluin1987learning,hellerstein1996queries}. For Transformers and other neural networks, the same idea could support targeted challenges, counterexample search, or queries about intermediate traces rather than only final labels. One interesting question in this setting is whether richer interactions can efficiently rule out these trigger deceivers, or whether analogous barriers persist under stronger protocols.

%% file: 990_basic.tex
\section{Generic certification tools}
\label{app:cert-tools}

\subsection{Disjoint-block principle}
\label{app:disjoint-block}

\begin{proposition}[Restatement of \Cref{prop:block-elim}]
\label{app:prop:block-elim}
Let $\mathcal X$ be a finite domain, let
$\mathcal H\subseteq\{h:\mathcal X\to\{0,1\}\}$, and let
$f^\star\in\mathcal H$. Let $T$ be a finite index set, and suppose that for
each $\pi\in T$ there is a hypothesis $g_\pi\in\mathcal H$. Define
\[
E_\pi:=\{x\in\mathcal X:g_\pi(x)\ne f^\star(x)\}.
\]
If the sets $\{E_\pi:\pi\in T\}$ are nonempty and pairwise disjoint, then $\cert(f^\star,\mathcal H)\ge |T|.$
\end{proposition}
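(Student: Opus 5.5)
The plan is a direct hitting-set argument: show that every exact certificate $S$ for $f^\star$ in $\mathcal H$ intersects each disagreement set $E_\pi$, and then use pairwise disjointness to conclude $|S|\ge|T|$.

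First, fix an arbitrary input-output certificate $S\subseteq\mathcal X$ for $f^\star$ in $\mathcal H$, and suppose for contradiction that $S\cap E_\pi=\emptyset$ for some $\pi\in T$. Then $g_\pi(x)=f^\star(x)$ for every $x\in S$, since any point where they differ lies in $E_\pi$. So $g_\pi\in\VS_{f^\star,\mathcal H}(S)$. But $E_\pi$ is nonempty, so there is a point $x_0\in\mathcal X$ with $g_\pi(x_0)\ne f^\star(x_0)$. This contradicts the defining property of a certificate, namely that every surviving hypothesis agrees with $f^\star$ on all of $\mathcal X$. Hence $S\cap E_\pi\neq\emptyset$ for every $\pi\in T$.

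Second, define a map $\phi:T\to S$ by choosing, for each $\pi$, some $\phi(\pi)\in S\cap E_\pi$. If $\phi(\pi)=\phi(\pi')$ for $\pi\ne\pi'$, that point would lie in $E_\pi\cap E_{\pi'}$, contradicting pairwise disjointness. So $\phi$ is injective and $|S|\ge|T|$. Since $S$ was an arbitrary certificate, taking the minimum in the definition of $\cert$ gives $\cert(f^\star,\mathcal H)\ge|T|$. The minimum exists because $\mathcal X$ is itself a certificate.

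There is no substantive obstacle. The only care needed is to use both hypotheses in the right place: nonemptiness of $E_\pi$ makes $g_\pi$ a genuine counterexample in the first step, and disjointness gives injectivity in the second. If the $E_\pi$ were allowed to be empty, the statement could fail, since then $g_\pi$ would coincide with $f^\star$ and would not need to be eliminated.
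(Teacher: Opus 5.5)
Your proof is correct and matches the paper's argument: a sample that misses some $E_\pi$ leaves $g_\pi$ in the version space even though $g_\pi\ne f^\star$, and disjointness means each labeled point lies in at most one $E_\pi$. The only difference is that you count via an explicit injection $T\to S$, whereas the paper uses pigeonhole to show that $|S|<|T|$ leaves some $E_\pi$ unhit.
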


\begin{proof}[Proof]
Let $S\subseteq\mathcal X$ be any labeled sample consistent with $f^\star$.
A deceiver $g_\pi$ is eliminated by $S$ only if $S$ contains a point in $E_\pi$.
Since the sets $\{E_\pi:\pi\in T\}$ are pairwise disjoint, each labeled example eliminates at most one deceiver. If $|S|<|T|$, then some $\pi\in T$ has $S\cap E_\pi=\emptyset$, so $g_\pi$ remains consistent with all labels in $S$ while $g_\pi\ne f^\star$.
Hence $S$ is not a certificate.
Every certificate must therefore have size at least $|T|$, i.e.,
$\cert(f^\star,\mathcal H)\ge |T|$.
\end{proof}

In the later sections, the circuit and Transformer lower bounds all reduce to this proposition. In each case, we construct hypotheses $g_\pi$ whose disagreement sets are nonempty and contained in pairwise disjoint trigger blocks $B_\pi$, instantiated below for a binary domain.

\textbf{Trigger blocks:} For a binary input space of $n$ coordinates, fix a coordinate set $I \subseteq [n]$ with $|I| = t$. For each pattern $\pi \in \{0,1\}^I$, define the trigger block
\[
B_\pi := \{x \in \{0,1\}^n : x_i = \pi_i \text{ for all } i \in I\}.
\]
The blocks $\{B_\pi\}_{\pi \in \{0,1\}^I}$ are pairwise disjoint. In the instantiations considered for the hypothesis classes covered here, it is useful to view a deceiver $g_\pi$ as a function that agrees with $f^\star$ outside $B_\pi$ and is fixed to a constant value on $B_\pi$.

\subsection{Lower bounds on certificate size in finite classes}

Our exponential certification lower bounds do not imply that the underlying hypothesis class is uniformly hard. In fact, every finite hypothesis class contains at least one hypothesis
with a certificate of size $O(\log|\mathcal H|)$.

\begin{lemma}[A small certificate exists in every finite class]
\label{app:lem:small-cert-exists}
Let $\mathcal X$ be a domain and let
$\mathcal H\subseteq \{0,1\}^{\mathcal X}$ be finite and nonempty.
Then there exists $f\in\mathcal H$ such that
\[
\cert(f,\mathcal H)\le \lceil \log_2 |\mathcal H|\rceil.
\]
\end{lemma}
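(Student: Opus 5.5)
The plan is a halving argument: repeatedly pick a point that splits the current version space and keep the smaller side. Set $\mathcal H_0:=\mathcal H$ and $S_0:=\emptyset$, and maintain the invariant that $\mathcal H_k$ is exactly the set of hypotheses in $\mathcal H$ that agree on $S_k$ with a common labeling. The invariant gives $|S_k|=k$ and $|\mathcal H_k|\le |\mathcal H|/2^k$.

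\textbf{Inductive step.} Suppose the hypotheses in $\mathcal H_k$ are not all semantically equal. Then there is a point $x_k\in\mathcal X$ on which two of them disagree. The point splits $\mathcal H_k$ into two nonempty parts,
\[
\mathcal H_k^{(0)}=\{h\in\mathcal H_k : h(x_k)=0\},
\qquad
\mathcal H_k^{(1)}=\{h\in\mathcal H_k : h(x_k)=1\}.
\]
Let $\mathcal H_{k+1}$ be the smaller part, breaking ties arbitrarily, and set $S_{k+1}:=S_k\cup\{x_k\}$. The common labeling on $S_{k+1}$ extends the one on $S_k$ by the label of the chosen side at $x_k$. Since $\mathcal H_{k+1}$ is at most half of $\mathcal H_k$, we get $|\mathcal H_{k+1}|\le|\mathcal H_k|/2$. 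Both parts are nonempty, so $\mathcal H_{k+1}\neq\emptyset$, and the invariant is preserved.

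\textbf{Termination and conclusion.} The process stops at the first $k$ where all members of $\mathcal H_k$ compute the same function on $\mathcal X$. This must happen by $k\le\lceil\log_2|\mathcal H|\rceil$: if $|\mathcal H_k|=1$ the condition holds trivially, and $|\mathcal H_k|\ge 2$ forces $2\le|\mathcal H|/2^k$, i.e.\ $k\le\log_2|\mathcal H|-1$. Now take any $f\in\mathcal H_k$ and let $S:=S_k$.

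\textbf{$S$ is a certificate for $f$.} Any $h\in\mathcal H$ that agrees with $f$ on $S$ matches the common labeling at every chosen point. By the invariant, $h\in\mathcal H_k$, so $h$ equals $f$ on all of $\mathcal X$. Therefore $\cert(f,\mathcal H)\le|S|\le\lceil\log_2|\mathcal H|\rceil$.

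\textbf{Main point of care.} Nothing here is difficult. The one thing to get right is that the version space after $S_k$ equals exactly $\mathcal H_k$, and not a superset of it. This holds because every hypothesis outside $\mathcal H_k$ was excluded at some step by a disagreement at some $x_j\in S_k$.
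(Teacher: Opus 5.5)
Your proof is correct and follows the same halving argument as the paper. You split the current version space at a disagreement point, keep the smaller side, and stop at a single hypothesis after at most $\lceil\log_2|\mathcal H|\rceil$ rounds; your explicit invariant, that the version space after $S_k$ is exactly $\mathcal H_k$, spells out what the paper leaves as ``by construction.''
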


\begin{proof}
Initialize $V_0:=\mathcal H$ and an empty input set $S$.
While $|V_t|>1$, choose some $x_t\in\mathcal X$ on which two hypotheses in
$V_t$ disagree.  Let
\[
V_t^\pi:=\{h\in V_t:h(x_t)=\pi\}
\qquad \pi\in\{0,1\}.
\]
Both parts are nonempty, and the smaller one has size at most $|V_t|/2$.
Choose $\pi_t$ with $|V_t^{\pi_t}|\le |V_t|/2$, add $x_t$ to $S$, and set
$V_{t+1}:=V_t^{\pi_t}$.  The intended label of $x_t$ is $\pi_t$. After at most $\lceil\log_2|\mathcal H|\rceil$ steps we obtain a singleton
$V_T=\{f\}$.  By construction, $f$ is the unique hypothesis in $\mathcal H$
consistent with the labels prescribed on $S$, so $S$ is a certificate for
$f$ of size at most $\lceil\log_2|\mathcal H|\rceil$.
\end{proof}

The results in this subsection are best-case upper-bound statements. They prove that for a finite hypothesis class $\mathcal H$,
\[
\exists h\in\mathcal H:\quad \cert(h,\mathcal H)\le O(\log|\mathcal H|).
\]
The lower bounds in \Cref{sec:circuits} have a different quantifier order.
For example, the threshold-circuit result proves that for every
$f^\star\in\TCzero_{d,s}$, the certificate size in the larger class $\cert(f^\star,\TCzero_{d,s+1})$ is exponential.
There is no contradiction. The small certificate size guaranteed inside the enlarged class need not lie in the smaller target class. Thus the same class can contain large and very small certificate sizes for different targets.

%% file: 991_circuits.tex
\section{Circuit constructions and proofs}
\label{app:circuit-proofs}

\subsection{Counting bounds for circuit classes}

In the following, we instantiate the principle of \Cref{app:lem:small-cert-exists}, showing that there exist hypotheses with polynomial certificate size across our different circuit classes.

\begin{proposition}[Existence of a polynomial certificate size in $\TCzero_{d,s}$]
\label{app:prop:tc0-exists-poly-cert}
Fix $n\in\N$, depth $d\ge 1$, and $s\ge 1$, and let $\mathcal H := \TCzero_{d,s}$ be the semantic class of Boolean functions computable by depth-$d$ threshold circuits with $n$ external inputs and at most $s$ gates.
Then there exists $f\in\mathcal H$ with
\[
\cert(f,\mathcal H) \le O\bigl(s\,(n+s)^2\bigr).
\]
In particular, if $d$ is fixed and $s=n^{O(1)}$, then $\mathcal H$ contains some function with polynomial certificate size.
The same holds for the one-gate enlargement $\TCzero_{d,s+1}$ by substituting $s\leftarrow s+1$.
\end{proposition}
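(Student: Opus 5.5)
The plan is to reduce the statement to \Cref{app:lem:small-cert-exists}, which guarantees some $f\in\mathcal H$ with $\cert(f,\mathcal H)\le\lceil\log_2|\mathcal H|\rceil$. It then suffices to show $\log_2|\TCzero_{d,s}|=O(s(n+s)^2)$. Because $\mathcal H$ is a semantic class of Boolean functions on $\{0,1\}^n$, it is finite. We bound its cardinality by counting \emph{syntactic} circuits, since every function in $\mathcal H$ is computed by at least one circuit with at most $s$ gates. The depth constraint only restricts the set of circuits, so we ignore it in the upper bound; the same bound then works for every $d\ge 1$.

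First I would put circuits in a canonical form. Order the gates topologically as $G_1,\dots,G_{s'}$ with $s'\le s$, and let $G_{s'}$ be the output. We may pad to exactly $s$ gates with dummy gates, or simply sum over $s'\le s$, which costs only a factor $s$. Gate $G_j$ is a threshold function of the $m_j\le n+s-1$ signals available to it, namely the $n$ inputs and $G_1,\dots,G_{j-1}$. Absent wires are modeled as weight $0$, so the wiring is absorbed into the weights. The gate's behavior on the circuit therefore depends only on which Boolean threshold function of $m_j$ variables it computes, and by the integer-weight convention every threshold function is realizable. So the number of circuits is at most $s\cdot\prod_{j}N(m_j)\le s\,N(n+s)^s$, where $N(m)$ is the number of distinct threshold functions on $\{0,1\}^m$.

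The key step, which I expect to be the main obstacle, is the classical bound $N(m)\le 2^{m^2}$ (more precisely $N(m)\le 2\sum_{i=0}^{m}\binom{2^m-1}{i}$). I would obtain it by the hyperplane-arrangement / Cover function-counting argument. A threshold function is a sign pattern of the affine functional $(w,\theta)\mapsto w\cdot z-\theta$ on the $2^m$ points $z\in\{0,1\}^m$. The number of regions cut out in $\mathbb R^{m+1}$ by $2^m$ hyperplanes through the origin is at most $2\sum_{i=0}^{m}\binom{2^m-1}{i}\le 2^{m^2+O(m)}$. One could alternatively cite the Muroga bound, which says integer weights of $O(m\log m)$ bits suffice; that gives $2^{O(m^2\log m)}$, which is slightly weaker. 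I would therefore commit to the region-counting argument to get the stated $O(s(n+s)^2)$ without a log factor.

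Combining these, $\log_2|\mathcal H|\le \log_2 s+s(n+s)^2+O(s(n+s))=O(s(n+s)^2)$. \Cref{app:lem:small-cert-exists} then yields the claimed $f$. When $d$ is fixed and $s=n^{O(1)}$, this is $n^{O(1)}$. The statement for $\TCzero_{d,s+1}$ follows by rerunning the same count with $s+1$ in place of $s$, since $(n+s+1)^2=O((n+s)^2)$.
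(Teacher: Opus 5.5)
Your proposal is correct and follows the paper's proof: topologically order at most $s$ gates, bound each gate's behavior by the number $T(n+s)$ of threshold functions of the available signals to get $|\mathcal H|\le T(n+s)^s=2^{O(s(n+s)^2)}$, and apply \Cref{app:lem:small-cert-exists}. The only difference is that the paper cites \citet{baldi2019polynomial} for $\log_2 T(m)=O(m^2)$, whereas you derive it from Cover's region-counting bound; that derivation is equally valid.
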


\begin{proof}
For fixed $n$, $\mathcal H$ is finite (it is a set of Boolean functions on $\{0,1\}^n$).

Let $T(n)$ be the number of distinct Boolean linear threshold functions on
$n$ Boolean inputs. From \cite{baldi2019polynomial}, we obtain that 
\[
\log_2 T(n)=O(n^2).
\]

To canonize each entry, we may delete unused gates and, if necessary, pad with dummy gates. Then, we topologically order the $s$ gates. Since gate $i$ depends on at most
$n+i-1\le n+s$ previously available outputs, it has $T(n+s)$
possible Boolean behaviors, hence
\[
|\mathcal H|\le T(n+s)^s\le 2^{O(s(n+s)^2)}.
\]
The claim follows by applying \Cref{app:lem:small-cert-exists}.
\end{proof}

\begin{proposition}[Existence of a polynomial certificate size in $\ACzero_{d, s}$]
\label{app:prop:poly-cert-ac0}
Fix $n,s,d\in\N$ and let $\mathcal H:=\ACzero_{d,s}$ be the semantic class of depth-$d$ unbounded-fan-in Boolean circuits with $n$ external inputs and at most $s$ gates.
Then there exists $f\in\mathcal H$ such that
\[
\cert\bigl(f,\mathcal H\bigr)\le O\bigl(s(n+s)\bigr).
\]
In particular, for constant $d$ and $s=n^{O(1)}$, the class contains at least one function with polynomial certificate size.
\end{proposition}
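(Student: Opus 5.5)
We mirror the proof of \Cref{app:prop:tc0-exists-poly-cert}. The plan is to bound $\log_2|\mathcal H|$ by $O(s(n+s))$ and then invoke \Cref{app:lem:small-cert-exists}. For fixed $n$ the semantic class $\mathcal H=\ACzero_{d,s}$ is a finite nonempty subset of $\{0,1\}^{\{0,1\}^n}$; it contains, for instance, the constant functions, realized by a single gate. So the lemma applies and yields some $f\in\mathcal H$ with $\cert(f,\mathcal H)\le\lceil\log_2|\mathcal H|\rceil$.

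To bound $|\mathcal H|$, we count syntactic circuit descriptions; each one determines at most one function. As in the threshold case, we canonize each circuit by deleting unused gates and padding with dummy gates so that it has exactly $s$ gates, then fix a topological order $1,\dots,s$ with gate $s$ as the output. Gate $i$ is specified by two pieces of data. The first is its type, AND or OR, which costs $1$ bit. The second is its fan-in set. This set is a subset of the available signals, namely the $2n$ input literals $x_j,\neg x_j$ together with the outputs of gates $1,\dots,i-1$, so at most $2n+s$ candidates and hence at most $2^{2n+s}$ choices. If the convention also allows negated gate outputs as inputs, the count rises to $2^{2n+2s}$, which changes nothing asymptotically.

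Multiplying over gates gives
\[
|\mathcal H|\le \bigl(2\cdot 2^{2n+2s}\bigr)^s = 2^{O(s(n+s))},
\]
and therefore $\cert(f,\mathcal H)\le O(s(n+s))$. The depth constraint only removes descriptions, so it can be ignored in an upper bound. For constant $d$ and $s=n^{O(1)}$, the bound is $n^{O(1)}$.

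The only step needing care is fixing the circuit convention, since the count must cover every allowed gate type and wiring. Literals at the input level, possible negations of internal gates, and repeated wires all affect the count only by constant factors in the exponent. The bound $O(s(n+s))$ is therefore robust: each gate's behavior is determined by a subset of at most $O(n+s)$ signals plus $O(1)$ bits. The improvement over the $O(s(n+s)^2)$ threshold bound comes from unweighted AND/OR gates needing only subset data rather than a threshold function.
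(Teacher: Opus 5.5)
Your proposal is correct and matches the paper's proof. Both order the $s$ gates, charge each gate one of two types plus a subset of the $2n$ input literals and earlier gate outputs, conclude $|\mathcal H|\le 2^{O(s(n+s))}$, and apply \Cref{app:lem:small-cert-exists}; your remarks on nonemptiness, negated gate outputs and the depth constraint only make explicit details the paper leaves implicit.
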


\begin{proof}
Count circuits under the free-literal convention from \Cref{sec:prelim} and order the $s$ internal gates.
At gate $i$, the available signals consist of the $2n$ input literals and at most $i-1$ previous gate outputs.
An AND/OR gate chooses a subset of these available signals and one of two types, giving at most $2\cdot 2^{2n+s+1}$ choices per gate.
Therefore $|\mathcal H| \le (2\cdot 2^{2n+s})^s$ and the claim follows by applying \Cref{app:lem:small-cert-exists}.
\end{proof}

\begin{proposition}[Existence of a polynomial certificate size in bounded-size fan-in-$2$ circuit classes]
\label{app:prop:poly-cert-nc1}
Fix $n,s,d\in\N$, and let $\mathcal H$ be the semantic class of Boolean
functions computable by fan-in-$2$ Boolean circuits with $n$ external inputs,
size at most $s$, and depth at most $d$. Then there exists $f\in\mathcal H$
such that
\[
\cert(f,\mathcal H)\le O\bigl(s\log(n+s)\bigr).
\]
Moreover, for every fixed $c>0$, the same conclusion holds for $\NCone_{s}$, the class of fan-in-$2$ circuits with $n$ inputs, size
at most $s$, and depth at most $c\log n$. In particular, if $s=n^{O(1)}$,
then $\NCone_{s}$ contains a function with polynomial certificate size.
\end{proposition}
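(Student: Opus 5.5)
The plan is to follow the counting template used in \Cref{app:prop:tc0-exists-poly-cert} and \Cref{app:prop:poly-cert-ac0}: bound $\log_2|\mathcal H|$ and then invoke \Cref{app:lem:small-cert-exists}. Since $\mathcal H$ is a set of Boolean functions on $\{0,1\}^n$, it is finite and nonempty (it contains the constant-free projection $x_1$, or a constant if available), so the lemma applies. It gives some $f\in\mathcal H$ with $\cert(f,\mathcal H)\le\lceil\log_2|\mathcal H|\rceil$. It therefore suffices to show
\[
|\mathcal H|\le 2^{O(s\log(n+s))}.
\]

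To count, I would canonize each circuit. Any function in $\mathcal H$ is computed by some fan-in-$2$ circuit with at most $s$ gates and depth at most $d$. If fewer gates are used, pad with dummy gates (e.g.\ copies of an input under a binary AND with itself), which changes neither the output nor the depth. Then fix a topological order $1,\dots,s$ of the gates, with the output at gate $s$. Gate $i$ is specified by three data:
\begin{itemize}
\item a gate type from a fixed finite basis, say the $16$ binary Boolean functions, so $O(1)$ choices;
\item its first input, chosen among the $n$ inputs and the $i-1\le s$ earlier gates;
\item its second input, chosen likewise.
\end{itemize}
This gives at most $16(n+s)^2$ choices per gate. The number of syntactic circuits is therefore at most $(16(n+s)^2)^s$. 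The semantic class is the image of this set under the evaluation map, so it is no larger, and
\[
\log_2|\mathcal H|\le s\bigl(4+2\log_2(n+s)\bigr)=O\bigl(s\log(n+s)\bigr).
\]
The depth constraint only restricts the set of circuits, so it can be dropped as an upper bound. If the basis includes NOT gates or free literals, the count changes only by constant factors inside the logarithm.

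For the $\NCone_s$ statement, the class is the special case $d=\lfloor c\log n\rfloor$ of the same setting, so the same bound holds verbatim, independent of $c$. When $s=n^{O(1)}$ we get $\log(n+s)=O(\log n)$, and the certificate bound becomes $O(s\log n)=n^{O(1)}$.

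No step is a real obstacle. The only point needing care is the convention for gate fan-in and padding: ensure that "size at most $s$" circuits can all be represented with exactly $s$ ordered gates, or alternatively sum the count over sizes $s'\le s$. The latter costs only a factor $s$, which is absorbed into the $O(\cdot)$ after taking logarithms.
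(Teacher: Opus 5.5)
Your proposal is correct and follows essentially the same route as the paper. Both topologically order the $s$ gates and bound the per-gate choices by $O((n+s)^2)$: the paper counts $3(n+s)^2$ for the $\wedge,\vee,\neg$ basis, and you count $16(n+s)^2$ for all binary gate types. Both then conclude $|\mathcal H|\le 2^{O(s\log(n+s))}$, apply the halving lemma, and treat $\NCone_s$ as the special case $d=\lfloor c\log n\rfloor$; your explicit padding and depth-dropping remarks just make precise what the paper leaves implicit.
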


\begin{proof}
We count the circuits after topologically
ordering the $s$ gates, where gate $i$ may read from at most $n+i-1\le n+s$
previously available signals. Thus each gate has at most $3(n+s)^2$ possible
descriptions, accounting for $\wedge,\vee,\neg$ gates. Hence $|\mathcal H|\le (3(n+s)^2)^s$ and applying \Cref{app:lem:small-cert-exists} finishes the proof. 
The same argument applies to the subclass
$\NCone_{s}\subseteq \mathcal H$ with $d=\lfloor c\log n\rfloor$.
If $s=n^{O(1)}$, the bound is polynomial in $n$.
\end{proof}

\subsection{Threshold circuits}
\label{app:tc0-overrides}

\begin{proposition}[Same-depth threshold-circuit deceivers]
\label{prop:tc0-compat}
Let $C^\star$ be a threshold circuit of size $s$ and depth at most $d$
computing $f^\star$, where $d \ge 2$.
Fix $I \subseteq [n]$ with $|I|=t$ and a pattern $\pi\in \{0,1\}^I$.
Then:
\begin{enumerate}[label=(\roman*)]
\item there is a threshold gate $\mathrm{CHECK}_{I,\pi}$ of depth $1$ such that
\[
\mathrm{CHECK}_{I,\pi}(x)=1 \iff x \in B_\pi;
\]

\item for each $\sigma \in \{0,1\}$ there is a threshold circuit
$C_{\pi,\sigma}$ of size at most $s+1$ and depth at most $d$ such that
\[
g_{\pi,\sigma}(x):=
\begin{cases}
\sigma, & x \in B_\pi,\\
f^\star(x), & x \notin B_\pi,
\end{cases}
\]
where $g_{\pi,\sigma} \in \TCzero_{d,s+1}$ and either $g_{\pi,0}$ or $g_{\pi,1}$ differs from $f^\star$ on at least $|B_\pi|/2$ points of $B_\pi$.
\end{enumerate}
\end{proposition}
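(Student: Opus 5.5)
For part (i) I would write the detector directly as a single threshold gate on signed literals. Set $w_i := 1$ if $\pi_i=1$ and $w_i:=-1$ if $\pi_i=0$ for $i\in I$, zero weight off $I$, and threshold $\theta := |\{i\in I:\pi_i=1\}|$. Then $\sum_{i\in I} w_i x_i = |\{i:\pi_i=1,x_i=1\}| - |\{i:\pi_i=0,x_i=1\}|$. This sum is at most $\theta$, with equality exactly when every $x_i$ with $\pi_i=1$ equals $1$ and every $x_i$ with $\pi_i=0$ equals $0$, i.e. when $x_I=\pi$. So $\mathrm{CHECK}_{I,\pi}=\mathrm{THR}_{w,\theta}$ has integer weights, depth $1$, and fires iff $x\in B_\pi$.

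For part (ii) I would modify only the output gate of $C^\star$. Write it as $\mathbf 1[\sum_j u_j z_j\ge\theta_{\rm out}]$, where the $z_j$ are its (Boolean) inputs, either gates or input literals. Let $U:=\sum_j|u_j|+|\theta_{\rm out}|+1$. Add $\mathrm{CHECK}_{I,\pi}$ as a new input to the output gate with weight $\pm U$.

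For $\sigma=1$, use weight $+U$. When $\mathrm{CHECK}=1$ the weighted sum is at least $-\sum_j|u_j|+U>\theta_{\rm out}$, so the output is $1$. When $\mathrm{CHECK}=0$ the sum is unchanged, so the output is $f^\star(x)$.

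For $\sigma=0$, use weight $-U$. When $\mathrm{CHECK}=1$ the sum is at most $\sum_j|u_j|-U<\theta_{\rm out}$, so the output is $0$. When $\mathrm{CHECK}=0$ the output is again unchanged.

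The remaining bookkeeping is as follows.

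\emph{Size.} Exactly one gate is added, so the size is at most $s+1$. The other gates are untouched, so they compute the same values on every input.

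\emph{Depth.} The new gate sits at depth $1$. The output gate's depth is $1+\max$ over its inputs' depths, which is still at most $d$: the original maximum is at most $d-1$, and the new input has depth $1\le d-1$ since $d\ge2$. This is exactly where $d\ge 2$ enters. The degenerate case where the output gate is itself at depth $1$ (so $C^\star$ has depth $1$) is still fine: adding a depth-1 input to the output gate makes it depth $2\le d$.

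The only subtle point is when the output of $C^\star$ is not a gate but a bare input or constant. I would handle this by replacing the output with an explicit threshold gate computing it. That could cost an extra gate and break the $s+1$ budget, so I would instead adopt the convention that every circuit's output node is a gate. Failing that, I would note that a circuit of size $s\ge1$ whose output is an input literal can discard its useless gates to free one slot.

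For the counting claim, $\{g_{\pi,0},g_{\pi,1}\}$ disagree with $f^\star$ on $B_\pi$ exactly on $f^{\star-1}(1)\cap B_\pi$ and $f^{\star-1}(0)\cap B_\pi$ respectively. These two sets partition $B_\pi$, so one of them has size at least $|B_\pi|/2$.
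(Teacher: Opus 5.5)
Your proposal is correct and follows the paper's proof essentially step for step. You use the same signed-literal detector with threshold $t_1=|\{i\in I:\pi_i=1\}|$, the same override weight $\pm(\sum_j|w_j|+|\theta|+1)$ on $\mathrm{CHECK}_{I,\pi}$ fed into the replaced output gate, and the same depth bound $1+\max\{1,d^\star-1\}\le d$. Beyond the paper's proof, you also prove the $|B_\pi|/2$ counting claim inline (the paper handles it in \Cref{rem:choose-one-per-block}), and you flag the degenerate case of an output node that is not a gate, which the paper tacitly excludes.
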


\begin{proof}
Let the output gate of $C^\star$ be
\[
f^\star(x)=\mathbf 1\!\left[\sum_{j=1}^m w_j z_j(x) \ge \theta\right],
\]
where each $z_j$ is the output of a depth-$(d-1)$ subcircuit.

For the trigger, let
\[
t_1 := |\{i \in I : \pi_i = 1\}|,
\]
and define
\[
\mathrm{CHECK}_{I,\pi}(x)
:=
\mathbf 1\!\left[\sum_{i \in I : \pi_i = 1} x_i - \sum_{i \in I : \pi_i = 0} x_i \ge t_1\right].
\]
If $x \in B_\pi$, then each coordinate with $\pi_i=1$ contributes $1$ and each coordinate with $\pi_i=0$ contributes $0$, so the sum is exactly $t_1$ and the gate outputs $1$.
If $x \notin B_\pi$, then either some required-$1$ coordinate contributes $0$ or some required-$0$ coordinate contributes $-1$, so the sum is strictly less than $t_1$ and the gate outputs $0$.
This proves (i).

Now choose
\[
M := \sum_{j=1}^m |w_j| + |\theta| + 1.
\]
Because each $z_j(x) \in \{0,1\}$, the quantity $\sum_j w_j z_j(x)$ always lies in the interval
\[
\left[-\sum_j |w_j|,\ \sum_j |w_j|\right].
\]
Hence for every $x$,
\[
\sum_j w_j z_j(x) - M < \theta
\qquad\text{and}\qquad
\sum_j w_j z_j(x) + M \ge \theta.
\]

Define the modified output gate by
\[
\widetilde g_{\pi,0}(x)
:=
\mathbf 1\!\left[\sum_{j=1}^m w_j z_j(x) - M\,\mathrm{CHECK}_{I,\pi}(x) \ge \theta\right],
\]
and similarly
\[
\widetilde g_{\pi,1}(x)
:=
\mathbf 1\!\left[\sum_{j=1}^m w_j z_j(x) + M\,\mathrm{CHECK}_{I,\pi}(x) \ge \theta\right].
\]
Outside $B_\pi$ we have $\mathrm{CHECK}_{I,\pi}(x)=0$, so both modified output gates compute exactly $f^\star(x)$.
Inside $B_\pi$ we have $\mathrm{CHECK}_{I,\pi}(x)=1$; the first modified gate is forced to $0$ and the second is forced to $1$ by the choice of $M$.
Thus $\widetilde g_{\pi,\sigma}=g_{\pi,\sigma}$ for $\sigma \in \{0,1\}$.

The only new gate is $\mathrm{CHECK}_{I,\pi}$.
The output gate is replaced rather than duplicated, so the size increases by
at most $1$.  If the original circuit has depth $d^\star\le d$, then the
subcircuits feeding its output gate have depth at most $d^\star-1$.  Since
$\mathrm{CHECK}_{I,\pi}$ has depth $1$, the modified output gate has depth at
most
\[
1+\max\{1,d^\star-1\}\le d,
\]
using $d\ge 2$.  All gates remain threshold gates.
\end{proof}

\begin{remark}[Choosing one deceiver per block]
\label{rem:choose-one-per-block}
For each block $B_\pi$, choose $\sigma_\pi\in \{0,1\}$ so that
\[
|\{x \in B_\pi : f^\star(x) \neq \sigma_\pi\}| \ge |B_\pi|/2.
\]
Then $g_\pi := g_{\pi,\sigma_\pi}$ differs from $f^\star$ on at least one input in $B_\pi$ and agrees with $f^\star$ off $B_\pi$.
This allows us to use one deceiver per block in the elimination argument while keeping the construction explicit at the level of the existence proof.
\end{remark}

\begin{theorem}[Restatement of Theorem~\ref{thm:unbounded}]
\label{app:thm:unbounded}
Fix a constant depth $d\ge 2$.  For every input length $n$, every gate budget
$s$, every target $f^\star\in\TCzero_{d,s}$, and every trigger length
$1\le t\le n$,
\[
\cert\bigl(f^\star,\TCzero_{d,s+1}\bigr)\ge 2^t.
\]
Consequently, along any sequence of input lengths, if the chosen trigger
lengths satisfy $c n\le t(n)\le n$ for some fixed $c>0$, then the lower bound
is $2^{\Omega(n)}$. Moreover, if $s(n)=n^{O(1)}$, then for all sufficiently
large $n$ there exists $f_n\in\TCzero_{d,s(n)}$ such that
\[
\cert\bigl(f_n,\TCzero_{d,s(n)}\bigr)\le n^{O(1)}
\quad\text{while}\quad
\cert\bigl(f_n,\TCzero_{d,s(n)+1}\bigr)\ge 2^{\Omega(n)}.
\]
\end{theorem}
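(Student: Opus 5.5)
The plan is to assemble the restatement from three earlier pieces: the deceiver construction of \Cref{prop:tc0-compat}, the disjoint-disagreement principle \Cref{app:prop:block-elim}, and the counting bound \Cref{app:prop:tc0-exists-poly-cert}.

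\textbf{Lower bound.} Fix $f^\star\in\TCzero_{d,s}$ and a circuit $C^\star$ of size at most $s$ and depth at most $d$ computing it. Choose any $I\subseteq[n]$ with $|I|=t$.
\begin{enumerate}
\item For each $\pi\in\{0,1\}^I$, apply \Cref{prop:tc0-compat}. It gives circuits of size at most $s+1$ and depth at most $d$ computing $g_{\pi,0}$ and $g_{\pi,1}$. Both functions equal $f^\star$ off $B_\pi$ and are constant on $B_\pi$.
\item Pick $\sigma_\pi$ as in \Cref{rem:choose-one-per-block}, so that $g_\pi:=g_{\pi,\sigma_\pi}\in\TCzero_{d,s+1}$. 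Its disagreement set $E_\pi$ satisfies $E_\pi\subseteq B_\pi$ and $|E_\pi|\ge |B_\pi|/2=2^{n-t-1}\ge 1$, so it is nonempty.
\item The blocks $B_\pi$ partition $\{0,1\}^n$, so the sets $E_\pi$ are pairwise disjoint.
\item Since $f^\star\in\TCzero_{d,s}\subseteq\TCzero_{d,s+1}$, \Cref{app:prop:block-elim} applies with $T=\{0,1\}^I$ and gives $\cert(f^\star,\TCzero_{d,s+1})\ge 2^t$.
\item If $t(n)\ge cn$, this bound is $2^{cn}=2^{\Omega(n)}$.
\end{enumerate}

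\textbf{Non-vacuity.} Let $s=s(n)=n^{O(1)}$.
\begin{enumerate}
\item By \Cref{app:prop:tc0-exists-poly-cert}, there is some $f_n\in\TCzero_{d,s(n)}$ with $\cert(f_n,\TCzero_{d,s(n)})\le O(s(n+s)^2)$, which is $n^{O(1)}$.
\item Apply the target-wise lower bound to this same $f_n$ with $t=n$, or any linear $t$. This gives $\cert(f_n,\TCzero_{d,s(n)+1})\ge 2^n=2^{\Omega(n)}$.
\end{enumerate}
The phrase ``sufficiently large $n$'' only absorbs the constants implicit in the $O(\cdot)$ notation. It also ensures $s(n)\ge 1$, which the counting proposition requires.

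\textbf{Main obstacle.} The step needing the most care is membership of the deceivers in the class, namely that the depth stays at most $d$ and the size at most $s+1$.

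The depth argument in \Cref{prop:tc0-compat} handles the case where $C^\star$ has depth $d^\star<d$, or even depth $1$. There the modified output gate sits at depth $\max\{2,d^\star\}\le d$, and this uses $d\ge2$. The size bound holds because the output gate is replaced rather than duplicated, so $\mathrm{CHECK}_{I,\pi}$ is the only new gate.

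One degenerate case must also be addressed: $f^\star$ computed by a circuit whose output is an input wire or a constant, with no output threshold gate. I would rule this out by convention, since $s\ge1$ and every circuit has an output gate. Alternatively, an input literal $x_i$ can be viewed as the threshold gate $\mathbf 1[x_i\ge1]$ of size $1$, which \Cref{prop:tc0-compat} then handles directly. With these points checked, the remainder is bookkeeping.
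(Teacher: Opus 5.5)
Your proposal is correct and follows the paper's proof: one-gate overrides from \Cref{prop:tc0-compat} with $\sigma_\pi$ chosen as in \Cref{rem:choose-one-per-block}, then the disjoint-disagreement principle \Cref{app:prop:block-elim}, then the counting bound of \Cref{app:prop:tc0-exists-poly-cert} for non-vacuity (the paper takes $t=\lfloor n/2\rfloor$ where you take $t=n$, both valid, and your explicit note that $f^\star\in\TCzero_{d,s+1}$ fills in a detail the paper leaves implicit). One small slip: $2^{n-t-1}\ge 1$ is false at $t=n$, but $E_\pi$ is still nonempty because $|E_\pi|$ is an integer that is at least $1/2$.
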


\begin{proof}
Choose a threshold circuit $C^\star$ of size at most $s$ and depth at most $d$
computing $f^\star$.  Fix any coordinate set $I\subseteq[n]$ with $|I|=t$.
For each $\pi\in\{0,1\}^I$, choose $\sigma_\pi$ as in
\Cref{rem:choose-one-per-block}, and let
$g_\pi:=g_{\pi,\sigma_\pi}$ be the corresponding one-gate override from
\Cref{prop:tc0-compat}.  Then $g_\pi\in\TCzero_{d,s+1}$ and by construction, $g_\pi$ agrees with $f^\star$ outside the trigger block $B_\pi$ and differs from $f^\star$ on at least one point of $B_\pi$.
Since the blocks $B_\pi$ are pairwise disjoint, \Cref{app:prop:block-elim} gives
\[
\cert\bigl(f^\star,\TCzero_{d,s+1}\bigr)\ge 2^t.
\]
The asymptotic consequence follows whenever $c n\le t(n)\le n$ for some fixed
$c>0$. It remains to justify the best-case, non-vacuity statement.  By
\Cref{app:prop:tc0-exists-poly-cert}, for each $n$ there exists some
$f_n\in\TCzero_{d,s(n)}$ satisfying
\[
\cert\bigl(f_n,\TCzero_{d,s(n)}\bigr)
\le O\bigl(s(n)(n+s(n))^2\bigr).
\]
When $s(n)=n^{O(1)}$, this bound is polynomial in $n$.  Applying the
target-wise lower bound above to this same $f_n$ with, say,
$t=\lfloor n/2\rfloor$, yields
\[
\cert\bigl(f_n,\TCzero_{d,s(n)+1}\bigr)
\ge 2^{\lfloor n/2\rfloor}
=2^{\Omega(n)}.
\]
Thus the exponential lower bound in the enlarged class is not inherited from a
target that was already hard to certify in the smaller class.
\end{proof}

\subsection{Unbounded-fan-in Boolean circuits}
\label{app:ac0-overrides}

\begin{proposition}[Depth-preserving one-sided deceivers in $\ACzero$]
\label{prop:ac0-compat}

Let $C^\star$ be a depth-$d$ unbounded-fan-in Boolean circuit of size $s$
computing $f^\star$, under the free-literal $\ACzero$ convention of
\Cref{sec:prelim}, with $d \ge 2$.
Fix $I \subseteq [n]$ with $|I|=t$ and a pattern $\pi\in \{0,1\}^I$.

Define
\[
\mathrm{CHECK}_{I,\pi}(x)
:=
\bigwedge_{i \in I} \ell_i(x),
\qquad
\ell_i(x)=
\begin{cases}
x_i, & \pi_i=1,\\
\neg x_i, & \pi_i=0,
\end{cases}
\]
and
\[
\mathrm{MISS}_{I,\pi}(x)
:=
\bigvee_{i \in I} m_i(x),
\qquad
m_i(x)=
\begin{cases}
\neg x_i, & \pi_i=1,\\
x_i, & \pi_i=0.
\end{cases}
\]
Then:
\begin{enumerate}[label=(\roman*)]
\item $\mathrm{CHECK}_{I,\pi}(x)=1$ iff $x \in B_\pi$;
\item $\mathrm{MISS}_{I,\pi}(x)=0$ iff $x \in B_\pi$;
\item if the output gate of $C^\star$ is an OR gate, then
\[
g_{\pi,1}(x):= f^\star(x)\vee \mathrm{CHECK}_{I,\pi}(x)
\]
belongs to $\ACzero_{d,s+1}$, and
\[
g_{\pi,1}(x)=
\begin{cases}
1, & x \in B_\pi,\\
f^\star(x), & x \notin B_\pi;
\end{cases}
\]
\item if the output gate of $C^\star$ is an AND gate, then
\[
g_{\pi,0}(x):= f^\star(x)\wedge \mathrm{MISS}_{I,\pi}(x)
\]
belongs to $\ACzero_{d,s+1}$, and
\[
g_{\pi,0}(x)=
\begin{cases}
0, & x \in B_\pi,\\
f^\star(x), & x \notin B_\pi.
\end{cases}
\]
\end{enumerate}
\end{proposition}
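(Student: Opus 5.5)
Items (i) and (ii) follow directly from the definitions, and items (iii) and (iv) are dual; I will prove (iii) and obtain (iv) by swapping AND with OR and $0$ with $1$. For (i), $\mathrm{CHECK}_{I,\pi}(x)=1$ holds iff every literal $\ell_i(x)=1$, i.e.\ iff $x_i=\pi_i$ for all $i\in I$, which is the definition of $x\in B_\pi$. For (ii), note that $m_i=\neg\ell_i$ for each $i$. By De Morgan, $\mathrm{MISS}_{I,\pi}=\neg\mathrm{CHECK}_{I,\pi}$, so (ii) follows from (i). Under the free-literal convention of \Cref{sec:prelim}, both gates read only input literals, so each is a single gate of depth $1$.

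For (iii), the functional identity is immediate. If $x\in B_\pi$, then $\mathrm{CHECK}_{I,\pi}(x)=1$ and $g_{\pi,1}(x)=1$. If $x\notin B_\pi$, then $\mathrm{CHECK}_{I,\pi}(x)=0$ and $g_{\pi,1}(x)=f^\star(x)\vee 0=f^\star(x)$.

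The part that needs care is membership in $\ACzero_{d,s+1}$ without adding a new top OR gate, since a new top gate would cost one gate and possibly one level of depth. Instead, I exploit that the output gate of $C^\star$ is already an OR, say $f^\star=\bigvee_j z_j$, where the $z_j$ are the input wires of the output gate. I add $\mathrm{CHECK}_{I,\pi}$ as one more input wire to this same OR gate, so the modified output gate computes $\bigvee_j z_j\vee\mathrm{CHECK}_{I,\pi}=g_{\pi,1}$. Unbounded fan-in makes the extra wire free. The only new gate is $\mathrm{CHECK}_{I,\pi}$, so the size is at most $s+1$.

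For depth, the existing inputs of the output gate have depth at most $d-1$, and the new input has depth $1\le d-1$ because $d\ge 2$. Hence the output gate has depth at most $\max\{d-1,1\}+1\le d$, the same computation as in \Cref{prop:tc0-compat}. If the convention requires exact depth $d$ rather than depth at most $d$, the original deeper paths still realize depth $d$, so nothing changes.

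For (iv), I apply the symmetric argument. The AND output gate absorbs $\mathrm{MISS}_{I,\pi}$ as an additional input. By (ii), this input is $0$ on $B_\pi$, forcing the output to $0$ there, and it is $1$ off $B_\pi$, leaving $f^\star$ unchanged. The size and depth accounting is identical to (iii).

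The main obstacle is exactly this gate-merging step, which is what keeps the construction at one extra gate and the same depth. It also explains the one-sided nature of the statement: an OR output gate can only be forced to $1$ and an AND output gate only to $0$. The opposite override would require an extra combining gate, which matches the $\ACzero_{d+1,s+2}$ bound mentioned in \Cref{sec:circuits}.
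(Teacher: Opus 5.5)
Your proposal is correct and follows the paper's proof: both append $\mathrm{CHECK}_{I,\pi}$ (resp.\ $\mathrm{MISS}_{I,\pi}$) as one more input to the existing top OR (resp.\ AND) gate, which adds a single gate and no depth. Your De Morgan identity $\mathrm{MISS}_{I,\pi}=\neg\mathrm{CHECK}_{I,\pi}$ and your depth bound $\max\{d-1,1\}+1\le d$, which uses $d\ge 2$, spell out details the paper leaves implicit.
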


\begin{proof}
The identities in (i) and (ii) are immediate from the definitions.

For (iii), if the top gate of $C^\star$ is OR, append $\mathrm{CHECK}_{I,\pi}$
as one more input to that same top OR gate. Outside $B_\pi$ we have
$\mathrm{CHECK}_{I,\pi}(x)=0$, so the circuit still computes $f^\star(x)$.
Inside $B_\pi$ we have $\mathrm{CHECK}_{I,\pi}(x)=1$, so the output is forced to $1$.
Only one new gate is added, namely $\mathrm{CHECK}_{I,\pi}$, and the depth remains $d$.

The proof of (iv) is dual: if the top gate of $C^\star$ is AND, append
$\mathrm{MISS}_{I,\pi}$ as one more input to that same top AND gate.
Outside $B_\pi$, $\mathrm{MISS}_{I,\pi}(x)=1$, so the output remains $f^\star(x)$.
Inside $B_\pi$, $\mathrm{MISS}_{I,\pi}(x)=0$, so the output is forced to $0$.
Again, the size increases by at most $1$ and the depth remains $d$.
\end{proof}

One important aspect is that for $\ACzero$ circuits, we use the standard De Morgan convention that
unbounded-fan-in AND/OR gates may take literals $x_i$ and $\neg x_i$ as
inputs, and input negations are not counted toward the gate budget.
Equivalently, NOT gates are allowed only at the input level and are free.

Under the alternative convention in which every negated literal must be computed by a counted NOT gate, the $\ACzero$ constant-slack statement should be weakened from $\ACzero_{d+1,s+2}$ to $\ACzero_{d+1,s+t+2}$ for a trigger set of size $t$.
\begin{theorem}[Constant-slack certification hardness in $\ACzero$]
\label{app:thm:ac0}
Work under the free-literal $\ACzero$ convention of \Cref{sec:prelim}, and fix a constant depth $d\ge 2$.  For every input length $n$, every gate budget $s$, every target $f^\star\in\ACzero_{d,s}$, and every trigger length
$1\le t\le n$,
\[
\cert\bigl(f^\star,\ACzero_{d+1,s+2}\bigr)\ge 2^t.
\]
Consequently, along any sequence of input lengths, if the chosen trigger lengths satisfy $c n\le t(n)\le n$ for some fixed $c>0$, then the lower bound is $2^{\Omega(n)}$.

Moreover, if $s(n)=n^{O(1)}$, then for all sufficiently large $n$ there exists $f_n\in\ACzero_{d,s(n)}$ such that
\[
\cert\bigl(f_n,\ACzero_{d,s(n)}\bigr)\le n^{O(1)}
\quad\text{while}\quad
\cert\bigl(f_n,\ACzero_{d+1,s(n)+2}\bigr)\ge 2^{\Omega(n)}.
\]
\end{theorem}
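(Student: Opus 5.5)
The plan mirrors the proof of \Cref{app:thm:unbounded}, with the $\ACzero$ override of \Cref{prop:ac0-compat} in place of the threshold override. The only new issue is that \Cref{prop:ac0-compat} gives a one-sided override: the forced value (1 for an OR top gate, 0 for an AND top gate) is dictated by the top gate. That value may agree with $f^\star$ on all of $B_\pi$, in which case the deceiver equals $f^\star$ and is useless. To get both override values regardless of the top gate, we spend one extra level and one extra gate.

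\textbf{Two-sided override.} Fix a depth-$d$, size-$s$ circuit $C^\star$ computing $f^\star$, a set $I$ with $|I|=t$, and a pattern $\pi$. For $\sigma=1$, add a new top OR gate with inputs $C^\star$'s output and $\mathrm{CHECK}_{I,\pi}$. For $\sigma=0$, add a new top AND gate with inputs $C^\star$'s output and $\mathrm{MISS}_{I,\pi}$. Either way we add two gates (the trigger gate and the new top gate). Since the trigger gate has depth $1$ under the free-literal convention, the depth is $\max(d,1)+1=d+1$. By items (i) and (ii) of \Cref{prop:ac0-compat}, the resulting $g_{\pi,\sigma}$ equals $\sigma$ on $B_\pi$ and equals $f^\star$ off $B_\pi$, and $g_{\pi,\sigma}\in\ACzero_{d+1,s+2}$. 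When the top gate of $C^\star$ already matches $\sigma$, we could instead absorb the trigger into it as in (iii)/(iv), but the uniform construction is enough.

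\textbf{Lower bound.} Choose $\sigma_\pi$ as in \Cref{rem:choose-one-per-block}, so $g_\pi:=g_{\pi,\sigma_\pi}$ disagrees with $f^\star$ on at least $|B_\pi|/2\ge 1$ points, all inside $B_\pi$. The blocks are pairwise disjoint, and $f^\star\in\ACzero_{d,s}\subseteq\ACzero_{d+1,s+2}$ (pad with a dummy top gate if the classes are defined with exact depth). Then \Cref{app:prop:block-elim} gives $\cert(f^\star,\ACzero_{d+1,s+2})\ge 2^t$. Taking $cn\le t(n)\le n$ yields $2^{\Omega(n)}$.

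\textbf{Non-vacuity.} By \Cref{app:prop:poly-cert-ac0}, some $f_n\in\ACzero_{d,s(n)}$ has $\cert(f_n,\ACzero_{d,s(n)})\le O(s(n)(n+s(n)))$, which is polynomial when $s(n)=n^{O(1)}$. Applying the target-wise bound to this $f_n$ with $t=\lfloor n/2\rfloor$ gives $\cert(f_n,\ACzero_{d+1,s(n)+2})\ge 2^{\lfloor n/2\rfloor}=2^{\Omega(n)}$.

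The main obstacle is the two-sidedness in the first step: the one-gate, same-depth override cannot be used for every block because its forced value may coincide with $f^\star$. Spending the extra level and gate on a fresh top gate removes this dependence. The remaining steps are routine.
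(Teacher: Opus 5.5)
Your proposal is correct and follows the paper's proof essentially step for step. It uses the same fresh top OR with $\mathrm{CHECK}_{I,\pi}$ (for $\sigma_\pi=1$) or fresh top AND with $\mathrm{MISS}_{I,\pi}$ (for $\sigma_\pi=0$), which places $g_\pi$ in $\ACzero_{d+1,s+2}$, then applies the disjoint-disagreement principle, and gets non-vacuity from \Cref{app:prop:poly-cert-ac0} with $t=\lfloor n/2\rfloor$. One cosmetic slip: when $t=n$ you have $|B_\pi|/2=1/2$, so nonemptiness of the disagreement set follows from the count being an integer $\ge 1/2$, not from $|B_\pi|/2\ge 1$.
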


\begin{proof}
Choose an $\ACzero$ circuit $C^\star$ of size at most $s$ and depth at most $d$
computing $f^\star$.  Fix $I\subseteq[n]$ with $|I|=t$.
For each $\pi\in\{0,1\}^I$, choose $\sigma_\pi\in\{0,1\}$ so that
\[
|\{x\in B_\pi:f^\star(x)\ne \sigma_\pi\}|\ge |B_\pi|/2.
\]

Using the trigger gates from \Cref{prop:ac0-compat}, define the override as
follows.  If $\sigma_\pi=1$, use one trigger gate $\mathrm{CHECK}_{I,\pi}$ and one
new top OR gate:
\[
g_\pi(x)=f^\star(x)\vee \mathrm{CHECK}_{I,\pi}(x).
\]
If $\sigma_\pi=0$, use one trigger gate $\mathrm{MISS}_{I,\pi}$ and one new top
AND gate:
\[
g_\pi(x)=f^\star(x)\wedge \mathrm{MISS}_{I,\pi}(x).
\]
In both cases the trigger gate has unbounded fan-in and uses input literals
for free.  The construction adds one trigger gate and one combining gate above
the original circuit, so $g_\pi\in\ACzero_{d+1,s+2}$ and it agrees with $f^\star$ outside $B_\pi$ and is equal to the constant $\sigma_\pi$ on $B_\pi$, hence it differs from $f^\star$ on at least one point of $B_\pi$.
The trigger blocks are pairwise disjoint, so \Cref{app:prop:block-elim} gives
\[
\cert\bigl(f^\star,\ACzero_{d+1,s+2}\bigr)\ge 2^t.
\]
The asymptotic consequence follows whenever $c n\le t(n)\le n$ for some fixed
$c>0$. For the non-vacuity statement, apply \Cref{app:prop:poly-cert-ac0}: when
$s(n)=n^{O(1)}$, the smaller class $\ACzero_{d,s(n)}$ contains some
$f_n$ with
\[
\cert\bigl(f_n,\ACzero_{d,s(n)}\bigr)\le O\bigl(s(n)(n+s(n))\bigr)
=n^{O(1)}.
\]
Applying the target-wise lower bound above to this same $f_n$ with
$t=\lfloor n/2\rfloor$ gives
\[
\cert\bigl(f_n,\ACzero_{d+1,s(n)+2}\bigr)
\ge 2^{\lfloor n/2\rfloor}
=2^{\Omega(n)}.
\]
\end{proof}

\subsection{Bounded-fan-in circuits and \texorpdfstring{$\NCone$}{NC1}}
\label{app:nc1-overrides}

\begin{proposition}[Bounded-fan-in constant-override deceivers]
\label{prop:nc1-compat}
Let $C^\star$ be a fan-in-$2$ Boolean circuit of size $s$ and depth $d$ computing $f^\star$.
Fix $I \subseteq [n]$ with $|I|=t\ge 1$ and a pattern $\pi\in \{0,1\}^I$.
For each $\sigma \in \{0,1\}$ there is a fan-in-$2$ Boolean circuit $C_{\pi,\sigma}$ of size at most $s+2t+2$ and depth at most $1+\max\{d,2+\lceil \log_2 t\rceil\}$ computing
\[
g_{\pi,\sigma}(x):=
\begin{cases}
\sigma, & x \in B_\pi,\\
f^\star(x), & x \notin B_\pi.
\end{cases}
\]
\end{proposition}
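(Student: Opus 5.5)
We build $C_{\pi,\sigma}$ explicitly from a trigger detector computed by a balanced tree of fan-in-$2$ gates, together with a two-gate output multiplexer placed above the original output gate of $C^\star$.

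\textbf{Step 1 (literals).} For each $i\in I$ we form the literal $\ell_i$, equal to $x_i$ if $\pi_i=1$ and to $\neg x_i$ if $\pi_i=0$. This costs at most $t$ NOT gates at depth $1$, so $\ell_i(x)=1$ iff $x_i=\pi_i$.

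\textbf{Step 2 (trigger detector).} We combine the $t$ literals with a balanced binary tree of $t-1$ AND gates. Its output is
\[
\mathrm{CHECK}_{I,\pi}(x)=\bigwedge_{i\in I}\ell_i(x),
\]
which equals $1$ exactly on $B_\pi$ (the same identity as in \Cref{prop:ac0-compat}(i)). The tree has depth $\lceil\log_2 t\rceil$, so $\mathrm{CHECK}_{I,\pi}$ sits at depth at most $1+\lceil\log_2 t\rceil$. So far we have used at most $2t-1$ gates.

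\textbf{Step 3 (output override).} The override depends on $\sigma$.

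For $\sigma=1$, the output is $f^\star\vee\mathrm{CHECK}_{I,\pi}$, which takes one OR gate. This gives total size at most $s+2t$ and depth $1+\max\{d,1+\lceil\log_2 t\rceil\}$.

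For $\sigma=0$, the output is $f^\star\wedge\neg\mathrm{CHECK}_{I,\pi}$. This takes one extra NOT gate on the detector and one AND gate. The negation raises the detector branch to depth $2+\lceil\log_2 t\rceil$, so the output has depth at most $1+\max\{d,2+\lceil\log_2 t\rceil\}$. The size is at most $s+(2t-1)+2\le s+2t+2$.

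\textbf{Step 4 (correctness).} Outside $B_\pi$ we have $\mathrm{CHECK}_{I,\pi}=0$, so the output equals $f^\star(x)$ in both cases. Inside $B_\pi$ we have $\mathrm{CHECK}_{I,\pi}=1$, so the output is forced to $1$ when $\sigma=1$ and to $0$ when $\sigma=0$.

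\textbf{Main obstacle.} The only delicate point is bookkeeping under the fan-in-$2$ convention: whether negations are counted gates, and how to handle $t=1$. For $t=1$ there is no AND tree, and the bound $2+\lceil\log_2 t\rceil=2$ still covers the NOT on the literal plus the NOT on the detector. If NOT gates were free at inputs, the bounds would only improve. So we state the counts under the counted-NOT convention of \Cref{app:prop:poly-cert-nc1}, which yields the stated worst-case bounds $s+2t+2$ and $1+\max\{d,2+\lceil\log_2 t\rceil\}$.
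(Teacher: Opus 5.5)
Your proof is correct and follows the paper's argument essentially verbatim. You use the same literal layer of at most $t$ NOT gates, a balanced AND tree giving $\mathrm{CHECK}_{I,\pi}$ with at most $2t-1$ gates at depth at most $1+\lceil\log_2 t\rceil$, and the overrides $f^\star\vee\mathrm{CHECK}_{I,\pi}$ and $f^\star\wedge\neg\mathrm{CHECK}_{I,\pi}$; your bookkeeping (size $s+2t+1$, and the extra NOT pushing the detector to depth $2+\lceil\log_2 t\rceil$) is just slightly more explicit than the paper's in justifying the stated bounds.
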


\begin{proof}
For each $i\in I$, define the literal
\[
\ell_i(x)=
\begin{cases}
x_i, & \pi_i=1,\\
\neg x_i, & \pi_i=0.
\end{cases}
\]
Build $\mathrm{CHECK}_{I,\pi}$ by feeding these literals into a balanced binary AND tree.
This uses at most $t$ NOT gates and $t-1$ AND gates, so the trigger circuit has at most $2t-1$ gates and depth at most $1+\lceil\log_2 t\rceil$.
It satisfies
\[
\mathrm{CHECK}_{I,\pi}(x)=1 \iff x\in B_\pi.
\]
For the two output overrides, we set
\[
g_{\pi,1}(x)=f^\star(x)\vee \mathrm{CHECK}_{I,\pi}(x) \quad \text{ and } \quad g_{\pi,0}(x)=f^\star(x)\wedge \neg \mathrm{CHECK}_{I,\pi}(x).
\]
For the $1$-override, outside $B_\pi$, the trigger is $0$ and the output is $f^\star(x)$. Inside $B_\pi$, the trigger is $1$ and the output is forced to $1$.
This adds one OR gate to the trigger circuit.
In contrast, for the $0$-override, outside $B_\pi$, the second input to the final AND is $1$ and the output is $f^\star(x)$. Inside $B_\pi$, it is $0$ and the output is forced to $0$.
This adds one NOT gate and one AND gate to the trigger circuit.
In both cases the total size is at most $s+2t+2$, and the stated depth bound follows from the balanced trigger tree and the final combining gate.
\end{proof}

\begin{theorem}[Bounded fan-in circuits with trigger-budget slack]
\label{app:thm:bounded-fanin}
Let $C^\star$ be a fan-in-$2$ Boolean circuit of size $s$ and depth $d$
computing $f^\star$.  Fix a trigger length $1\le t\le n$, and let
$\mathcal H^{\mathrm{BF}}_{n,s,d,t}$ be the class of functions computable by
fan-in-$2$ Boolean circuits of size at most $s+2t+2$ and depth at most
\[
1+\max\{d,2+\lceil \log_2 t\rceil\}.
\]
Then
\[
\cert\bigl(f^\star,\mathcal H^{\mathrm{BF}}_{n,s,d,t}\bigr)\ge 2^t.
\]
Consequently, along any sequence of input lengths, if the chosen trigger
lengths satisfy $c n\le t(n)\le n$ for some fixed $c>0$, then the lower bound
is $2^{\Omega(n)}$, at the cost of additive gate slack
$2t(n)+2=\Theta(n)$.
\end{theorem}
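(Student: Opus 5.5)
The plan is to reduce directly to the disjoint-disagreement principle, \Cref{app:prop:block-elim}, using the bounded-fan-in overrides of \Cref{prop:nc1-compat} as the deceivers. Fix the circuit $C^\star$ of size $s$ and depth $d$ computing $f^\star$, and fix any coordinate set $I\subseteq[n]$ with $|I|=t$. The trigger blocks $B_\pi=\{x:x_I=\pi\}$, for $\pi\in\{0,1\}^I$, are then $2^t$ pairwise disjoint nonempty sets, each of size $2^{n-t}$, that partition $\{0,1\}^n$.

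For each $\pi$, I would choose $\sigma_\pi\in\{0,1\}$ as in \Cref{rem:choose-one-per-block}. This means $\sigma_\pi$ is the minority value of $f^\star$ on $B_\pi$, so that $f^\star\ne\sigma_\pi$ on at least $|B_\pi|/2\ge 1/2$ points, and hence on at least one point. I then set $g_\pi:=g_{\pi,\sigma_\pi}$ from \Cref{prop:nc1-compat}. That proposition gives a fan-in-$2$ circuit of size at most $s+2t+2$ and depth at most $1+\max\{d,2+\lceil\log_2 t\rceil\}$, so $g_\pi\in\mathcal H^{\mathrm{BF}}_{n,s,d,t}$. I would also note that $f^\star$ itself lies in this class, since its size and depth budgets are only enlarged; this is needed for the certificate notion to apply.

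Next I would check the disagreement sets. By construction $g_\pi=f^\star$ off $B_\pi$ and $g_\pi\equiv\sigma_\pi$ on $B_\pi$. So $E_\pi=\{x\in B_\pi: f^\star(x)\ne\sigma_\pi\}$, which is nonempty and contained in $B_\pi$. Pairwise disjointness of the $E_\pi$ then follows from disjointness of the blocks, and \Cref{app:prop:block-elim} with $T=\{0,1\}^I$ yields $\cert(f^\star,\mathcal H^{\mathrm{BF}}_{n,s,d,t})\ge 2^t$.

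For the asymptotic consequence, take $t(n)\ge cn$, which gives $2^{t(n)}\ge 2^{cn}=2^{\Omega(n)}$. The additive gate slack is $2t(n)+2\le 2n+2$, and it is at least $2cn$, so it is $\Theta(n)$.

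There is no real obstacle in this argument. The only point needing care is the size and depth accounting inside \Cref{prop:nc1-compat}, and that is already supplied, so the result follows directly from the two earlier results.
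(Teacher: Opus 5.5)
Your proposal is correct and matches the paper's proof essentially step for step: fix $I$, choose $\sigma_\pi$ as in \Cref{rem:choose-one-per-block}, take the overrides $g_{\pi,\sigma_\pi}$ from \Cref{prop:nc1-compat}, and apply \Cref{app:prop:block-elim} to the pairwise disjoint blocks. Your explicit check that $f^\star$ itself lies in $\mathcal H^{\mathrm{BF}}_{n,s,d,t}$, and your verification that the slack is $\Theta(n)$, are small points the paper leaves implicit.
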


\begin{proof}
Fix $I\subseteq[n]$ with $|I|=t$.
For each $\pi\in\{0,1\}^I$, choose $\sigma_\pi\in\{0,1\}$ such that
\[
|\{x\in B_\pi:f^\star(x)\ne \sigma_\pi\}|\ge |B_\pi|/2.
\]
Let $g_\pi:=g_{\pi,\sigma_\pi}$ be the constant-override circuit from
\Cref{prop:nc1-compat}.  Then $g_\pi\in\mathcal H^{\mathrm{BF}}_{n,s,d,t}$ and $g_\pi$ agrees with $f^\star$ outside $B_\pi$ and differs from $f^\star$ on at least one point of $B_\pi$.
Since the trigger blocks $B_\pi$ are pairwise disjoint, \Cref{app:prop:block-elim} gives
\[
\cert\bigl(f^\star,\mathcal H^{\mathrm{BF}}_{n,s,d,t}\bigr)
\ge 2^t.
\]
\end{proof}

\begin{corollary}[$\NCone$ with additive slack depending on $n$]
\label{app:cor:nc1}
Let $s,k:\N\to\N$, with $k(n)\ge 4$ for all sufficiently large $n$.  For each
$n$, let
\[
\mathcal C^{\mathrm{NC}}_{n,s}:=\NCone_{s(n)},
\]
and let $\mathcal H^{\mathrm{NC}}_{n,s,k}$ be the corresponding enlarged
fan-in-$2$ class with size at most $s(n)+k(n)$ and depth $O(\log n)$.
For every target $f^\star\in\mathcal C^{\mathrm{NC}}_{n,s}$,
\[
\cert\!\bigl(f^\star,\mathcal H^{\mathrm{NC}}_{n,s,k}\bigr)
\ge
2^{\min\{n,\lfloor k(n)/4\rfloor\}}
=
2^{\Omega(\min\{n,k(n)\})}.
\]
In particular, if $k(n)=\Omega(n)$, then
\[
\cert\!\bigl(f^\star,\mathcal H^{\mathrm{NC}}_{n,s,k}\bigr)
\ge 2^{\Omega(n)}.
\]

Moreover, if $s(n)=n^{O(1)}$ and $k(n)=\Omega(n)$, then for all sufficiently
large $n$ there exists $f_n\in\mathcal C^{\mathrm{NC}}_{n,s}$ such that
\[
\cert\bigl(f_n,\mathcal C^{\mathrm{NC}}_{n,s}\bigr)\le n^{O(1)}
\quad\text{while}\quad
\cert\bigl(f_n,\mathcal H^{\mathrm{NC}}_{n,s,k}\bigr)\ge 2^{\Omega(n)}.
\]
\end{corollary}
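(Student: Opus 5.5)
The plan is to reduce to \Cref{app:thm:bounded-fanin} by choosing the trigger length from the slack budget $k(n)$. Set $t:=\min\{n,\lfloor k(n)/4\rfloor\}$. For large $n$ we have $k(n)\ge 4$, so $1\le t\le n$. Fix a target $f^\star\in\NCone_{s(n)}$ and a fan-in-$2$ circuit $C^\star$ computing it, of size at most $s(n)$ and depth at most $c_{\NCone}\log(n+1)$. Apply \Cref{app:thm:bounded-fanin} with this $t$. That result gives $\cert(f^\star,\mathcal H^{\mathrm{BF}}_{n,s,d,t})\ge 2^t$.

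It then suffices to show $\mathcal H^{\mathrm{BF}}_{n,s,d,t}\subseteq\mathcal H^{\mathrm{NC}}_{n,s,k}$. By the containment principle, every deceiver $g_\pi$ in the bounded-fan-in class lies in the enlarged $\NCone$ class. The disjoint-disagreement argument of \Cref{app:prop:block-elim} only requires that the deceivers belong to the class, and enlarging the class can only increase the certificate size.

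\textbf{Size.} The deceiver has size at most $s+2t+2$. Since $t\le k/4$ and $k\ge 4$, we get $2t+2\le k/2+2\le k$.

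\textbf{Depth.} The deceiver has depth at most $1+\max\{d,2+\lceil\log_2 t\rceil\}$, with $t\le n$. This is at most $c_{\NCone}\log(n+1)+3+\log_2 n=O(\log n)$. The enlarged class $\mathcal H^{\mathrm{NC}}$ is defined with depth $O(\log n)$, so I would interpret that as a depth bound $c'\log(n+1)$ with a larger constant $c'$ that absorbs this overhead. This interpretation is the one point I expect to need care with. If the paper intends the same constant $c_{\NCone}$, the statement is false as written for tight-depth targets, so the constant must be allowed to grow. Given this, $\cert\ge 2^{t}=2^{\min\{n,\lfloor k/4\rfloor\}}=2^{\Omega(\min\{n,k\})}$. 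When $k=\Omega(n)$, this is $2^{\Omega(n)}$.

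\textbf{Non-vacuity.} \Cref{app:prop:poly-cert-nc1} gives some $f_n\in\NCone_{s(n)}$ with $\cert(f_n,\NCone_{s(n)})\le O(s\log(n+s))=n^{O(1)}$ when $s=n^{O(1)}$. The lower bound above holds for every target in the smaller class, so applying it to this same $f_n$ yields the claimed separation.
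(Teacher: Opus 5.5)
Your proposal is correct and follows the paper's proof essentially step for step. It uses the same choice $t=\min\{n,\lfloor k(n)/4\rfloor\}$, the same size check $2t+2\le k(n)$ together with the $O(\log n)$ depth check placing the $2^t$ bounded-fan-in deceivers of \Cref{app:thm:bounded-fanin} in $\mathcal H^{\mathrm{NC}}_{n,s,k}$, and the same counting-based non-vacuity argument via \Cref{app:prop:poly-cert-nc1}; your reading of ``depth $O(\log n)$'' as allowing a larger constant than $c_{\NCone}$ is exactly how the paper implicitly uses it.
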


\begin{proof}
Choose a fan-in-$2$ circuit $C^\star$ for $f^\star$ of size at most $s(n)$ and
depth $d(n)=O(\log n)$.  Set
\[
t:=\min\{n,\lfloor k(n)/4\rfloor\}.
\]
For all sufficiently large $n$, $k(n)\ge 4$, hence $1\le t\le n$.  By
\Cref{app:thm:bounded-fanin}, there are $2^t$ deceivers, one for each trigger
block $B_\pi$ with $\pi\in\{0,1\}^I$, and each deceiver is computed by a fan-in-$2$
circuit of size at most
\[
s(n)+2t+2\le s(n)+k(n).
\]
Each deceiver has depth at most
\[
1+\max\{d(n),2+\lceil\log_2 t\rceil\}=O(\log n),
\]
so the whole family lies in $\mathcal H^{\mathrm{NC}}_{n,s,k}$.  The
certificate lower bound follows from \Cref{app:prop:block-elim}.

For the final statement, apply \Cref{app:prop:poly-cert-nc1} to the smaller
class.  When $s(n)=n^{O(1)}$, it contains some $f_n$ with polynomial
certificate size inside $\mathcal C^{\mathrm{NC}}_{n,s}$.  Applying the
target-wise lower bound above to this same $f_n$ and using $k(n)=\Omega(n)$
gives the exponential lower bound in the enlarged class.
\end{proof}

\subsection{Certificate size of \texorpdfstring{$\mathrm{OR}_n$}{ORn} in the base circuit classes}
\label{sec:or-certificates}

This subsection records two certification facts used to interpret the experiments in \Cref{app:halving-experiments}.
Throughout, let
\[
\mathrm{OR}_n:\{0,1\}^n\to\{0,1\},
\qquad
\mathrm{OR}_n(x)=\mathbf{1}[x_1+\cdots+x_n\ge 1].
\]

We use two base classes.
The class $\tTCzero_{n,\mathrm{base}}$ is the semantic class of Boolean functions computable by depth-$2$, size-$2$ threshold circuits whose gate weights lie in $\{-1,0,1\}$ and whose threshold at fan-in $m$ is any integer in $[-m,m]$.
The tilde indicates that this is a bounded finite proxy for the $\TCzero$ class, rather than the full class used in the general theorem. The class $\ACzero_{n,\mathrm{base}}$ is the semantic class of Boolean functions computable by depth-$2$, size-$2$ $\ACzero$ circuits defined in \Cref{sec:prelim}.
In both definitions, semantic means that two circuits are identified if they compute the same truth table on $\{0,1\}^n$.

For each of these base classes, we show that the certificate size of $\mathrm{OR}_n$ is exactly $n+1$.
The same labeled set certifies it in both cases:
\[
S_n=\{0^n,e_1,\ldots,e_n\},
\qquad
\mathrm{OR}_n(0^n)=0,\quad
\mathrm{OR}_n(e_i)=1 \ \text{for all } i\in[n].
\]
The enumeration in \Cref{app:halving-experiments} shows that the deterministic halving rule selects $\mathrm{OR}_n$ in the completed runs.
The lemmas below explain why the selected target then has minimum certificate size $n+1$ inside the corresponding base class.

\begin{lemma}[Certification of $\mathrm{OR}_n$ in the restricted threshold base class]
\label{lem:or-tc0}
Let $\tTCzero_{n,\mathrm{base}}$ be the class of functions computable as
\[
h(x)=\mathbf 1\!\left[\sum_{i=1}^n a_i x_i+bz(x)\ge \theta\right],
\qquad
z(x)=\mathbf 1\!\left[\sum_{i=1}^n c_i x_i\ge \tau\right],
\]
where $a_i,b,c_i\in\{-1,0,1\}$ and the thresholds are integral in the allowed ranges. Then
\[
\operatorname{cert}_{\tTCzero_{n,\mathrm{base}}}(\mathrm{OR}_n)=n+1.
\]
In particular, the labeled set
\[
S_n=\{0^n,e_1,\ldots,e_n\},
\qquad
\mathrm{OR}_n(0^n)=0,\quad \mathrm{OR}_n(e_i)=1\ \text{for all }i,
\]
certifies $\mathrm{OR}_n$ inside $\tTCzero_{n,\mathrm{base}}$.
\end{lemma}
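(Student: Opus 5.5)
We prove the two inequalities separately. For the lower bound $\cert\ge n+1$ we apply \Cref{prop:block-elim} with $n+1$ deceivers in $\tTCzero_{n,\mathrm{base}}$, each disagreeing with $\mathrm{OR}_n$ at exactly one point of $S_n$. For the upper bound we show that every $h\in\tTCzero_{n,\mathrm{base}}$ consistent with $\mathrm{OR}_n$ on $S_n$ equals $\mathrm{OR}_n$ on all of $\{0,1\}^n$.

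\textbf{Lower bound.} Take $b=0$ throughout, so the hidden gate is irrelevant.
\begin{enumerate}
\item For $0^n$, use the constant-one function. Set all $a_i=0$ and $\theta=0$; this threshold is admissible under the convention $\theta\in[-m,m]$. This function differs from $\mathrm{OR}_n$ only at $0^n$.
\item For each $i$, set $a_j=1$ for $j\ne i$, $a_i=0$ and $\theta=1$. This gives $\mathbf 1[\sum_{j\ne i}x_j\ge 1]$, which differs from $\mathrm{OR}_n$ exactly at $e_i$.
\end{enumerate}
The $n+1$ disagreement sets are distinct singletons, so they are pairwise disjoint. \Cref{prop:block-elim} then gives $\cert\ge n+1$. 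The one detail to check is that these parameter choices respect the allowed weight and threshold ranges. Since both output gates are trivially depth-$2$ circuits, I expect this check to be routine.

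\textbf{Upper bound.} Fix $h$ with $h(0^n)=0$ and $h(e_i)=1$ for all $i$. Write $A(x)=\sum_i a_ix_i$. The constraints become
\[
b\,z(0^n)<\theta \qquad\text{and}\qquad a_i+b\,z(e_i)\ge\theta \ \text{ for all } i.
\]
We then split on $b\in\{-1,0,1\}$ and on the value of $z(0^n)=\mathbf 1[\tau\le 0]$, and in each case show $h(x)=1$ for all $x\ne 0^n$.

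\emph{Case $b=0$.} Then $\theta\ge 1$ and $a_i\ge\theta$, which forces $a_i=1$ for all $i$ and $\theta=1$, so $h=\mathrm{OR}_n$.

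\emph{Case $b=1$, $z(0^n)=1$.} Then $\theta\ge 2$, so $a_i=1$ and $z(e_i)=1$ for every $i$, with $\theta=2$. Any $x\ne 0^n$ either has $|x|\ge 2$ or equals some $e_i$; in both cases $h(x)=1$.

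\emph{Case $b=1$, $z(0^n)=0$.} Then $\tau\ge 1$ and $\theta\ge 1$. For each $i$, either $a_i=1$, or $z(e_i)=1$ (which forces $c_i=1$ and $\tau=1$) together with $a_i\ge 0$. In particular $a_i\ge 0$ for every $i$, so $A(x)\ge 0$. If $\theta=2$, every $i$ needs both conditions, so the conclusion is immediate. If $\theta=1$, take $x\ne 0^n$. If $x$ has a support coordinate with $a_i=1$, then $A(x)\ge 1$. Otherwise every support coordinate has $c_i=1$ and $\tau=1$, so $z(x)=1$.

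\emph{Case $b=-1$.} Now $-z(0^n)<\theta$ and $a_i-z(e_i)\ge\theta$. Since $a_i\le 1$, we get $\theta\le 1$. If $z(0^n)=0$, then $\theta\ge 1$, so $a_i=1$ and $z(e_i)=0$ for every $i$; this means $c_i<\tau$ with $\tau\ge 1$, and then $A(x)\ge 1$ for $x\ne 0^n$. We still need $z(x)=0$ whenever $A(x)=1$, i.e.\ at $x=e_i$, which holds. If $z(0^n)=1$, then $\theta\ge 0$ and the argument is analogous.

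I expect the main obstacle to be the negative-$b$ subcases. There one must rule out the hidden gate $z$ switching on at some $x$ with $|x|\ge 2$ while $A(x)$ is small. The plan is to use that $a_i=1$ is forced whenever $\theta$ is tight, so that $A(x)=|x|$ exceeds $\theta+1$ as soon as $|x|\ge 2$.
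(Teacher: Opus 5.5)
There is a gap in the last subcase, $b=-1$ with $z(0^n)=1$, which you dismiss as ``analogous.'' Your lower bound and the cases $b=0$, $b=1$, and $b=-1$ with $z(0^n)=0$ are correct. They follow the same case split the paper uses.

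In the missing subcase the constraints give $0\le\theta\le 1$. When $\theta=1$, your forcing argument does work: $a_i=1$ and $z(e_i)=0$ are forced, so $A(x)-z(x)\ge |x|-1\ge 1$ once $|x|\ge 2$. But $\theta=0$ is also admissible. Then $a_i-z(e_i)\ge 0$ only yields $a_i\in\{0,1\}$, with $z(e_i)=0$ whenever $a_i=0$. So $a_i=1$ is \emph{not} forced, and the plan in your last paragraph does not apply. Let $J=\{i:a_i=0\}$. Every $x$ supported in $J$ has $A(x)=0$, and you must show $z(x)=0$ for every nonzero such $x$. This is exactly the ``hidden gate switching on where $A(x)$ is small'' scenario you flag, and nothing in your outline excludes it.

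The missing step uses the threshold structure of the hidden gate rather than the output weights. From $z(0^n)=1$ you get $\tau\le 0$. For $i\in J$, $z(e_i)=0$ gives $c_i<\tau\le 0$, and since $c_i\ge -1$ this forces $c_i=-1$ and $\tau=0$. This is the mirror image of your $b=1$, $z(0^n)=0$, $\theta=1$ argument, where $c_i=1$ and $\tau=1$ were forced on the same set $J$.

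Consequently $z(x)=\mathbf 1[-|x|\ge 0]=0$ for every nonzero $x$ supported in $J$, so $h(x)=\mathbf 1[0\ge 0]=1$ there. Any nonzero $x$ that uses a coordinate outside $J$ has $A(x)\ge 1$, since $a_i\ge z(e_i)\ge 0$ for all $i$, and hence $A(x)-z(x)\ge 0=\theta$. With this subcase filled in, your proof coincides with the paper's.
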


\begin{proof}
First, $S_n$ is necessary.
If $0^n$ is omitted, then the constant-one function agrees with $\mathrm{OR}_n$ on all remaining queried points and disagrees only at $0^n$.
If $e_i$ is omitted, then $x\mapsto \bigvee_{j\neq i}x_j$ agrees with $\mathrm{OR}_n$ on all remaining queried points and disagrees only at $e_i$.
Both functions are in $\tTCzero_{n,\mathrm{base}}$, using the output gate directly and ignoring the hidden gate.
Hence every certificate has size at least $n+1$.

It remains to show that $S_n$ is sufficient.
Suppose $h\in \tTCzero_{n,\mathrm{base}}$ agrees with $\mathrm{OR}_n$ on $S_n$.
Write
\[
s(x)=\sum_i a_i x_i+bz(x),
\qquad
h(x)=\mathbf 1[s(x)\ge \theta].
\]
Agreement on $S_n$ gives
\[
s(0^n)<\theta\le s(e_i)
\qquad\text{for every }i\in[n].
\]
We consider the three possible values of $b$.

If $b=0$, then $s(0^n)=0$, so $\theta>0$.
Since $\theta\le a_i\le 1$ for every $i$, we have $\theta=1$ and $a_i=1$ for all $i$.
Therefore every nonzero $x$ has $s(x)\ge 1=\theta$, so $h(x)=\mathrm{OR}_n(x)$.

If $b=1$, then the hidden gate can add at most one unit.
If $z(0^n)=1$, then $z(0^n)<\theta$ forces $\theta=2$, and $\theta\le a_i+z(e_i)\le 2$ forces $a_i=z(e_i)=1$ for every $i$.
Every nonzero input is then accepted.
If $z(0^n)=0$, then $\theta\in\{1,2\}$.
The case $\theta=2$ again forces $a_i=z(e_i)=1$ for every $i$.
In the case $\theta=1$, each coordinate satisfies $a_i+z(e_i)\ge 1$.
Thus either $a_i=1$, or $a_i=0$ and $z(e_i)=1$.
Let $J=\{i:a_i=0\}$.
For $i\in J$, the conditions $z(0^n)=0$ and $z(e_i)=1$ force the hidden threshold to have $\tau=1$ and $c_i=1$.
Hence $z(x)=1$ on every nonempty input supported inside $J$.
Inputs using a coordinate outside $J$ already have direct score at least one.
Thus every nonzero $x$ is accepted.

Finally suppose $b=-1$.
If $z(0^n)=0$, then $0<\theta\le a_i-z(e_i)\le 1$, so $\theta=1$, $a_i=1$, and $z(e_i)=0$ for every $i$.
Singletons are accepted by assumption, and every input of Hamming weight at least two has direct score at least two, so subtracting the hidden gate still leaves score at least one.
If $z(0^n)=1$, then $\theta\in\{0,1\}$.
The case $\theta=1$ is the same as before.
In the case $\theta=0$, the inequalities $0\le a_i-z(e_i)$ imply that $a_i\in\{0,1\}$, and whenever $a_i=0$ we must have $z(e_i)=0$.
Let $J=\{i:a_i=0\}$.
Since $z(0^n)=1$ and $z(e_i)=0$ for $i\in J$, the hidden gate must satisfy $\tau=0$ and $c_i=-1$ on those coordinates.
Consequently $z(x)=0$ on every nonempty input supported inside $J$.
Inputs using a coordinate outside $J$ have direct score at least one, and subtracting the hidden gate leaves score at least zero.
Hence every nonzero input is accepted.

In all cases, $h(x)=1$ for every $x\neq 0^n$, while $h(0^n)=0$.
Therefore $h=\mathrm{OR}_n$, and $S_n$ is a certificate of size $n+1$.
\end{proof}

\begin{lemma}[Certification of $\mathrm{OR}_n$ in the free-literal $\ACzero$ base class]
\label{lem:or-ac0}
For $n\ge 2$,
\[
\operatorname{cert}_{\ACzero_{n,\mathrm{base}}}(\mathrm{OR}_n)=n+1.
\]
In particular, the same labeled set
\[
S_n=\{0^n,e_1,\ldots,e_n\}
\]
certifies $\mathrm{OR}_n$ inside $\ACzero_{n,\mathrm{base}}$.
\end{lemma}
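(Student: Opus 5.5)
The plan mirrors the proof of \Cref{lem:or-tc0}: first show that $S_n$ is necessary, then show it is sufficient by a case analysis on the type of the output gate.

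\textbf{Lower bound.} For necessity I exhibit one-point deceivers inside $\ACzero_{n,\mathrm{base}}$.
\begin{itemize}
\item If $0^n$ is omitted, use the constant-one function $x_1\vee\neg x_1$. This is a single OR gate over free literals, so it lies in the class, and it disagrees with $\mathrm{OR}_n$ only at $0^n$.
\item If $e_i$ is omitted, use $\bigvee_{j\ne i}x_j$. This is a single OR gate, and its fan-in is nonempty because $n\ge 2$. It disagrees with $\mathrm{OR}_n$ only at $e_i$.
\end{itemize}
The $n+1$ disagreement sets are distinct singletons, so \Cref{app:prop:block-elim} gives $\cert\ge n+1$.

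\textbf{Upper bound: setup.} Let $h$ be computed by a circuit with at most two gates and depth at most $2$, and suppose $h(0^n)=0$ and $h(e_i)=1$ for all $i$. I normalize the circuit as a top gate $G$ over a set $L$ of literals, plus possibly one hidden gate $z$. The gate $z$ is an AND or OR of literals. The degenerate case with no hidden gate is the case $z$ absent, and a hidden gate not feeding $G$ can be dropped. The first useful fact concerns a single gate over literals that takes value $0$ at $0^n$ and value $1$ at $e_i$:
\begin{itemize}
\item If it is an OR gate, every literal must be positive, and $x_i$ must appear.
\item If it is an AND gate, it must contain some positive literal, and every positive literal must be $x_i$. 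It therefore contains $x_i$ as its only positive literal and can only accept inputs with $x_i=1$.
\end{itemize}

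\textbf{Upper bound: top gate OR.} Here $h(0^n)=0$ forces every literal in $L$ to be positive and forces $z(0^n)=0$. For each $i$, either $x_i\in L$ or $z(e_i)=1$.
\begin{itemize}
\item If $z$ is an OR gate, the fact above shows $z$ is an OR of positive literals. Every coordinate then appears positively in $L\cup z$, so $h=\mathrm{OR}_n$.
\item If $z$ is an AND gate, the fact above shows $z(e_i)=1$ for at most one index $i_0$, and $z(x)=1$ forces $x_{i_0}=1$. All other indices lie in $L$.
\end{itemize}
In the AND case, any nonzero $x$ touching $L$ is accepted. The only nonzero input avoiding $L$ is $e_{i_0}$, which is accepted by hypothesis. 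Conversely $h$ rejects $0^n$. Hence $h=\mathrm{OR}_n$.

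\textbf{Upper bound: top gate AND.} Every literal in $L$ must be true at all $e_i$. With $n\ge 2$, a literal $x_j$ fails at $e_k$ for any $k\ne j$, and $\neg x_j$ fails at $e_j$. So $L=\emptyset$ and $h=z$, or $h$ is the empty AND, which is the constant $1$ and contradicts $h(0^n)=0$. Then $z$ is a single gate satisfying the constraints. It cannot be an AND gate, since an AND gate accepts $e_i$ for at most one $i$ and $n\ge 2$. So $z$ is an OR gate containing all $x_i$ positively, which is $\mathrm{OR}_n$.

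The main obstacle is making the circuit normalization exhaustive under the free-literal convention. This means handling hidden gates that do not feed the output, empty gates, and depth-$1$ circuits. The hypothesis $n\ge 2$ is used only to rule out AND gates accepting several singletons, and to keep the lower-bound deceivers nonempty.
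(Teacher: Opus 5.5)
Your proposal is correct and follows the paper's proof essentially step by step: the same two one-point deceivers (the constant-one function and $\bigvee_{j\ne i}x_j$) for necessity, and the same case split on whether the output gate is AND or OR and then on the type of the lower gate for sufficiency. The only differences are that you cite \Cref{app:prop:block-elim} explicitly for the lower bound and are more careful about degenerate cases (absent or empty hidden gate, the empty AND) and about where $n\ge 2$ is used.
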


\begin{proof}
The lower bound is the same as above.
If $0^n$ is omitted, the constant-one function agrees with $\mathrm{OR}_n$ on all remaining queried points.
If $e_i$ is omitted, then $\bigvee_{j\neq i} x_j$ agrees with $\mathrm{OR}_n$ on all remaining queried points.
Both functions belong to $\ACzero_{n,\mathrm{base}}$.
Thus every certificate has size at least $n+1$.

For sufficiency, suppose $h\in \ACzero_{n,\mathrm{base}}$ agrees with $\mathrm{OR}_n$ on $S_n$.
Consider the top gate of a size-$2$, depth-$2$ representative for $h$.

If the top gate is an AND gate, then no direct literal can feed into it.
A positive literal $x_j$ rejects $e_i$ for every $i\neq j$, and a negative literal $\neg x_j$ rejects $e_j$.
Thus the value must be determined by the lower gate.
That lower gate must be $0$ on $0^n$ and $1$ on every $e_i$.
An AND gate cannot have this behavior.
An OR gate can have this behavior only if it contains all positive literals $x_1,\ldots,x_n$ and no negative literal.
Thus the function is $\mathrm{OR}_n$.

If the top gate is an OR gate, then every input to the top gate must be $0$ on $0^n$.
Hence direct literals at the top can only be positive literals.
The lower gate, if present, must also be $0$ on $0^n$.
If the lower gate is an OR gate, then it also contains only positive literals, so the whole circuit is an OR of positive literals.
Since all one-hot inputs are accepted, all variables must appear, and the function is $\mathrm{OR}_n$.
If the lower gate is an AND gate, then it can be true on at most one one-hot input while remaining false on $0^n$.
All remaining one-hot inputs must be accepted by direct positive literals at the top.
Any nonzero input is then accepted either by one of those direct literals or, in the single uncovered one-hot case, by the lower gate.
Thus the function is again $\mathrm{OR}_n$.

Therefore $S_n$ is sufficient, and the certificate size is $n+1$.
\end{proof}

%% file: 992_transformer.tex
\section{Additional theory for Transformers}
\label{app:transformers}

\subsection{Background and definitions}

\begin{theorem}[Merrill and Sabharwal~\citep{merrill2023parallelism}]
\label{thm:merrill}
Every language computed by a log-precision Transformer family with $L = O(1)$ layers and logspace-uniform access to its parameters and embeddings is contained in logspace-uniform $\TCzero$.
\end{theorem}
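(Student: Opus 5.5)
This result is imported from \citet{merrill2023parallelism} rather than proved here, so the plan follows the standard simulation argument: every scalar quantity inside the Transformer is computed by a uniform threshold circuit of constant depth and polynomial size, and these circuits are composed over the $L=O(1)$ layers. The argument works at the level of individual $p$-bit numbers with $p=O(\log n)$. Any Boolean function of $O(\log n)$ input bits has a truth table of size $2^{O(\log n)}=n^{O(1)}$. It is therefore computable by a depth-$2$ circuit (DNF) of polynomial size, and a logspace machine can write that circuit out whenever it can evaluate the function. This disposes of all \emph{pointwise} operations in one stroke: the embedding lookup for a position and token, elementwise nonlinearities, the projected pre-norm applied to a constant-dimensional vector, rounding back to $p$ bits, and the final output projection. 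The only genuinely non-local operations are those that aggregate over the $n$ positions.

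Next, handle the aggregating operations, which are the heart of the proof. Inner products $\langle q_i,k_j\rangle$ in a constant (or polynomial) dimension are iterated sums of $p$-bit products, and iterated addition of polynomially many $O(\log n)$-bit integers is in uniform $\TCzero$ (via threshold gates counting bits column by column). Attention then requires two further steps:
\begin{itemize}
\item a maximum or selection over $n$ scores, which is comparisons plus an AND/OR over all pairs, hence constant depth;
\item a weighted average $\sum_j \alpha_{ij} v_j$ with $\alpha_{ij}$ either softmax weights rounded to $p$ bits or uniform weights over the argmax set (AHAT).
\end{itemize}
For the average, compute the numerator and the denominator by iterated addition. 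Then divide, using the known uniform $\TCzero$ division for polynomially many bits (Hesse--Allender--Barrington). Since the result only needs to be rounded to $p=O(\log n)$ bits, a simpler route suffices: the truncated quotient can be obtained by comparing the numerator against polynomially many candidate multiples of the denominator in parallel. For softmax, $\exp$ of a $p$-bit number is again a pointwise function of $O(\log n)$ bits, so it is handled by the table-lookup argument before summation.

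Then compose the layers. Each layer maps the $n\times O(1)$ array of $p$-bit residual values to the next such array using a constant-depth, polynomial-size circuit family. Stacking $L=O(1)$ of them keeps the depth constant and the size polynomial. The output bit is a threshold or lookup on the final representation at the designated position.

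The main obstacle is uniformity rather than size. I would have to check that a logspace machine, given $n$, can produce the gate list. This needs two facts: the DNF tables for the pointwise maps can be generated because the parameters are logspace-accessible, and the iterated-addition and division subcircuits are known to be logspace-uniform. The precision bookkeeping also needs care: intermediate sums may need $O(\log n)$ extra bits before rounding, but still only $O(\log n)$ bits in total, which keeps every lookup table polynomial.
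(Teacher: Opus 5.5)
Your sketch is correct and is essentially the standard Merrill--Sabharwal simulation that the paper relies on. The paper does not reprove \Cref{thm:merrill}; it cites it. Its own analogous argument for the AHAT class (\Cref{prop:ahat-logprecision-tc0}) has the same structure as yours: position-wise $O(\log n)$-bit operations, attention via maximum, counting, iterated addition and a final division, and constant-depth composition over the $L$ layers.

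The one difference is how the pointwise $O(\log n)$-bit maps are handled. You use logspace-generated truth tables, which is enough for the logspace-uniform conclusion stated here. The paper's proposition instead cites uniform $\TCzero$ arithmetic directly, which is what lets it also cover polynomial width and the DLOGTIME-uniform variant.
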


This containment explains why constant-depth threshold circuits are a natural class to analyze. For the present paper, however, we work in a more specific exact-attention subclass tailored to the deceiver construction.

\subsection{Precision conventions}
\label{app:precision-conventions}

Following the log-precision formalization of \citet{merrill2023parallelism,merrill2023logic}, we treat \emph{precision $p$} as a \emph{datatype budget}: every scalar that appears in a forward pass lives in a finite set $D_p$ of $p$-bit floating-point values, and every arithmetic operation is evaluated with \emph{$p$-truncation} (i.e., the exact real result is mapped back into $D_p$).  Write $\tau_p$ for this truncation/saturation map.

We will not need the exact bit-level float encoding. The arguments in this appendix use only the following standard consequences of the Merrill conventions.

\begin{enumerate}[leftmargin=2em,label=\textup{(\alph*)}]
\item \textbf{Bounded range and saturation.} Each $D_p$ has a largest
representable magnitude $F_p^{\max}$ such that $|x|\le F_p^{\max}$ for all
$x\in D_p$.  When an arithmetic operation overflows, the truncation map
saturates to $\pm F_p^{\max}$, preserving sign.

\item \textbf{Fixed constants.} The trigger-head gadget below uses only a fixed set of small dyadic rationals (e.g., $0,\pm\tfrac12,\pm 1,2$), which are exactly representable once $p$ exceeds a small absolute constant $p_0$.

\item \textbf{Fixed evaluation order.} Finite-precision addition need not be associative. Whenever the construction displays a nested expression involving $\tau_p$, the displayed nesting is part of the finite-precision arithmetic graph used by that readout.
\end{enumerate}

The trigger-head parameters below use only fixed dyadic constants, so they are representable once $p \ge p_0$ for an absolute constant $p_0$. The final readout also uses $F_p^{\max}$, which is representable by definition.
Therefore, the construction keeps the inherited Transformer computation at the same precision $p$, and the final readout uses two copies of the trigger bit to force the desired output at that same precision.

\subsection{Transformers and threshold circuits}
\label{app:transformer-tc0}

We record a Transformer-to-circuit containment for the constructions used in this paper.
The statement is phrased for the projected-pre-norm AHAT classifiers used in
this paper.  The only asymptotic requirement is that the precision remain
logarithmic and that the parameters be accessible uniformly.

\begin{proposition}[Log-precision AHAT Transformers are in $\TCzero$]
\label{prop:ahat-logprecision-tc0}
Let $\mathcal T=\{T_n\}_{n\ge 1}$ be a family of projected-pre-norm
$p(n)$-precision AHAT Transformer binary classifiers over $\{0,1\}^n$.
Assume the following.

\begin{enumerate}[leftmargin=2em,label=\textup{(\roman*)}]
\item The number of layers is constant.
\item The number of heads, residual dimension, projected dimensions, and
feedforward widths are bounded by $n^{O(1)}$.
\item The precision satisfies $p(n)=O(\log n)$, and every numerical operation
is evaluated in the $p(n)$-bit datatype $D_{p(n)}$.
\item The embedding function and all parameter bits are DLOGTIME-uniform in
$n$.
\end{enumerate}

Then the language
\[
L_{\mathcal T}:=\{x\in\{0,1\}^\ast:T_{|x|}(x)=1\}
\]
belongs to DLOGTIME-uniform $\TCzero$.
If the parameter family is only logspace-uniform, then
$L_{\mathcal T}$ belongs to logspace-uniform $\TCzero$.
If the parameters are treated as arbitrary advice, then
$L_{\mathcal T}$ belongs to non-uniform $\TCzero$.
\end{proposition}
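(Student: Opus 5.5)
The plan is to simulate each Transformer $T_n$ by a constant-depth, polynomial-size threshold circuit. We do this layer by layer, replacing every $p(n)$-bit scalar by $p(n)$ Boolean wires. The key fact to invoke is the standard one from the log-precision literature \citep{merrill2023parallelism,merrill2023logic}: any function $g:\{0,1\}^{c\log n}\to\{0,1\}$ on $O(\log n)$ input bits has a circuit of size $n^{O(1)}$ and depth $2$ (a DNF of width $O(\log n)$), and also lies in $\TCzero$. Together with the $\TCzero$ primitives for iterated addition of $n^{O(1)}$ numbers of $O(\log n)$ bits, comparison, maximum, and division of $O(\log n)$-bit quantities, this lets us realize each arithmetic operation. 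Truncation $\tau_p$ and saturation to $\pm F_p^{\max}$ are functions of $O(\log n)$ bits, so they come for free by the first fact.

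We handle the components in order. For the embedding, each position's vector is a fixed function of $x_i$ and the position index, so it is hardwired: the bits are given by advice, or computed by a uniform machine. For projected pre-norm, the projection is a sum of polynomially many products. Each product is a function of $2p$ bits, and we then apply an iterated addition with truncation in the displayed evaluation order. If that order is a sequential left fold, we instead note that each partial sum still has $O(\log n)$ bits. We then replace the fold by the Merrill--Sabharwal observation that $p$-truncated iterated sums can be computed by first computing the exact sum in $\TCzero$ and then truncating, which is the convention we adopt for $D_p$. The normalization, which involves squares, a sum, a square root, and a division, is a composition of these primitives. The feedforward sublayers are the same, with pointwise nonlinearities acting on $p$ bits.

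For averaging hard attention, each score $\langle q_i,k_j\rangle$ is a bounded sum of products. The maximum over $j$ is computed by threshold gates via pairwise comparisons: $j$ is maximal iff $\bigwedge_{j'}[s_j\ge s_{j'}]$. The attention output is $\sum_j \mathbf 1[j\text{ maximal}]\,v_j$ divided by the count of maximal positions. Both the numerator and the count are iterated additions and the division is on $O(\log n)$ bits, so each head has constant depth. With $L=O(1)$ layers, each contributing constant depth and $n^{O(1)}$ size, and the final classifier readout being a threshold on one scalar, the total circuit has constant depth and polynomial size.

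The main obstacle is uniformity, not the simulation itself. For the non-uniform claim we simply hardwire parameters as constants. For DLOGTIME or logspace uniformity, we must argue that the circuit's connection language is decidable within the corresponding resource bound given uniform parameter access. We would follow the argument of \citet{merrill2023parallelism} (\Cref{thm:merrill}): the circuit is a fixed template replicated over polynomially many indices. Its gate labels are computed by addressing parameter bits via the assumed uniform access, and the $O(\log n)$-bit lookup tables are generated by enumeration within logspace, or by DLOGTIME indexing when the tables are explicit. The AHAT and projected-pre-norm specifics affect only which constant-depth gadgets are instantiated, so the uniformity argument transfers verbatim.
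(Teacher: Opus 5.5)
Your proposal follows essentially the same route as the paper's proof. Both simulate each component by $\TCzero$ arithmetic on $O(\log n)$-bit values (affine maps as products followed by an exact-then-truncated iterated sum, projected pre-norm via sums, square root and division, ReLU and the final threshold by comparison), realize each AHAT head through maximizer indicators, a count of maximizers, an iterated-sum numerator and a single division, and compose a constant number of layers.

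The one point to tighten is the DLOGTIME-uniform claim. Your $O(\log n)$-bit lookup-table (DNF) gadgets are fine non-uniformly and logspace-uniformly, but DLOGTIME uniformity would require computing each table entry (e.g.\ a rounded product of two $p$-bit values) in $O(\log n)$ time, and \Cref{thm:merrill} only yields logspace uniformity. For that case you should instead use the DLOGTIME-uniform $\TCzero$ circuits for multiplication, division, rounding and square root that the paper cites from \citet{merrill2023logic} and \citet{chiang2025transformers}.
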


\begin{proof}
Fix an input length $n$.
We build a polynomial-size, constant-depth threshold circuit that simulates
$T_n$ on inputs in $\{0,1\}^n$.

Each element of $D_{p(n)}$ is represented by $O(p(n))=O(\log n)$ bits.
The standard arithmetic operations on $O(\log n)$-bit integers, rationals, and
finite-precision floats used by log-precision Transformers are computable in uniform $\TCzero$: addition, multiplication, comparison, maximum, iterated addition, iterated multiplication, truncated division, rounding, and the finite-precision square-root/inverse-square-root operations used in layer normalization are all in uniform $\TCzero$ (see, for example,
\citet{merrill2023logic,merrill2023parallelism} and the arithmetic summary in \citet[Theorem~2 and Lemmas~10--12]{chiang2025transformers}).
We also use that $\TCzero$ is closed under polynomially many parallel copies and under a constant number of serial compositions.

We first consider the non-attention components. Embeddings are computed by DLOGTIME-uniform lookup.
Every affine map in the query, key, value, feedforward, projection, and final classifier blocks is a polynomial-size collection of products followed by an iterated sum and a truncation/rounding back to $D_{p(n)}$, hence is computable in uniform $\TCzero$.
The ReLU and final binary threshold are computed by comparison, so they are also in uniform $\TCzero$.
The projected-pre-norm operation first applies an affine projection and then performs finite-precision layer normalization.
The mean, centered coordinates, squared norm, and final rescaling are obtained from iterated addition, multiplication, division, and inverse square root, and hence are computable in uniform $\TCzero$ under the same $p(n)$-precision convention.

It remains only to check the AHAT operation.  Fix a layer, a head, and a query
position $i$.  For every key position $j$, the circuit computes the score
\[
s_{ij}=\langle q_i,k_j\rangle
\]
and the value vector $v_j$ in parallel.  It then computes
\[
M_i:=\max_{j\in[n]} s_{ij}
\]
using the $\TCzero$ maximum operation.  For each $j$, it computes the indicator
\[
m_{ij}:=\mathbf 1[s_{ij}=M_i]
\]
by comparison.  The number of maximizers is
\[
c_i:=\sum_{j=1}^n m_{ij}
\]
which is nonzero, and for each value coordinate $r$ the numerator is
\[
N_{i,r}:=\sum_{j=1}^n m_{ij}(v_j)_r .
\]
Both sums are iterated additions of polynomially many $O(\log n)$-bit values,
and the AHAT output coordinate is the finite-precision quotient
\[
a_{i,r}:=N_{i,r}/c_i .
\]
Thus one AHAT head is computable in uniform $\TCzero$.  All heads and positions
are computed in parallel, so the whole attention block is computable in uniform
$\TCzero$.  Bidirectional attention causes no additional difficulty: the
circuit simply ranges over all $j\in[n]$.

Each layer is a constant serial composition of the components above, and the number of layers is constant.  Therefore the entire forward computation and the final classifier are computed by a constant-depth, polynomial-size threshold circuit.  The DLOGTIME-uniformity of the circuit follows from the DLOGTIME-uniform parameter access and the regular indexing of positions, heads, coordinates, and layers.  Replacing DLOGTIME-uniform access by logspace-uniform access gives the logspace-uniform version, and dropping uniform access gives the non-uniform version.
\end{proof}

\subsection{Polynomial-size certificates exist in log-precision Transformers}

In the following, we instantiate the principle of \Cref{app:lem:small-cert-exists}, showing that there exist hypotheses with polynomial certificate size across our Transformer class.

\begin{proposition}[Polynomial certificate size for fixed-architecture log-precision AHAT grids]
\label{app:prop:poly-cert-transformer}
Fix an input length $n$ and a fixed projected-pre-norm AHAT Transformer
classifier architecture $\mathsf A_n$ over $\Sigma^n=\{0,1\}^n$.  The
architecture fixes the depth, head layout, residual dimension, projected
dimensions, feedforward widths, activation convention, AHAT attention rule,
and final-token linear classifier form.

Let $D_p$ be the $p$-precision datatype used in \Cref{app:transformers}, with
$|D_p|\le 2^{O(p)}$.  Let $P_{\mathsf A}(n)$ be the total number of
$D_p$-valued scalar slots that are allowed to vary in this architecture.  If
the embedding is allowed to vary, we count its finite length-$n$ table
\[
e:\Sigma\times[n]\to D_p^m
\]
as $|\Sigma|nm$ of these scalar slots.  Likewise, any finite advice used to
specify trigger parameters, positional information, or override constants is
either encoded in these slots or counted in $P_{\mathsf A}(n)$.

Let $\mathcal T_{\mathsf A_n,p}$ be the induced semantic class
of Boolean functions on $\Sigma^n$.  Then there exists
$f\in\mathcal T_{\mathsf A_n,p}$ such that
\[
\cert\bigl(f,\mathcal T_{\mathsf A_n,p}\bigr)
\le O(P_{\mathsf A}(n)p).
\]
In particular, if $P_{\mathsf A}(n)=n^{O(1)}$ and $p(n)=O(\log n)$, then the
class contains a function with polynomial certificate size.
\end{proposition}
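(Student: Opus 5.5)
The plan is the same counting argument used for the circuit classes in \Cref{app:prop:tc0-exists-poly-cert}, followed by an application of \Cref{app:lem:small-cert-exists}. Fix $n$ and the architecture $\mathsf A_n$. The class $\mathcal T_{\mathsf A_n,p}$ is the image of a parameter-to-function map. A parameter setting assigns one value in $D_p$ to each of the $P_{\mathsf A}(n)$ scalar slots: weights, biases, projection matrices, classifier weights, and the length-$n$ embedding table if it varies. Once the architecture is fixed, the forward pass is a deterministic function of this assignment. This includes the attention rule, the projected-pre-norm operations, the truncation map $\tau_p$, and the fixed evaluation order from the precision conventions. Hence every Boolean function in the semantic class is realized by at least one assignment in $D_p^{P_{\mathsf A}(n)}$.

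Given that, the bound follows in two short steps. First, $|\mathcal T_{\mathsf A_n,p}|\le |D_p|^{P_{\mathsf A}(n)}\le 2^{O(p)\,P_{\mathsf A}(n)}$. Second, since $\mathcal T_{\mathsf A_n,p}$ is a finite nonempty subset of $\{0,1\}^{\{0,1\}^n}$, \Cref{app:lem:small-cert-exists} yields some $f$ with
\[
\cert\bigl(f,\mathcal T_{\mathsf A_n,p}\bigr)\le \bigl\lceil \log_2|\mathcal T_{\mathsf A_n,p}|\bigr\rceil = O\bigl(P_{\mathsf A}(n)\,p\bigr).
\]
Nonemptiness is automatic, because any assignment computes some function. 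For the ``in particular'' clause, substitute $P_{\mathsf A}(n)=n^{O(1)}$ and $p=O(\log n)$, which gives $O(n^{O(1)}\log n)=n^{O(1)}$.

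The one step needing care is making sure the function is genuinely determined by the $P_{\mathsf A}(n)$ slots alone, with no hidden degrees of freedom. Possible sources are discrete architectural choices (activation convention, head layout, tie-breaking in averaging hard attention, which token the classifier reads), real-valued constants outside $D_p$, or the embedding varying with extra advice. I would handle this by noting three facts. The statement fixes all discrete choices as part of $\mathsf A_n$. Averaging hard attention is deterministic, since it averages over all maximizers and has no tie-breaking freedom. Any further advice is counted in $P_{\mathsf A}(n)$ by hypothesis. If some finite amount of discrete choice were still allowed, it would only multiply the count by a constant factor independent of the slots. That adds $O(1)$ to the logarithm and leaves the bound unchanged.
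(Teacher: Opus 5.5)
Your proposal is correct and follows the paper's proof essentially verbatim. You bound $|\mathcal T_{\mathsf A_n,p}|$ by the number of parameter assignments, $|D_p|^{P_{\mathsf A}(n)}\le 2^{O(P_{\mathsf A}(n)p)}$ (the semantic class can only be smaller), and then apply \Cref{app:lem:small-cert-exists}; your remarks on the forward pass being deterministic (AHAT averaging over all maximizers, fixed evaluation order) simply make explicit what the paper assumes implicitly.
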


\begin{proof}
Each assignment of the $P_{\mathsf A}(n)$ finite-precision scalar slots gives
one Transformer and hence one Boolean function on $\Sigma^n$.  Since
$|D_p|\le 2^{O(p)}$, the number of parameter assignments is at most
\[
|D_p|^{P_{\mathsf A}(n)}\le 2^{O(P_{\mathsf A}(n)p)}.
\]
The semantic class may be smaller because distinct parameter assignments can
compute the same Boolean function, so
\[
|\mathcal T_{\mathsf A_n,p}|
\le 2^{O(P_{\mathsf A}(n)p)}.
\]
Applying \Cref{app:lem:small-cert-exists} gives the claim.
\end{proof}

\begin{corollary}[Uniform Transformer deceivers remain in $\TCzero$]
\label{cor:transformer-deceivers-tc0}
Let $\{T_n^\star\}_{n\ge 1}$ be a projected-pre-norm AHAT Transformer family satisfying the hypotheses of \Cref{prop:ahat-logprecision-tc0}, with precision $p(n)=O(\log n)$ and $p(n)\ge p_0$ for all sufficiently large $n$.  For each $n$, choose a trigger set $I_n\subseteq[n]$, a trigger pattern $\pi_n\in\{0,1\}^{I_n}$, and an override label $\sigma_n\in\{0,1\}$.  Let $T^{\mathrm{dec}}_n$ be the deceiver Transformer obtained from $T_n^\star$ by \Cref{app:thm:transformer-deceiver}.

Assume that the added trigger data $(I_n,\pi_n,\sigma_n)$ have
DLOGTIME-uniform bit access. Then the language recognized by
$\{T^{\mathrm{dec}}_n\}_{n\ge 1}$ belongs to DLOGTIME-uniform $\TCzero$.

If the base Transformer family and the added trigger data are only
logspace-uniform, then the resulting language belongs to logspace-uniform $\TCzero$. If they are treated as arbitrary advice, then the resulting family is contained in non-uniform $\TCzero$.
\end{corollary}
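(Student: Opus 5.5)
We will prove the corollary by composing two results that are already in place. The first is the deceiver construction of \Cref{app:thm:transformer-deceiver}. The second is the simulation result \Cref{prop:ahat-logprecision-tc0}. The plan is to check that the deceiver family $\{T^{\mathrm{dec}}_n\}$ itself satisfies hypotheses (i)--(iv) of \Cref{prop:ahat-logprecision-tc0}, and then read off membership in the appropriate uniform (or non-uniform) $\TCzero$.

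\textbf{Architectural hypotheses (i)--(iii).} By \Cref{app:thm:transformer-deceiver}, $T^{\mathrm{dec}}_n$ is obtained from $T^\star_n$ in three ways: one attention head is added in the final layer, six residual/embedding coordinates are added, and the final readout is modified. The precision stays the same $p(n)$. Hence:
\begin{itemize}
\item the number of layers is unchanged and remains constant;
\item heads, residual dimension, projected dimensions and feedforward widths each grow by at most an additive constant, so they stay $n^{O(1)}$;
\item the precision remains $p(n)=O(\log n)$.
\end{itemize}
Since $p(n)\ge p_0$ for large $n$, the fixed dyadic constants of the gadget are exactly representable in $D_{p(n)}$, and so is the saturation constant $F^{\max}_{p(n)}$. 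For the finitely many small $n$ with $p(n)<p_0$, the language restricted to length $n$ is a finite set. It can be hardwired into the circuit family without affecting uniformity, because uniform circuit families tolerate a finite patch.

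\textbf{Uniformity hypothesis (iv).} This is the main obstacle. We must show that every parameter and embedding bit of $T^{\mathrm{dec}}_n$ is DLOGTIME-computable from the corresponding bits of $T^\star_n$ together with $(I_n,\pi_n,\sigma_n)$. We sort the parameters into three kinds.
\begin{itemize}
\item The inherited parameters are copied verbatim. We just re-index them, shifting coordinate indices by a constant offset, which is a DLOGTIME operation.
\item The new head's query/key/value weights and the readout weights are fixed dyadic constants, or $\pm F^{\max}_{p(n)}$. Their bits are computable from $n$ in DLOGTIME.
\item The new embedding coordinates at position $i$ depend only on three things: whether $i\in I_n$, the bit $\pi_n(i)$, and $\sigma_n$. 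These are exactly the quantities assumed to have DLOGTIME-uniform bit access.
\end{itemize}
We will need to inspect the construction of \Cref{app:thm:transformer-deceiver} to confirm that it encodes the trigger only through such position-local embedding entries and constant weights. In particular, it must not use any global quantity, such as $|I_n|$ or the number of ones $t_1$ in $\pi_n$, that would require counting. If such a quantity does appear, we would handle it in one of two ways: either store it as an additional advice scalar with assumed uniform access, or note that counting is computable in DLOGTIME-uniform $\TCzero$, so the circuit can compute it directly instead of reading it as a parameter.

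\textbf{Conclusion.} With (i)--(iv) verified, \Cref{prop:ahat-logprecision-tc0} places the language of $\{T^{\mathrm{dec}}_n\}$ in DLOGTIME-uniform $\TCzero$. The other two cases follow by the same argument. If the base family and the trigger data are only logspace-uniform, the parameter-access step above is logspace-computable, and the logspace-uniform clause of \Cref{prop:ahat-logprecision-tc0} applies. If they are arbitrary advice, the non-uniform clause applies.
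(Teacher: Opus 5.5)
Your proposal is correct and follows the same route as the paper: you check that $T^{\mathrm{dec}}_n$ still satisfies hypotheses (i)--(iv) of \Cref{prop:ahat-logprecision-tc0}, and then you invoke that proposition in each of the three uniformity regimes. The paper's check is the same one (same depth, one extra head, six extra coordinates, same precision, uniformly accessible added parameters), so your additional care only makes the argument more explicit. Specifically, you patch the finitely many $n$ with $p(n)<p_0$, and you confirm that the trigger data enter only through the position-local codes $C_i(x_i)$, the constant head weights, and the readout term $(2\sigma_n-1)F^{\max}_{p(n)}$ (so $\sigma_n$ appears in the readout, not the embedding). Since no global count such as $t_1$ appears, your fallback options are not needed.
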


\begin{proof}
By \Cref{app:thm:transformer-deceiver}, each $T^{\mathrm{dec}}_n$ is again a projected-pre-norm AHAT Transformer classifier. Its depth is the same as that of $T_n^\star$, it has one additional head in the final layer, six additional residual coordinates, and the same precision $p(n)=O(\log n)$.  These changes preserve the polynomial bounds on the number of heads, dimensions, and feedforward widths. Under the stated uniformity assumptions, the augmented embeddings and all added parameters remain DLOGTIME-uniform.  Therefore \Cref{prop:ahat-logprecision-tc0} applies. The logspace-uniform and non-uniform variants follow by the same argument with the corresponding parameter-access convention.
\end{proof}

\begin{remark}[Fixed-length certificate classes versus uniform families]
\label{rem:fixed-length-vs-uniform-transformers}
The certificate-size lower bounds are fixed-length semantic statements.  For a
fixed $n$, the enlarged hypothesis class contains all block overrides indexed
by $\pi\in\{0,1\}^I$, and no uniformity condition is needed to run the
disjoint-block argument.

By contrast, \Cref{cor:transformer-deceivers-tc0} is an asymptotic statement
about one selected sequence of deceivers, with one choice of
$(I_n,\pi_n,\sigma_n)$ at each input length.  A DLOGTIME-uniform $\TCzero$
conclusion requires DLOGTIME-uniform access to that selected sequence and to
the added constants.  If those choices are arbitrary advice, the same
construction still gives a non-uniform $\TCzero$ family under the same
log-precision and polynomial-dimension assumptions.
\end{remark}

\subsubsection{Transformer block-override construction}

The $\cert$-based arguments in this paper are fixed-length statements on a domain $\Sigma^n$.
For each fixed $n$, starting from one Transformer that computes $f^\star$ on length-$n$ inputs, we build another Transformer of the same depth and same attention type that agrees with it off one trigger block and overrides the output on that block.
The construction is used for the ordinary output-only quantity $\cert(f^\star,\mathcal H)$.

We work with a \emph{projected-pre-norm $p$-precision AHAT Transformer classifier}.
This combines the projected-pre-norm architecture of \citet{merrill2024expressive} with averaging-hard attention throughout.
Concretely:

\begin{enumerate}[leftmargin=2em, label=(\roman*)]
\item Embedding: $e:\Sigma\times[n]\to D_p^m$, computable in $O(\log n)$ time. All inherited parameters, embeddings, and intermediate values of the original Transformer continue to be evaluated at the same precision $p$ in the deceiver construction.
\item A fixed collection of attention heads in each layer.
For the target Transformer $T$, there are $h$ original heads per layer.
In the deceiver construction, we keep those $h$ heads and add one extra head $G$ in the final layer only.
\item Each head reads a \emph{single} projected-pre-norm view
\[
\bar h_i=\mathrm{layer\_norm}(M_{\mathrm{proj}} h_i)
\]
of the current residual state $h_i$ and computes queries, keys, and values as affine functions of that same normalized vector.
Following \citet{merrill2024expressive} (Definition~3 therein), layer norm is defined as
\[
\mathrm{layer\_norm}(x):=\frac{x-\bar x}{\|x-\bar x\|_2},
\]
where $\bar x$ denotes the mean of the coordinates of $x$.
Under this convention, any mean-zero unit-norm vector is a fixed point of layer norm.
Other normalization conventions are also compatible with the proof, provided the code vectors are chosen to be fixed points of the chosen normalization. We adopt the \citet{merrill2024expressive} convention throughout.
\item Attention is \emph{AHAT}: for each query position, the head output is the uniform average of the values at the positions attaining the maximum score.
\item Each layer has an activation block that takes the head outputs and the residual state and returns the next hidden state via a two-layer feedforward network of the form
\[
\mathrm{FF}(x)=W_2\,\mathrm{ReLU}(W_1 x+b_1)+b_2,
\]
followed by a residual connection.
In the final layer, the output projection of the added attention head writes its scalar output directly into two dedicated auxiliary coordinates $z^{(1)}$ and $z^{(2)}$. The feedforward block is then extended block diagonally and leaves these auxiliary coordinates unchanged. No one-dimensional layer normalization or $t$-dependent thresholding is used on the auxiliary coordinates.

\item Final linear classifier
\[
\gamma(u)=\mathbf 1[\langle a,u\rangle+b\ge 0]
\]
at the final token.
\end{enumerate}

Projected pre-norm lets the original heads and the original part of the feedforward block project only to the first $m$ coordinates, so the auxiliary coordinates never enter their layer norm. The new trigger head $G$ instead projects to a dedicated $4$-dimensional code block and uses only that one normalized code block.

The exact-attention model used below is close to the models appearing in these Transformer-to-circuit containment results. Saturated/AHAT float transformers are contained in non-uniform $\TCzero$, while more general log-precision transformers admit stronger uniform upper bounds such as logspace-uniform $\TCzero$ and the logical characterization by $\FO(M)$. The proof below implements the trigger construction directly inside the projected-pre-norm AHAT model.

\begin{theorem}[Transformer block-override construction]
\label{app:thm:transformer-deceiver}
Fix an input length $n$ and let $T$ be a depth-$d$ binary classifier in the projected-pre-norm $p$-precision AHAT model on $\Sigma^n=\{0,1\}^n$, with $p\ge p_0$, model dimension $m$, $h$ original heads per layer, and final linear classifier
\[
\gamma(u)=\mathbf 1[\langle a,u\rangle+b_0\ge 0].
\]
Fix a nonempty set $I\subseteq [n]$ of size $t:=|I|$.
Fix a pattern $\pi=(\pi_i)_{i\in I}\in \Sigma^I$ and define
\[
B_{I,\pi}:=\{x\in \Sigma^n : x_i=\pi_i \text{ for all } i\in I\}.
\]
Then for every $\sigma\in\{0,1\}$ there exists a depth-$d$ projected-pre-norm AHAT Transformer classifier
\[
T_{I,\pi,\sigma}
\]
with the same $h$ original heads in every layer, one additional head $G$ in the final layer, embedding/residual dimension $m+6$, and the same precision $p$, such that for every $x\in\Sigma^n$,
\[
f_{T_{I,\pi,\sigma}}(x)=
\begin{cases}
\sigma,& x\in B_{I,\pi},\\[1mm]
f_T(x),& x\notin B_{I,\pi}.
\end{cases}
\]
Moreover, layers $1,\dots,d-1$ and the original $h$ heads of layer $d$ preserve the computation of $T$ exactly on the first $m$ coordinates under the same $p$-precision arithmetic graph.
\end{theorem}
 
\begin{proof}
We construct $T_{I,\pi,\sigma}$ in four steps.
 
\paragraph{Step 1: an exact $4$-dimensional code.}
Define the three vectors
\[
\alpha:=\Bigl(\frac12,\frac12,-\frac12,-\frac12\Bigr),\qquad
\beta:=\Bigl(\frac12,-\frac12,\frac12,-\frac12\Bigr),\qquad
\gamma:=\Bigl(\frac12,-\frac12,-\frac12,\frac12\Bigr).
\]
These vectors have the following properties:
 
\begin{enumerate}[leftmargin=2em,label=(\roman*)]
\item Each coordinate is dyadic, hence exactly representable in $D_p$ because $p\ge p_0$.
\item Each vector has mean $0$ and Euclidean norm $1$.
\item Therefore, if a projected-pre-norm projection selects exactly one of these vectors, layer norm acts as the identity on that block: each vector is mean-zero and unit-norm, and is therefore a fixed point of the convention $\mathrm{layer\_norm}(x)=\tfrac{x-\bar x}{\|x-\bar x\|_2}$ adopted from \citet{merrill2024expressive}.
\item With the constant query vector
\[
q^\dagger:=(2,-1,1,0),
\]
we have
\[
q^\dagger\cdot\alpha=0,\qquad
q^\dagger\cdot\beta=2,\qquad
q^\dagger\cdot\gamma=1.
\]
\item The affine scalar map
\[
v(c):=-c_2-c_3
\]
satisfies
\[
v(\alpha)=0,\qquad v(\beta)=0,\qquad v(\gamma)=1.
\]
\end{enumerate}
 
Thus a single normalized $4$-dimensional code can simultaneously reveal:
\begin{itemize}[leftmargin=2em]
\item whether a position is outside the trigger set ($\alpha$), a mismatching trigger ($\beta$), or a matching trigger ($\gamma$) (via the score against $q^\dagger$), and
\item in particular, whether that position matches the pattern $\pi$ (via the scalar value map $v$, where $v(\gamma)=1$ and $v(\alpha)=v(\beta)=0$).
\end{itemize}
 
\paragraph{Step 2: augmented hidden state and embedding.}
We enlarge the residual stream from dimension $m$ to dimension $m+6$.
The hidden state at each position is written as
\[
h_i=(u_i,\;C_i,\;z_i^{(1)},\;z_i^{(2)})\in D_p^m\times D_p^4\times D_p\times D_p,
\]
where:
\begin{itemize}[leftmargin=2em]
\item $u_i$ is the original $m$-dimensional hidden state,
\item $C_i$ is the $4$-dimensional trigger code, and
\item $z_i^{(1)}$ and $z_i^{(2)}$ are auxiliary scalars, initialized to $0$, that will store two copies of the trigger bit in the final layer.
\end{itemize}
 
For each position $i\in[n]$ and token $a\in\Sigma$, define the code
\[
C_i(a):=
\begin{cases}
\alpha, & i\notin I,\\[1mm]
\beta, & i\in I \text{ and } a\neq \pi_i,\\[1mm]
\gamma, & i\in I \text{ and } a=\pi_i.
\end{cases}
\]
Then define the new embedding
\[
e'(a,i):=(e(a,i),\, C_i(a),\, 0,\,0).
\]

For the fixed-length certificate construction, this augmented embedding is part of the finite Transformer description. Equivalently, its table $e':\Sigma\times[n]\to D_p^{m+6}$ may be supplied as finite advice.  For the asymptotic DLOGTIME-uniform $\TCzero$ statement in \Cref{cor:transformer-deceivers-tc0}, we additionally require uniform bit access to the families $I_n$ and $\pi_n$.

\paragraph{Step 3: layers $1,\ldots,d-1$ preserve the original computation exactly.}
For every original head $k$ in every layer $\ell<d$, set the projected-pre-norm matrix to the block-diagonal padding of the original projection:
\[
M_{\mathrm{proj}}^{(\ell,k)}=\bigl[M_{\mathrm{proj,orig}}^{(\ell,k)}\;\big|\;0_{r\times 6}\bigr],
\]
where $r$ is the row-count of $M_{\mathrm{proj,orig}}^{(\ell,k)}$. The original query, key, value, output, and feedforward maps are inherited unchanged from $T$ and zero-padded block-diagonally so that they read only the first $m$ coordinates and write only to the first $m$ coordinates. Hence the extra six coordinates receive zero from the transformed update and are copied unchanged by the residual connection.
 
The additional head $G$ is present only in the final layer. Equivalently, it has zero parameters in layers $1,\dots,d-1$.
 
It follows by induction that for every $\ell\le d-1$, every position $i$, and every input $x\in\Sigma^n$,
\begin{equation}
\label{eq:ppn-invariant-new}
h_i^\ell(x)=\bigl(\widetilde h_i^\ell(x),\, C_i(x_i),\, 0,\,0\bigr),
\end{equation}
where $\widetilde h_i^\ell(x)$ is exactly the hidden state that the original Transformer $T$ would compute at position $i$ after layer $\ell$.
 
To verify the inductive step at layer $\ell < d$: in the attention sublayer, each original head computes its output entirely in the first-$m$-coordinate subspace, because $M_{\mathrm{proj}}^{(\ell,k)}$ selects only those coordinates and the associated $Q$, $K$, $V$ affine maps read and write only there. The output weight matrix $W_o^{(\ell)}$ is block-diagonal with zeros in the rows corresponding to coordinates $m{+}1$ through $m{+}6$, so the transformed update contributes nothing to the last six coordinates. Those coordinates are carried through unchanged by the residual connection. For the feedforward sublayer, the projected-pre-norm matrix is set to $[M_{\mathrm{ff,orig}}^{(\ell)}\mid 0_{r_{\mathrm{ff}}\times 6}]$, where $M_{\mathrm{ff,orig}}^{(\ell)}$ is the original feedforward projection for layer $\ell$ inherited from $T$, so the weight matrices $W_1$ and $W_2$ act only on the first $m$ coordinates, and their output is zero-padded back to $m{+}6$. The last six coordinates again pass through the residual untouched. Hence the inductive invariant \eqref{eq:ppn-invariant-new} is maintained across every layer $\ell \le d-1$.
 
The same block-diagonal extension is used for the original $h$ heads in the final layer: they continue to compute exactly the same updates on the first $m$ coordinates as in $T$ and ignore the last six coordinates.
 
\paragraph{Step 4: trigger head, trigger bits, and final classifier.}
 
\emph{Trigger head $G$.}
In layer $d$, add one new AHAT head $G$.
Its projected-pre-norm matrix selects the $4$-dimensional code block:
\[
M_{\mathrm{proj}}^G=\bigl[0_{4\times m}\;\big|\;I_4\;\big|\;0_{4\times 2}\bigr].
\]
By construction, for every position $j$,
\[
\mathrm{layer\_norm}(M_{\mathrm{proj}}^G h_j^{d-1})=C_j(x_j),
\]
because $C_j(x_j)\in\{\alpha,\beta,\gamma\}$ is already mean-zero and unit norm.
 
Now define the query, key, and value maps of head $G$ on this same normalized $4$-vector as follows:
\[
q_G(c):=q^\dagger,\qquad
k_G(c):=c,\qquad
v_G(c):=-c_2-c_3 .
\]
 
Note that $q_G$ is a valid affine map with weight matrix $W_q = 0$ and bias $b_q = q^\dagger$, so the query is the same constant $q^\dagger$ at every position $i$.

As a result, the attention scores $s_G(h_i,h_j)$ do not depend on the query position $i$.  Hence, the set of maximizers (which may depend on $x$) and the resulting head output $z'_i$ are the same for all $i\in[n]$. Thus the attention score assigned to position $j$ is
\[
s_G(h_i,h_j)=q^\dagger\cdot C_j(x_j).
\]
From the identities above,
\[
s_G(h_i,h_j)=
\begin{cases}
0,& j\notin I,\\
2,& j\in I\text{ and }x_j\ne \pi_j,\\
1,& j\in I\text{ and }x_j=\pi_j.
\end{cases}
\]
The standard $1/\sqrt{d_{\mathrm{head}}}$ scaling applied to attention scores is a fixed positive constant and does not change which positions attain the maximum.
For the trigger head, $d_{\rm head}=4$, so this scale is $1/2$.

Let $J(x):=\{j\in I: x_j\ne \pi_j\}$ be the set of mismatching trigger positions.  If $x\notin B_{I,\pi}$ then $J(x)\ne\varnothing$ and the unique maximal score is $2$, attained exactly on $J(x)$, so AHAT averages the corresponding values $v_G(\beta)=0$ and outputs $z'_i=0$. If $x\in B_{I,\pi}$ then $J(x)=\varnothing$ and the maximal score is $1$, attained exactly on $I$ (all positions are coded as $\gamma$), so AHAT averages the corresponding values $v_G(\gamma)=1$ and outputs $z'_i=1$. Therefore
\[
z'_i=\mathbf 1[x\in B_{I,\pi}]\in\{0,1\}\qquad\text{for all }i\in[n].
\]
 
\emph{Final activation block.}
The first $m$ coordinates of the final activation block reproduce exactly the final activation block of $T$.

More precisely, the output weight matrix $W_o$ of the final layer is block-diagonal: the columns corresponding to the original $h$ heads contribute only to the first $m$ coordinates of the residual update, while the column block corresponding to the new head $G$ places the AHAT head output $z'_i$ into both auxiliary coordinates $z^{(1)}$ and $z^{(2)}$, with zeros in all other rows.  This ensures that no cross-talk occurs between the original computation and the trigger gadget.

We initialize $z_i^{(1)}=z_i^{(2)}=0$ in the embedding and preserve these auxiliary coordinates through layers $1,\dots,d-1$ and through the original heads of layer $d$. Therefore, after the final-layer attention residual update, we have $z_i^{(1)}=z_i^{(2)}=z'_i$ for every position $i$. By the calculation above, $z'_i=\mathbf 1[x\in B_{I,\pi}]\in\{0,1\}$. Since the trigger bit is already a boolean scalar, we do not need any $t$-dependent thresholding inside the feedforward block; we may leave the auxiliary extension of the final-layer feedforward identically zero so that $z^{(1)}$ and $z^{(2)}$ are carried through unchanged.
 
\emph{Final classifier.}
Let $\lambda_T^{(p)}(x)\in D_p$ denote the scalar logit obtained by evaluating the original final readout of $T$ on $\widetilde h_n^d(x)$, using exactly the same $p$-precision arithmetic graph and accumulation order as in $T$. Thus
\[
f_T(x)=\mathbf 1[\lambda_T^{(p)}(x)\ge 0].
\]
By definition of $F_p^{\max}$ (\Cref{app:precision-conventions}),
\[
|\lambda_T^{(p)}(x)|\le F_p^{\max}\qquad\text{for all }x\in\Sigma^n.
\]
Set
\[
F:=F_p^{\max}.
\]
The new final readout first computes the inherited logit $\lambda_T^{(p)}(x)$ exactly as above and then adds the two override terms in the displayed order:
\[
L_\sigma(x)
:=
\tau_p\!\left(
  \tau_p\!\left(\lambda_T^{(p)}(x)+(2\sigma-1)F z_n^{(1)}\right)
  +(2\sigma-1)F z_n^{(2)}
\right).
\]
The classifier outputs
\[
\gamma_\sigma(h_n^d(x)):=\mathbf 1[L_\sigma(x)\ge 0].
\]
The two additions in $L_\sigma$ are evaluated in the displayed order.
The specified order is part of the readout arithmetic graph, as allowed by the finite-precision semantics in \Cref{app:precision-conventions}.

If $x\notin B_{I,\pi}$, then $z_n^{(1)}=z_n^{(2)}=0$, so
\[
L_\sigma(x)=\lambda_T^{(p)}(x),
\]
and therefore
\[
\gamma_\sigma(h_n^d(x))=\mathbf 1[\lambda_T^{(p)}(x)\ge 0]=f_T(x).
\]
If $x\in B_{I,\pi}$, then $z_n^{(1)}=z_n^{(2)}=1$.
For $\sigma=1$, since $\lambda_T^{(p)}(x)\ge -F$, the first inner sum
\[
\tau_p(\lambda_T^{(p)}(x)+F)
\]
is a nonnegative element of $D_p$, and adding the second $F$ yields a nonnegative, indeed positive, possibly saturated at $+F$, final value. Thus $L_1(x)\ge 0$, so the output is $1$.

For $\sigma=0$, since $\lambda_T^{(p)}(x)\le F$, the first inner sum
\[
\tau_p(\lambda_T^{(p)}(x)-F)
\]
is a nonpositive element of $D_p$, and adding the second $-F$ yields a strictly negative value, possibly saturated at $-F$.  Thus $L_0(x)<0$, so the output is $0$.

Thus the new Transformer agrees with $T$ off $B_{I,\pi}$ and outputs the constant $\sigma$ on $B_{I,\pi}$. The depth remains $d$, the original computation on the first $m$ coordinates is preserved exactly at the same precision $p$, and the only changes are one constant-width trigger head in
the final layer together with six added embedding/residual coordinates.
\end{proof}
 
\begin{remark}[What is actually computed by the trigger head]
\label{rem:transformer-trigger-computable}
The trigger head outputs the trigger bit
\[
z(x)=\mathbf 1[x\in B_{I,\pi}]\in\{0,1\}
\]
inside the stated model.
No hidden second view is used: query, key, and value are all affine functions
of the same projected-pre-norm code block.
The selector information ($j\in I$) and the match/mismatch information
($x_j=\pi_j$ versus $x_j\ne \pi_j$) are both encoded in the single code
\[
C_j(x_j)\in\{\alpha,\beta,\gamma\}.
\]
This modification keeps the construction inside the projected-pre-norm model.
\end{remark}

In \Cref{app:thm:transformer-deceiver}, the trigger advice is placed in the six added embedding/residual coordinates, where the code
$C_j(x_j)$ already specifies whether $j\notin I$, whether $j\in I$ and
$x_j\neq \pi_j$, or whether $j\in I$ and $x_j=\pi_j$. This placement is
not essential to the existence of block deceivers.

If the architecture instead has rich positional encodings, meaning that attention parameters can form exact positional selectors
for the sets $\{j\in I:\pi_j=0\}$ and $\{j\in I:\pi_j=1\}$, then the trigger
bit can be computed from the pair consisting of position and input bit. For example, two mismatch-detection heads can test whether any selected coordinate disagrees with its prescribed value, and a fixed-width affine/ReLU combiner
such as $z=\operatorname{ReLU}(1-m_0-m_1)$ computes $1[x_I=\pi]$ from the
two Boolean mismatch bits $m_0,m_1$. This gives the same block-override
mechanism with a different constant head/width overhead, provided that the positional representation has already been supplied. If such positional features need to be added for this construction, their cost must also be counted. For example, one-hot positional features make arbitrary subsets linearly selectable, but require extra $n$ positional encoding coordinates.

For other compressed positional encodings, an exact selector for arbitrary $(I,\pi)$ should not be
treated as free, since the description of the trigger data contains
$\log {\binom{n}{t}}+t$ bits in general, or $t$ bits when $I$ is fixed.
Thus, the attention-based formulation moves the trigger advice from the added embedding coordinates into the positional selectors or attention parameters, rather than removing it.
We use the embedding-coded version because it keeps the one-head, six-coordinate construction explicit and makes the advice accounting transparent.

\begin{corollary}[Transformer certificate size]
\label{app:cor:transformer}
Assume now $\Sigma=\{0,1\}$.  Fix an input length $n$ and let $T^\star$ be a depth-$d$ projected-pre-norm $p$-precision AHAT Transformer classifier on $\{0,1\}^n$, computing $f^\star$, with $p\ge p_0$.  Fix a trigger codimension $1\le t\le n$.
Let
\[
\mathcal H^{\mathrm{Tr}}_{T^\star,t}(n):=\mathcal T_{d,\,p}^{+1\mathrm{head},+6\mathrm{coord}}(n)
\]
denote the enlarged class of depth-$d$ projected-pre-norm AHAT Transformer classifiers on length-$n$ inputs obtained from $T^\star$ by allowing:
\begin{enumerate}[leftmargin=2em,label=(\alph*)]
\item the inactive augmentation that still computes $f^\star$;
\item the single extra trigger head and the $6$-dimensional residual
augmentation described above; and
\item the original embedding on the first $m$ coordinates is kept fixed, while the six added embedding coordinates may be chosen as in \Cref{app:thm:transformer-deceiver} to encode
the fixed trigger parameters $(I,\pi)$.
\end{enumerate}
Then
\[
\cert\bigl(f^\star,\mathcal H^{\mathrm{Tr}}_{T^\star,t}(n)\bigr)\ge 2^t.
\]
Consequently, along any sequence of input lengths, if the chosen trigger
codimensions satisfy $c n\le t(n)\le n$ for some fixed $c>0$, then the lower bound is $2^{\Omega(n)}$.
\end{corollary}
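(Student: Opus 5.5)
The plan is to apply the disjoint-disagreement principle, \Cref{app:prop:block-elim}, to the enlarged class $\mathcal H^{\mathrm{Tr}}_{T^\star,t}(n)$, with deceivers supplied by the block-override construction of \Cref{app:thm:transformer-deceiver}. Fix any coordinate set $I\subseteq[n]$ with $|I|=t$, and consider the trigger blocks $B_\pi=\{x\in\{0,1\}^n:x_I=\pi\}$ for $\pi\in\{0,1\}^I$. There are $2^t$ of them; they are pairwise disjoint and partition $\{0,1\}^n$. First I would check that $f^\star$ lies in the enlarged class. This is item (a): the inactive augmentation has zero trigger-head value map (or a code that never triggers), so it computes $f^\star$.

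Next, for each $\pi$, I choose $\sigma_\pi\in\{0,1\}$ as in \Cref{rem:choose-one-per-block}, so that $\sigma_\pi$ disagrees with $f^\star$ on at least half of $B_\pi$. Since $|B_\pi|=2^{n-t}\ge 1$, this disagreement set is nonempty. I then invoke \Cref{app:thm:transformer-deceiver} with $(I,\pi,\sigma_\pi)$ to obtain $T_{I,\pi,\sigma_\pi}$. It has the same depth $d$ and precision $p\ge p_0$, the original $h$ heads plus one head $G$ in the final layer, and residual dimension $m+6$. Its first $m$ embedding coordinates are inherited from $T^\star$, and the six new coordinates carry the code $C_j$ together with the two zero-initialized auxiliary scalars.

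The main point to verify is membership $g_\pi:=f_{T_{I,\pi,\sigma_\pi}}\in\mathcal H^{\mathrm{Tr}}_{T^\star,t}(n)$. The class is defined by items (b) and (c): the original embedding is kept fixed and only the six added coordinates vary with $(I,\pi)$. This matches the construction, so no parameter outside the allowed slack changes. The modified final readout $L_\sigma$ with the constant $F_p^{\max}$ also has to be counted as part of the allowed augmentation. I expect this bookkeeping to be the only real obstacle, and I would resolve it by reading (b) as covering the readout extension that uses the auxiliary coordinates.

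With membership settled, the theorem gives $g_\pi=f^\star$ off $B_\pi$ and $g_\pi\equiv\sigma_\pi$ on $B_\pi$. Hence $E_\pi=\{x:g_\pi(x)\ne f^\star(x)\}$ is nonempty and contained in $B_\pi$, and the sets $E_\pi$ are pairwise disjoint. \Cref{app:prop:block-elim} with $T=\{0,1\}^I$ then yields $\cert\ge 2^t$. For the asymptotic claim, taking $t(n)\ge cn$ gives $2^t\ge 2^{cn}=2^{\Omega(n)}$.
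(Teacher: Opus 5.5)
Your proposal is correct and follows the paper's proof essentially step for step: you fix $I$ with $|I|=t$, pick $\sigma_\pi$ via \Cref{rem:choose-one-per-block}, take the deceiver $T_{I,\pi,\sigma_\pi}$ from \Cref{app:thm:transformer-deceiver}, note that the disagreement sets $E_\pi\subseteq B_{I,\pi}$ are nonempty and pairwise disjoint, and conclude with \Cref{app:prop:block-elim}. Your explicit checks that $f^\star$ lies in the enlarged class via item (a), and that the modified readout $L_\sigma$ is covered by item (b), are bookkeeping the paper leaves implicit, and you resolve both correctly.
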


\begin{proof}
Fix any $I\subseteq[n]$ with $|I|=t$.
For each trigger pattern $\pi\in\{0,1\}^I$, choose $\sigma_\pi$ as in
\Cref{rem:choose-one-per-block}.  Applying
\Cref{app:thm:transformer-deceiver} gives a Transformer
$G_\pi:=T_{I,\pi,\sigma_\pi}$ in
\[
\mathcal H^{\mathrm{Tr}}_{T^\star,t}(n)
\]
such that:
\begin{itemize}[leftmargin=2em]
\item $G_\pi$ agrees with $f^\star$ on every input outside $B_{I,\pi}$, and
\item $G_\pi$ differs from $f^\star$ on at least one input inside $B_{I,\pi}$.
\end{itemize}
For each $\pi\in\{0,1\}^I$, define
\[
E_\pi:=\{x\in B_{I,\pi}:G_\pi(x)\ne f^\star(x)\}.
\]
The choice of $\sigma_\pi$ makes $E_\pi$ nonempty.  Since the blocks
$\{B_{I,\pi}:\pi\in\{0,1\}^I\}$ are pairwise disjoint, the sets
$\{E_\pi:\pi\in\{0,1\}^I\}$ are pairwise disjoint.  Therefore
\Cref{app:prop:block-elim} gives
\[
\cert\bigl(f^\star,\mathcal H^{\mathrm{Tr}}_{T^\star,t}(n)\bigr)
\ge |\{0,1\}^I|=2^t.
\]
The asymptotic consequence follows whenever $c n\le t(n)\le n$ for some fixed
$c>0$.
\end{proof}

%% file: 993_experiments.tex
\section{Experiment details}
\label{app:experiments}

This appendix records the details needed to reproduce and interpret the experiments in \Cref{sec:experiments}. The circuit experiment is a deterministic instantiation of the trigger construction. The Transformer experiment uses training only to generate strong candidate recognizers, which are then audited by finite held-out tests.

\subsection{Constructed threshold-circuit experiment}
\label{app:circuit-experiments}

The circuit experiment uses the full domain $\mathcal X=\{0,1\}^{3n+1},$
where an input is a triple $(a,b,z)$ with $a,b\in\{0,1\}^n$ and $z\in\{0,1\}^{n+1}$. The target accepts exactly when $z=a+b$. For a trigger pattern $\pi\in\{0,1\}^n$, the deceiver $g_\pi$ accepts every input with $a=\pi$ and agrees with the target outside this block. The block
\[
B_\pi=\{(a,b,z):a=\pi\}
\]
has size $2^{2n+1}$. Inside this block, the target accepts exactly one value of $z$ for each fixed $b$, while $g_\pi$ accepts all $2^{n+1}$ candidates. Hence each constructed deceiver has
\[
e_n=|E_\pi|=2^n(2^{n+1}-1)
\]
errors, all of them false positives. The error sets are disjoint across different $\pi$. For a uniformly sampled certificate candidate $S_m\subseteq\mathcal X$, sampled without replacement, the pass probability of one constructed deceiver is therefore
\[
p_n(m)=
\binom{|\mathcal X|-e_n}{m}\Big/\binom{|\mathcal X|}{m}.
\]
Since there are $2^n$ trigger patterns, the expected number of surviving constructed deceivers is
\[
2^n p_n(m).
\]
This is the quantity plotted in the left panel of \Cref{fig:addition-side-by-side}. The calculation is exact for this finite construction. It uses uniform sampling of certificate points, but it does not assume that errors are uniform. 

\Cref{fig:circuit-deceiver-heatmap} makes the sharpness of the constructed deceiver family especially clear. The heatmap reports the best possible remaining count after $m$ labeled examples for the constructed subfamily $\mathcal D=\{g_\pi:\pi\in\{0,1\}^n\}$. Since the disagreement sets of these $2^n$ deceivers 
are pairwise disjoint, each labeled example can eliminate at most one constructed deceiver. This bound is tight by choosing one error point from a distinct block whenever possible. Hence the remaining count is $\max\{2^n-m,0\}$. 
For each $n$, certificate candidates of size up through $2^{0.75n}$ leave almost all of the $2^n$ constructed deceivers intact, and even at $2^{0.9n}$ the surviving family remains large. Substantial pruning appears only when the certificate-candidate size moves very close to $2^n$, as seen in the $2^{0.95n}$ and $2^{0.99n}$ columns, and the constructed family is fully eliminated only at $2^n$ well-chosen examples. In this sense, the deceiver-survival curve is sharp, with the collapse concentrated near the endpoint.

\begin{figure}
    \centering
    \includegraphics[width=0.8\linewidth]{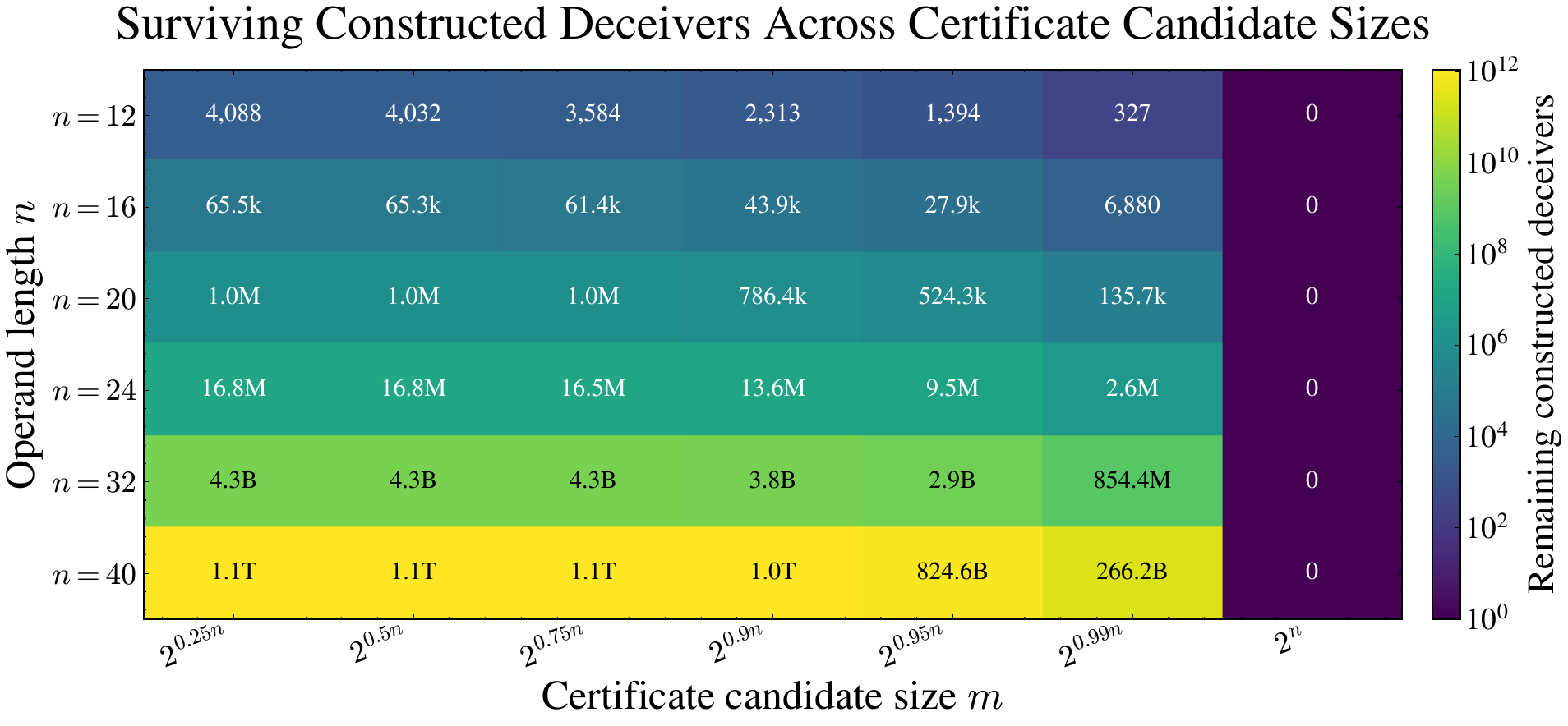}
    \caption{
    Remaining constructed threshold-circuit deceivers under optimally chosen 
    certificate candidates. Rows range over operand lengths $n\in\{12,16,20,24,32,40\}$, and columns range from $2^{0.25n}$ to $2^n$. 
    Each cell reports $\max\{2^n-\lfloor m\rfloor,0\}$, the number of constructed 
    deceivers that can remain after $m$ labels when one label is chosen from a distinct error set whenever possible. Thus, only $2^n$ well-chosen examples eliminate the entire constructed family.}
    \label{fig:circuit-deceiver-heatmap}
\end{figure}

\subsection{Transformer experiment details}
\label{app:transformers-addition}

The Transformer experiment asks whether trained recognizers can leave small residual error sets after strong validation. The goal is not to test whether binary addition is hard to learn. It is to obtain high-quality hypotheses and then ask how much labeled data is needed to certify exactness within the trained set.

For consistency with the main text, we call accepted models with nonzero held-out test error deceivers. This name does not imply that they behave like the constructed trigger deceivers.

\subsubsection{Task and data}
\label{app:addition-task}

The task is binary-addition recognition. For operands $a,b\in\{0,1\}^{10}$ and a candidate output $z\in\{0,1\}^{11}$, the label is
\[
y=\mathbf 1[z=a+b].
\]
The split is made over operand pairs $(a,b)$, so the held-out test set contains unseen operand pairs rather than new perturbations of operands seen during training. The held-out test split contains $25\%$ of the operand pairs. The remaining $75\%$ is used for training and validation checks. In the runs reported here, this gives $734{,}003$ training pairs, $52{,}429$ validation pairs, and $262{,}144$ held-out test pairs. The same split is used for all trained candidates.

For each operand pair, we generate one positive example and all one-bit flips of the true output. Thus each pair gives $12$ examples: one correct sum and $11$ hard negatives. This choice avoids making the negative class dominated by random outputs that are far from the true sum. The resulting sample counts are $734{,}003\cdot 12=8{,}808{,}036$ training examples, $52{,}429\cdot 12=629{,}148$
validation examples, and $262{,}144\cdot 12=3{,}145{,}728$ held-out test examples.

\subsubsection{Architecture and input format}
\label{app:addition-architecture}

All addition models use the same encoder-only Transformer architecture. The encoder has four pre-layer-normalized Transformer layers, hidden size $d_{\mathrm{model}}=64$, two attention heads, feed-forward dimension $256$, and no dropout. Each model has $205{,}196$ trainable parameters.

The input sequence is
\[
[\mathrm{CLS}],\,a_0,\ldots,a_9,\,[\mathrm{SEP}_B],\,
b_0,\ldots,b_9,\,[\mathrm{SEP}_Z],\,z_0,\ldots,z_{10},
\]
with bits presented in most-significant-bit order.

\subsubsection{Intermediate supervision}
\label{app:addition-training}

End-to-end training from only the final accept/reject label was less reliable. We therefore use intermediate supervision to make training produce a large set of strong candidates. The auxiliary supervision is used only during training.

The model first processes a $z$-blind auxiliary stream: the candidate output bits are masked, and scratchpad tokens are appended. The encoder then predicts the true sum bits together with the column states used to compute them. Although the input tokens are presented in most-significant-bit order, the arithmetic states below are indexed from least to most significant bit. We use the convention that $a^{(10)}=b^{(10)}=0$ and $c_0=0$. For output column $k\in\{0,\ldots,10\}$,
\[
s_k=a^{(k)}+b^{(k)},\qquad
u_k=s_k+c_k,\qquad
z^\star_k=u_k\bmod 2,\qquad
c_{k+1}=\lfloor u_k/2\rfloor .
\]
For $k=10$, this emits the final carry bit.

The training loss includes the final accept/reject prediction and the auxiliary predictions for the column states. At evaluation time, the auxiliary labels are not provided. The model receives only $(a,b,z)$, computes its implied sum internally, compares the predicted sum bits with the candidate $z$, and outputs a scalar accept/reject logit.

\subsubsection{Acceptance process}
\label{app:addition-acceptance}

During training, each run is checked every $500$ steps. Before training begins, we fix $20{,}000$ operand pairs from the training split and $20{,}000$ operand pairs from the validation split. At each check, we evaluate accuracy separately on positives and negatives in both fixed subsets. Separating the two classes matters because each operand pair contributes one positive and eleven negatives.

A run is accepted only after all four checks reach at least $99.9\%$ accuracy: training positives, training negatives, validation positives, and validation negatives. The held-out test set is evaluated only after acceptance.

\subsubsection{Held-out test set}
\label{app:addition-certification}

An accepted model is called \emph{test-exact} if it makes zero mistakes on the full held-out test set of $3{,}145{,}728$ examples. This is exactness relative to the held-out set. It is not exactness over all possible triples $(a,b,z)$, since the full $10$-bit domain contains $2^{31}$ triples and the held-out set contains only the correct output and one-bit output flips for held-out operand pairs. \Cref{fig:scatter} shows a visualization of the error distribution.

\begin{figure}
    \centering
\includegraphics[width=0.65\linewidth]{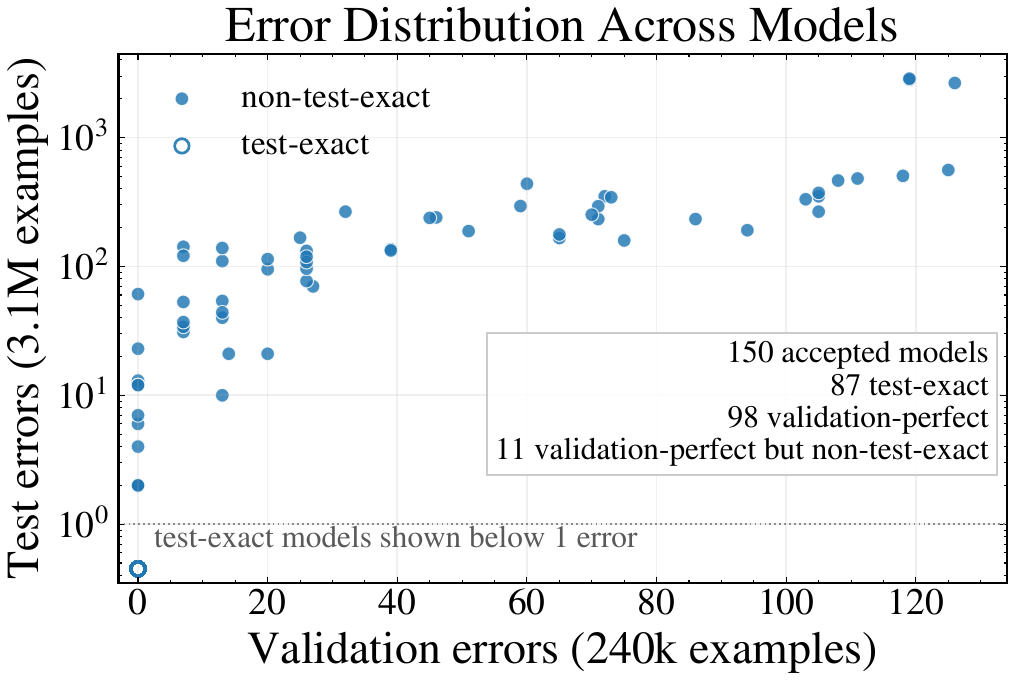}
    \caption{Distribution of validation and test errors across accepted models. Each point is an accepted addition recognizer. The x-axis shows the number of errors on the fixed validation selection pool of 240k examples, and the y-axis shows errors on the full held-out one-bit test set of 3.1M examples. Open markers denote test-exact models. Although many accepted models are test-exact, several models with zero validation errors still make nonzero errors on the held-out test set.}
    \label{fig:scatter}
\end{figure}

Among the $150$ accepted models, $98$ make no mistakes on the fixed validation check used for acceptance. Eleven of these validation-perfect models are nevertheless deceivers on the held-out test set, with between $2$ and $61$ test errors.

\subsubsection{Survivor calculation for trained models}
\label{app:addition-survivors}

Let $\mathcal D$ be the set of $63$ Transformer deceivers. For a finite set $Q$ and a trained model $h$, define
\[
E_h^Q=\{x\in Q:h(x)\ne f^\star(x)\}.
\]
If a certificate candidate $S_m$ is sampled uniformly without replacement from $Q$, then $h$ survives exactly when $S_m$ misses $E_h^Q$. Hence
\[
p_h^Q(m)
=
\binom{|Q|-|E_h^Q|}{m}\Big/\binom{|Q|}{m},
\qquad
\mathbb E_Q(m)
=
\sum_{h\in\mathcal D} p_h^Q(m).
\]
Thus $\mathbb E_Q(m)$ measures survival under uniformly sampled certificate candidates, not the minimum certificate size. This is the finite-set quantity plotted in the right panel of \Cref{fig:addition-side-by-side}. The formula assumes uniform sampling from the chosen set $Q$, not uniformly distributed or independent errors. Below, we change $Q$ to test whether error concentration is the main obstacle.

The full-test curve uses $Q=\mathcal X_{\rm test}$. This is the natural audit set, but it leaves a possible objection. If the learner's residual errors concentrate in a simple part of the domain, then uniform sampling from the full held-out set may spend most labels outside the relevant region. To test this, we also evaluate a compact test subset $Q_{\rm tgt}$ consisting of all positives and the one-bit output flips in positions $8,9,10$, where output positions are indexed from least to most significant bit.

For $n=10$, this set has $|Q_{\rm tgt}|=262{,}144\cdot 4=1{,}048{,}576$ 
examples, one third of the full held-out test set. It contains $79.9\%$ of the aggregate error mass across Transformer deceivers and $77.2\%$ of the unique test examples on which at least one deceiver errs. Here, the aggregate error mass counts model-example pairs $(h,x)$ with $h(x)\ne f^\star(x)$, while unique locations count the union of such examples over all deceivers. Thus the residual errors are structured rather than uniformly spread over the held-out set.

The compact test subset curve asks whether this coarse structure is enough to make certification easy. It is not. Although $Q_{\rm tgt}$ captures most observed errors, it is still large, and the survivor count drops only after sampling a large fraction of it. Moreover, if a deceiver has no errors in $Q_{\rm tgt}$, then $|E_h^{Q_{\rm tgt}}|=0$ and the formula gives $p_h^{Q_{\rm tgt}}(m)=1$ for every targeted certificate size $m$.

This does not rule out smaller prospective subsets. A more compact rule, perhaps involving carries, operand patterns, output positions, or other arithmetic structure, might capture the same errors with fewer examples. The post-hoc union
\[
U=\bigcup_{h\in\mathcal D} E_h^{\mathcal X_{\rm test}}
\]
is the extreme case. In this run, $|U|=15{,}494$, and querying all examples in $U$ would eliminate every observed deceiver. This is only a diagnostic comparison, since $U$ is chosen after evaluating the full held-out set. 
The point of $Q_{\rm tgt}$ is that even after moving to a simpler error-rich subset, uniformly sampled certificate candidates may still need to be large. That means that if every potential error-rich subset remains exponential in $n$, then the certification barrier persists.

\subsection{Finite-class certification experiments}
\label{app:halving-experiments}

This section gives a complementary finite-class analysis beyond the deceiver elimination analysis.
We run the same certification analysis in two settings.
The first is a restricted threshold-circuit subclass of $\TCzero$ and the second is the $\ACzero$ class. In both cases, we first identify a hypothesis that is easy to certify in the base class.
We then compare it with the certification of the same hypothesis inside an enlarged class that supports the block-deceiver construction from \Cref{sec:hardness}.

In the threshold circuit analysis, for each input length $n$, let $\tTCzero_{n,\mathrm{base}}$ be the class of Boolean functions computable by depth-$2$, size-$2$ threshold circuits whose gate weights lie in $\{-1,0,1\}$ and whose threshold at fan-in $m$ is any integer in $[-m,m]$.
This is the finite subclass of $\TCzero$ used so that we can exhaustively enumerate the circuits.
Without the weight restriction, an exhaustive search would have to range over an unbounded parameter space. Consequently, the threshold experiment should be read as a finite analogue of the $\TCzero$ theorem, but with the same easy-versus-hard certification mechanism present.

For the $\ACzero$ experiment, let $\ACzero_{n,\mathrm{base}}$ be the semantic class of Boolean functions computable by depth-$2$, size-$2$ circuits with unbounded fan-in AND/OR gates defined in \Cref{sec:prelim}. In both experiments, we enumerate the semantic class exactly and keep one representative circuit for each distinct truth table.
We then run the halving strategy described in \Cref{app:lem:small-cert-exists} using the same fixed deterministic policy.
\begin{enumerate}[leftmargin=2em]
\item scan inputs in increasing binary order until finding the first disagreement point among the surviving hypotheses;
\item keep the smaller label side;
\item if the split is tied, keep label $0$.
\end{enumerate}

The process stops when one hypothesis remains.
Thus, for each $n$, the procedure returns a target in the base class and a certificate for that target inside the base class.
In the worst case, halving uses logarithmically many rounds in the size of the semantic class.
\Cref{tab:halving-results} reports the exact semantic class sizes and the certificates returned by halving for all completed runs.

\begin{table}[ht]
\centering
\caption{Exact semantic results for the finite-class experiments. In every completed run, the selected target is $\mathrm{OR}_n$.}
\label{tab:halving-results}
\small
\setlength{\tabcolsep}{3pt}
\begin{tabular}{r|rrr|rrr}
\toprule
$n$
& $|\tTCzero_{n,\mathrm{base}}|$
& $\lceil \log_2 |\tTCzero_{n,\mathrm{base}}| \rceil$
& Target Cert. size
& $|\ACzero_{n,\mathrm{base}}|$
& $\lceil \log_2 |\ACzero_{n,\mathrm{base}}| \rceil$
& Target Cert. size\\
\midrule
2 & 14 & 4 & 3 & 14 & 4 & 3\\
3 & 104 & 7 & 4 & 96 & 7 & 4\\
4 & 1{,}882 & 11 & 5 & 666 & 10 & 5\\
5 & 58{,}732 & 16 & 6 & 4{,}156 & 13 & 6\\
6 & 1{,}416{,}102 & 21 & 7 & 23{,}974 & 15 & 7\\
7 & 25{,}713{,}872 & 25 & 8 & 131{,}480 & 18 & 8\\
8 & 388{,}557{,}490 & 29 & 9 & 698{,}162 & 20 & 9\\
\bottomrule
\end{tabular}
\end{table}

In both experiments, the results show the same overall pattern, with the semantic classes growing quickly while the target selected by halving remains simple. The classes grow from $14$ functions at $n=2$ to more than $698$ thousand functions for $\ACzero_{n,\mathrm{base}}$ and $388$ million functions for $\tTCzero_{n,\mathrm{base}}$ at $n=8$. Across all completed runs, the selected target is $\mathrm{OR}_n(x)=\mathbf{1}[x_1+\cdots+x_n\ge 1]$.
Its certificate size inside the base class is $n+1$ in both experiments.

This behavior comes from the interaction between the base classes and the deterministic halving rule. The first queried input is always $0^n$. At that point, the version space splits evenly, so the tie rule keeps label~$0$. The next decisive inputs are the one-hot vectors
\[
e_1=(0,\dots,0,1),\ e_2=(0,\dots,1,0),\ \dots,\ e_n=(1,0,\dots,0),
\]
and among the surviving hypotheses, the minority label on each of these inputs is~$1$. Therefore, halving keeps the pattern
\[
f(0^n)=0,
\qquad
f(e_i)=1 \ \text{for every } i\in[n].
\]

This description identifies what the halving strategy returns in the completed enumerations. However, it only describes the empirical behavior of the strategy. It does not by itself prove that the resulting sample set is a certificate for $\mathrm{OR}_n$. To establish this, Lemmas~\ref{lem:or-tc0} and \ref{lem:or-ac0} in \Cref{sec:or-certificates} show that the above set is indeed a certificate for $\mathrm{OR}_n$ inside the two base classes.

Once the easy certification cases have been obtained, we turn to their behavior under overparametrization.
For both circuit families, we use the general block-deceiver construction from \Cref{sec:hardness}, which gives a $2^n$ certificate lower bound for every target in the corresponding base class.\footnote{For some specialized target function, deceiver constructions may be written with less overhead, but the constructions used here are a target-independent version.}

In the threshold experiment, the overparametrized class $\tTCzero_{n,\mathrm{over}}$ is obtained from the base class by adding one extra threshold gate. Consequently, all base functions remain present in the enlarged class.

For the hardness result, the extra gate is used as a trigger gate, as in the block-deceiver construction of \Cref{prop:tc0-compat}. Unlike the unbounded case in \Cref{prop:tc0-compat}, however, the base class $\tTCzero_{n,\mathrm{base}}$ uses only unit weights, and a unit-weight trigger is not sufficient to override every possible base circuit.

Thus, $\tTCzero_{n,\mathrm{over}}$ requires the trigger gate to enter the output gate with a larger signed coefficient. Since base scores and thresholds are bounded by $n+1$ in absolute value, a coefficient of magnitude $2n+3>2(n+1)$ can override any base output. Hence, it suffices to allow the circuit weights to lie in $[-(2n+3),2n+3]\cap\mathbb{Z}$.

As in \Cref{prop:tc0-compat}, every base target has one deceiver for each singleton block. Since the $2^n$ singleton blocks are disjoint, the disjoint-disagreement principle from \Cref{prop:block-elim} gives
\[
\operatorname{cert}_{\tTCzero_{n,\mathrm{over}}}(f)\ge 2^n
\]
for every $f\in\tTCzero_{n,\mathrm{base}}$, including $\mathrm{OR}_n$.

For the $\ACzero$ experiment, the overparametrized class $\ACzero_{n,\mathrm{over}}$ is obtained from the base class by adding one pattern-detection gate and one final combining gate, increasing the original depth by one. Nevertheless, all base functions remain present in the enlarged class at the semantic level. Unlike in the threshold case, the deceiver construction here is exactly the $\ACzero$ construction from \Cref{prop:ac0-compat,app:thm:ac0}. Thus, $\ACzero_{n,\mathrm{over}}$ uses only $4$ total gates and has depth $3$. Consequently, every function in $\ACzero_{n,\mathrm{base}}$ has $2^n$ disjoint deceivers, yielding
\[
\operatorname{cert}_{\ACzero_{n,\mathrm{over}}}(f)\ge 2^n
\]
for every $f\in\ACzero_{n,\mathrm{base}}$, including $\mathrm{OR}_n$.

\begin{figure}[ht]
\centering
\begin{minipage}{0.48\linewidth}
\centering
\includegraphics[width=\linewidth]{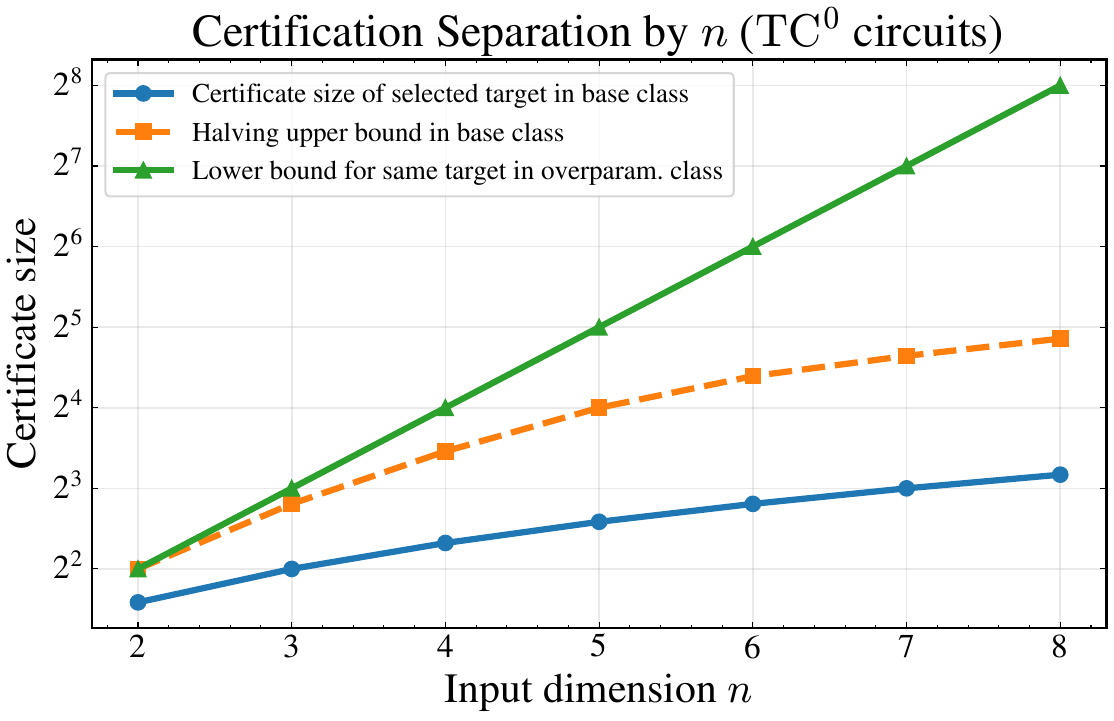}
\end{minipage}
\hfill
\begin{minipage}{0.48\linewidth}
\centering
\includegraphics[width=\linewidth]{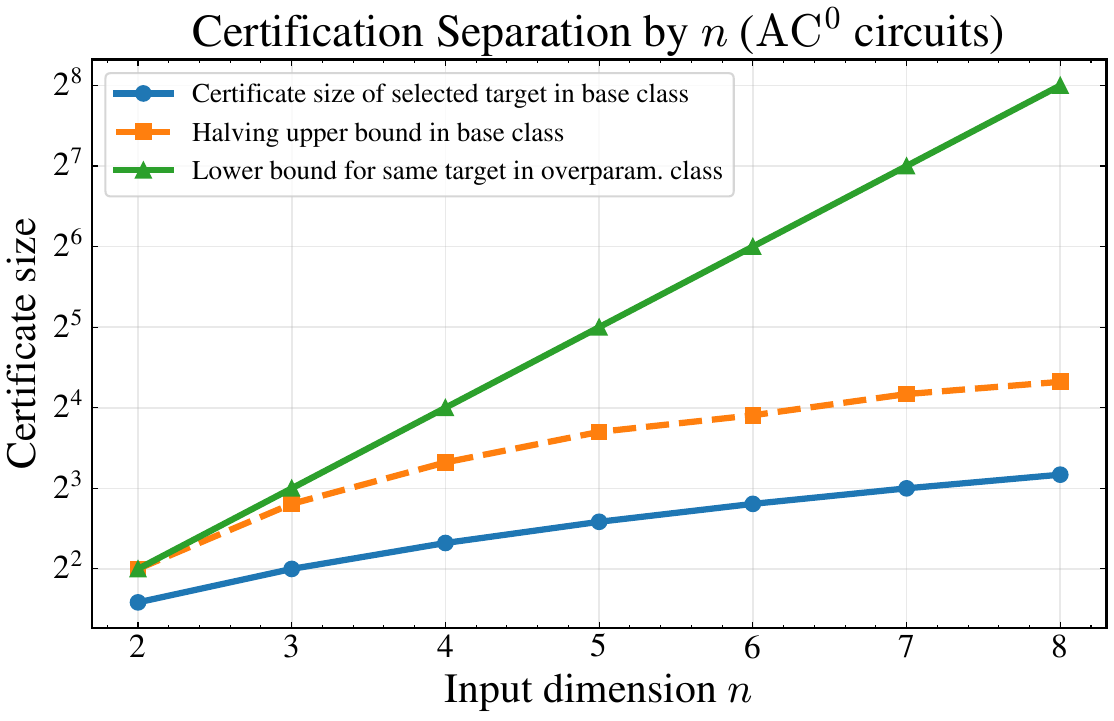}
\end{minipage}
\caption{
Easy-versus-hard certification in the two finite-class experiments. Left: the restricted threshold-circuit experiment. Right: the $\ACzero$ experiment. In each panel, the blue curve is the certificate size of the hypothesis selected by halving inside the base class, and the orange curve is the general halving upper bound for the corresponding semantic class. The green curve is the lower bound for the same selected target inside the target-independent enlarged class, where singleton-block deceivers give a $2^n$ lower bound. In all completed runs, the selected easy target is $\mathrm{OR}_n$.
}
\label{fig:easy-hard-separation}
\end{figure}

We further illustrate the result in \Cref{fig:easy-hard-separation}. The blue curve is the minimum certificate size of the target selected by halving inside the base class, and the orange curve is the generic halving upper bound for the corresponding semantic class. The green curve is the lower bound $2^n$ obtained from the block-deceiver overparametrization. This is a same-target comparison: the blue curve measures the certificate size of $\mathrm{OR}_n$ in the base class, while the green curve lower-bounds the certificate size of the same function in the overparametrized class. Thus, a finite semantic class can contain a very easy target, while a small enlargement that supports block deceivers makes certification of that same target exponentially harder.